\def\eqref#1{equation~\ref{#1}}
\def\Eqref#1{Equation~\ref{#1}}
\def\1{\bm{1}}
\def\eps{{\epsilon}}
\DeclareMathAlphabet{\mathsfit}{\encodingdefault}{\sfdefault}{m}{sl}
\SetMathAlphabet{\mathsfit}{bold}{\encodingdefault}{\sfdefault}{bx}{n}
\newcommand{\E}{\mathbb{E}}
\newcommand{\R}{\mathbb{R}}
\DeclareMathOperator*{\argmin}{arg\,min}
\definecolor{DarkGreen}{rgb}{0.1,0.5,0.1}
\newcommand{\mimic}{\text{MIMIC-III }}
\newcommand{\dpgan}{\text{DP-auto-GAN }} 
\newcommand{\wh}[1]{\widehat{#1}}
\newcommand{\wb}[1]{\overline{#1}}
\DeclarePairedDelimiter\adfloor{\lfloor}{\rfloor}
\DeclarePairedDelimiter\abs{\lvert}{\rvert}
\def\BB{\mathcal{B}}
\def\MM{\mathcal{M}}\def\NN{\mathcal{N}}
\def\XX{\mathcal{X}}
\def\Ebb{\mathbb{E}}
\def\Ibb{\mathbb{I}}
\def\Rbb{\mathbb{R}}
\newtheorem{theorem}{Theorem}
\newtheorem{proposition}[theorem]{Proposition}
\newtheorem{definition}[theorem]{Definition}
\def\M{\mathcal{M}}
\NewDocumentCommand{\LeftComment}{s m}{%
  \Statex \IfBooleanF{#1}{\hspace*{\ALG@thistlm}}\(\triangleright\) #2}
\begin{document}

\title{Differentially Private Synthetic Mixed-Type Data Generation For Unsupervised Learning}


\author{Uthaipon Tao Tantipongpipat\footnotemark[1] \footnotemark[6] \and Chris Waites\footnotemark[2] \footnotemark[6] \and Digvijay Boob\footnotemark[3] \and Amaresh Ankit Siva\footnotemark[4] \and Rachel Cummings\footnotemark[5]}

\renewcommand{\thefootnote}{\fnsymbol{footnote}}

\footnotetext[1]{Email: \texttt{uthaipon@gmail.com}. This work was completed while the author was a student at the Georgia Institute of Technology. Supported in part by NSF grants AF-1910423 and AF-1717947.}
\footnotetext[2]{Stanford University. Email: \texttt{waites@stanford.edu}. This work was completed while the author was a student at the Georgia Institute of Technology. Supported in part by a President's Undergraduate Research Award from the Georgia Institute of Technology.}
\footnotetext[3]{Southern Methodist University. Email: \texttt{dboob@mail.smu.edu}. This work was completed while the author was a student at the Georgia Institute of Technology. Supported in part by NSF grant CCF-1909298.}
\footnotetext[4]{Amazon. Email: \texttt{ankitsiv@amazon.com}.  This work was completed while the author was a student at the Georgia Institute of Technology.}
\footnotetext[5]{Georgia Institute of Technology. Email: \texttt{rachelc@gatech.edu}. Supported in part by a Mozilla Research Grant, a Google Research Fellowship, a JPMorgan Chase Faculty Award, and NSF grant CNS-1850187. Part of this work was completed while the author was visiting the Simons Institute for the Theory of Computing.}
\footnotetext[6]{Equal contribution}


\renewcommand{\thefootnote}{\arabic{footnote}}

\maketitle

\begin{abstract}
We introduce the DP-auto-GAN framework for synthetic data generation, which combines the low dimensional representation of autoencoders with the flexibility of Generative Adversarial Networks (GANs).  This framework can be used to take in raw sensitive data and privately train a model for generating synthetic data that will satisfy similar statistical properties as the original data.  This learned model can generate an arbitrary amount of synthetic data, which can then be freely shared due to the post-processing guarantee of differential privacy.  Our framework is applicable to unlabeled \emph{mixed-type data}, that may include binary, categorical, and real-valued data.  We implement this framework on both  binary data (MIMIC-III) and  mixed-type data (ADULT), and  compare its performance with existing  private algorithms on  metrics in unsupervised settings. We also introduce a new quantitative metric  able to detect diversity, or lack thereof, of synthetic data.
\end{abstract}

\section{Introduction}


As data storage and analysis are becoming more cost effective, and data become more complex and unstructured, there is a growing need for sharing large datasets for research and learning purposes. This is in stark contrast to the previous statistical model where a data curator would hold datasets and answer specific queries from (potentially external) analysts. Sharing entire datasets allows analysts the freedom to perform their analyses in-house with their own devices and toolkits, without having to pre-specify the analyses they wish to perform.  However, datasets are often proprietary or sensitive, and they cannot be shared directly.  This motivates the need for \emph{synthetic data generation}, where a new dataset is created that shares the same statistical properties as the original data.  These data may not be of a single type: all binary, all categorial, or all real-valued; instead they may be of \emph{mixed-types}, containing data of multiple types in a single dataset.  These data may also be unlabeled, requiring techniques for \emph{unsupervised learning}, which is typically a more challenging task than \textit{supervised} learning when data are labeled.

Privacy challenges naturally arise when sharing highly sensitive datasets about individuals. Ad hoc anonymization techniques have repeatedly led to severe privacy violations when sharing ``anonymized'' datasets.  Notable examples include the Netflix Challenge \citep{NS08}, AOL Search Logs \citep{nyt}, and Massachusetts State Health data \citep{Ohm10}, where linkage attacks to publicly available auxiliary datasets were used to reidentify individuals in the dataset. Even deep learning models have been shown to inadvertently memoize sensitive personal information such as Social Security Numbers during training \citep{carlini2019secret}.

Differential privacy (DP) \citep{DMNS06} (formally defined in Section \ref{prelim}) has become the de facto gold standard of privacy in the computer science literature.  Informally, it bounds the extent to which an algorithm depends on a single datapoint in its training set. The guarantee ensures that any differentially privately learned models do not overfit to individuals in the database, and therefore cannot reveal sensitive information about individuals.  It is an information theoretic notion that does not rely on any assumptions of an adversary's computational power or auxiliary knowledge.  Furthermore, it has been shown empirically that training machine learning models with differential privacy protects against membership inference and model inversion attacks  \citep{triastcyn2018generating,carlini2019secret}. Differentially private algorithms have been deployed at large scale in practice by organizations such as Apple, Google, Microsoft, Uber, and the U.S. Census Bureau.




Much of the prior work on differentially private synthetic data generation has been either theoretical algorithms for highly structured classes of queries \citep{BLR08,HR10} or based on deep generative models such as Generative Adversarial Networks (GANs) or autoencoders.  These architectures have been primarily designed for either all-binary or all-real-valued datasets, and have focused on the \emph{supervised} setting.

In this work we introduce the \emph{DP-auto-GAN} framework, which combines the low dimensional representation of autoencoders with the flexibility of GANs.  This framework can be used to take in raw sensitive data, and privately train a model for generating synthetic data that satisfies similar statistical properties as the original data.  This learned model can be used to generate arbitrary amounts of publicly available synthetic data, which can then be freely shared due to the post-processing guarantees of differential privacy.  We implement this framework on both unlabeled binary data (for comparison with previous work) and unlabeled mixed-type data.  We also introduce new metrics for evaluating the quality of synthetic mixed-type data in unsupervised settings, and empirically evaluate the performance of our algorithm according to these metrics on two datasets.

\subsection{Our Contributions}

 
 In this work, we provide two main contributions: a new algorithmic framework for privately generating synthetic data, and empirical evaluations of our algorithmic framework showing improvements over prior work. Along the way, we also develop a novel privacy composition method with tighter guarantees, and we generalize previous metrics for evaluating the quality of synthetic datasets to the unsupervised mixed-type data setting.  Both of these contributions may be of independent interest.


\textbf{Algorithmic Framework.}
We propose a new data generation architecture which combines the versatility of an autoencoder \citep{kingma2013auto} with the recent success of GANs on complex data. Our model extends previous autoencoder-based DP data generation \citep{abay2018privacy,chen2018differentially} by removing an assumption that the distribution of the latent space follows a Gaussian mixture distribution. Instead, we incorporate GANs into the autoencoder framework so that the generator must learn the true latent distribution against the discriminator. We describe the composition analysis of differential privacy when the training consists of optimizing both autoencoders and GANs (with different noise parameters). 


\textbf{Empirical Results.}
We empirically evaluate the performance of our algorithmic framework on the MIMIC-III medical dataset \citep{johnson2016mimic} and UCI ADULT Census dataset \citep{uci_ml}, and compare against previous approaches in the literature \citep{frigerio2019differentially,xie2018differentially,abay2018privacy,acs2018differentially}.  Our experiments show that our algorithms perform better and obtain substantially improved $\eps$ values of \(\eps \approx 1\), compared to  \(\eps \approx 200\) in prior work \citep{xie2018differentially}. The performance of our algorithm remains high along a variety of quantitative and qualitative metrics, even for small values of  \(\eps\), corresponding to strong privacy guarantees. Our code is publicly available for future use and research.


\subsection{Related Work on Differentially Private Data Generation} \label{sec:related-work}

%
Early work on differentially private synthetic data generation was focused primarily on theoretical algorithms for solving the \emph{query release problem} of privately and accurately answering a large class of pre-specified queries on a given database. It was discovered that generating synthetic data on which the queries could be evaluated allowed for better privacy composition than simply answering all the queries directly \citep{BLR08,HR10,HLM12,GGHRW14}. Bayesian inference has also been used for differentially private data generation \citep{zhang2017privbayes,ping2017datasynthesizer} by estimating the correlation between features. See \cite{surendra2017review} for a survey of techniques used in private synthetic data generation.

More recently, \cite{abadi2016deep} introduced a framework for training deep learning models with differential privacy, which involved adding Gaussian noise to a clipped (norm-bounded) gradient in each training step. \cite{abadi2016deep} also introduced the \emph{moment accountant} privacy analysis, which provided a tighter Gaussian-based privacy composition and allowed for significant improvements in accuracy. It was later defined in terms of \emph{Renyi Differential Privacy (RDP)} \citep{mironov2017renyi}, which is a slight variant of differential privacy designed for easy composition. Much of the work that followed used deep generative models, and can be broadly categorized into two types: autoencoder-based and GAN-based. Our algorithmic framework is the first to combine both DP GANs and autoencoders.

\textbf{Differentially Private Autoencoder-Based Models.} A variational autoencoder (VaE) \citep{kingma2013auto} is a generative model that compresses high-dimensional data to a smaller space called \textit{latent space}. The compression is commonly achieved through deep models and can be differentially privately trained \citep{chen2018differentially,acs2018differentially}. VaE makes the (often unrealistic) assumption that the \textit{latent distribution} is Gaussian. \citet{acs2018differentially} uses Restricted Boltzmann machine (RBM) to learn the latent Gaussian distribution, and \citet{abay2018privacy} uses expectation maximization to learn a Gaussian mixture. Our work extends this line of work by additionally incorporating the generative model GANs which have also been shown to be successful in learning latent distributions. 

\textbf{Differentially Private GANs.}
GANs are generative models proposed by \citet{GAN14} that have been shown success in generating several different types of data \citep{mogren2016c,saito2017temporal,salimans2016improved,jang2016categorical,kusner2016gans,wang2018graphgan,xu2019modeling,lim2018doping,park2018data}.
As with other deep models, GANs can be trained privately using the aforementioned private stochastic gradient descent (formally introduced in Section \ref{sec:dp-sgd}). In this work, we focus on and compare to previous works where DP have been applied.   

Variants of DP GANs have been used for synthetic data generation, including the Wasserstein GAN (WGAN) \citep{arjovsky2017wasserstein,Gulrajani17} and DP-WGAN \citep{uclanesl_dp_wgan,triastcyn2018generating} that use a Wasserstein-distance-based loss function in training \citep{arjovsky2017wasserstein,Gulrajani17,uclanesl_dp_wgan,triastcyn2018generating}; the conditional GAN (CGAN) \citep{mirza2014conditional} and DP-CGAN \citep{torkzadehmahani2019dp} that operate in a supervised (labeled) setting and use labels as auxiliary information in training; and Private Aggregation of Teacher Ensembles (PATE) \citep{papernot2016semi,papernot2018scalable} for the semi-supervised setting of multi-label classification when some unlabelled public data are available (or PATEGAN \citep{jordon2018pate} when no public data are available).  Our work focuses on the unsupervised setting where data are unlabeled, and no (relevant) labeled public data are available.

These existing works on differentially private synthetic data generation are summarized in Table \ref{tab:data-gen}.

\begin{table}[h]
\caption{Algorithmic frameworks for differentially private synthetic data generation. Our new algorithmic framework (in \textbf{bold}) is the first to combine both DP GANs and autoencoders into one framework by using GANs to learn a generative model in the latent space.}
\label{tab:data-gen}
\begin{center}
\begin{tabular}{|p{1.5cm}|p{2cm}|p{6.5cm}|}
\hline
\multirow{2}{*}{\bf \ Types}  &\multicolumn{2}{c|}{\bf Algorithmic framework} \\ \cline{2-3}
& \multicolumn{1}{c|}{\textbf{Architecture}} & \multicolumn{1}{c|}{\textbf{Variants}}
\\ \hline
\multirow{8}{\linewidth}{Deep generative models}         &\multirow{4}{\linewidth}{DPGAN \citep{abadi2016deep} } & PATEGAN \citep{jordon2018pate} \\
& & DP Wasserstein GAN \citep{uclanesl_dp_wgan}\\
& & DP Conditional GAN \citep{torkzadehmahani2019dp} \\
& & Gumbel-softmax for categorical data \citep{frigerio2019differentially} \\ \cline{2-3}
& \multirow{3}{\linewidth}{Autoencoder}    & DP-VAE (standard Gaussian as a generative model in latent space) \citep{chen2018differentially,acs2018differentially} \\
& &                RBM generative models in latent space \citep{acs2018differentially} \\
& &                Mixture of Gaussian model in latent space \citep{abay2018privacy} 
\\ \cline{2-3}
& \multicolumn{2}{l|}{Autoencoder and DPGAN \textbf{(ours)}}
\\ \hline
Other  models & \multicolumn{2}{p{8.5cm}|}{SmallDB \citep{BLR08}, PMW \citep{HR10}, MWEM \cite{HLM12}, DualQuery \cite{GGHRW14}, DataSynthesizer \citep{ping2017datasynthesizer}, PrivBayes \citep{zhang2017privbayes}}\\ \hline
\end{tabular}
\end{center}
\end{table}

\paragraph{Differentially Private Generation of Mixed-Type Data.} 
Next we describe the three most relevant recent works on privately generating synthetic mixed-type data. For a full overview of related work, see Appendix \ref{app.rel}. \cite{abay2018privacy} considers the problem of generating mixed-type labeled data with $k$ possible labels. Their algorithm, DP-SYN, partitions the dataset into $k$ sets based on the labels and trains a DP autoencoder on each partition. Then the DP expectation maximization (DP-EM) algorithm of \cite{park17EM} is used to learn the distribution in the latent space of encoded data of the given label-class. 
The main workhorse, DM-EM algorithm, is designed and analyzed for Gaussian mixture models and more general factor analysis models. \cite{chen2018differentially} works in the same setting, but replaces the DP autoencoder and DP-EM with a DP variational autoencoder (DP-VAE). Their algorithm assumes that the mapping from real data to the Gaussian distribution can be efficiently learned by the encoder.
Finally, \cite{frigerio2019differentially} uses a Wasserstein GAN (WGAN)  to generate differentially private mixed-type synthetic data, which uses a Wasserstein-distance-based loss function in training. Their algorithmic framework privatizes the WGAN using DP-SGD, similar to previous approaches for image datasets \citep{zhang2018differentially,xie2018differentially}. The methodology of \cite{frigerio2019differentially} for generating mixed-type synthetic data involves two main ingredients: changing discrete (categorical) data to binary data using one-hot encoding, and adding an output softmax layer to the WGAN generator for every discrete variable.

Our framework is distinct from these three approaches. We use a differentially private autoencoder which, unlike DP-VAE of \cite{chen2018differentially}, does not require mapping data to a Gaussian distribution. This allows us to reduce the dimension of the problem handled by the WGAN, hence escaping the issues of high-dimensionality from the one-hot encoding of \cite{frigerio2019differentially}. We also use DP-GAN, replacing DP-EM in \cite{abay2018privacy}, to  learn more complex distributions in the latent or encoded space.

\textbf{NIST Differential Privacy Synthetic Data Challenge.} The National Institute of Standards and Technology (NIST) recently hosted a challenge to find methods for privately generating synthetic mixed-type data \cite{NIST2018Match3}, using excerpts from the Integrated Public Use Microdata Sample (IPUMS) of the 1940 U.S. Census Data as training and test datasets.  Four of the winning solutions have been made publicly available with open-source code \cite{nistcode}.  However, all of these approaches are highly tailored to the specific datasets and evaluation metrics used in the challenge, including specialized data pre-processing methods and hard-coding details of the dataset in the algorithm.  As a result, they do not provide general-purpose methods for differentially private synthetic data generation, and it would be inappropriate--if not impossible--to use any of these algorithms as baseline for other datasets such as ones we consider in this paper.


\paragraph{Evaluation Metrics for Synthetic Data.} Various evaluation metrics have been considered in the literature to quantify the quality of the synthetic data (see \citet{charest2011can} for a survey). The metrics can be broadly categorized into two groups: \emph{supervised} and \emph{unsupervised}.  Supervised evaluation metrics are used when there are clear distinctions between features and labels of the dataset, e.g., for healthcare applications, a person's disease status is a natural label. In these settings, a predictive model is typically trained on the synthetic data, and its accuracy is measured with respect to the real (test) dataset. Unsupervised evaluation metrics are used when no feature of the data can be decisively termed as a label. Recently proposed metrics include \emph{dimension-wise probability} for binary data \citep{choi2017generating}, which compares the marginal distribution of real and synthetic data on each individual feature, and \emph{dimension-wise prediction} which measures how closely synthetic data captures relationships between features in the real data. This metric was proposed for binary data, and we extend it here to mixed-type data. Recently, \citet{NIST2018Match3} used a 3-way marginal evaluation metric which used three random features of the real and synthetic datasets to compute the total variation distance as a statistical score. See Appendix \ref{app.met} for more details on both categories of metrics, including Table \ref{tab:metric} which summarizes the metrics' applicability to various data types.

\section{Preliminaries on Differential Privacy} \label{prelim}

In the setting of differential privacy, a dataset \(X\) consists of \(m\) individuals' sensitive information, and two datasets are neighbors if one can be obtained from the other by the addition or deletion of one datapoint. Differential privacy requires that an algorithm produce similar outputs on neighboring datasets, thus ensuring that the output does not overfit to its input dataset, and that the algorithm learns from the population but not from the individuals.

\begin{definition}[Differential privacy \citep{DMNS06}]
For $\eps,\delta>0$, an algorithm $\mathcal{M}$ is  \emph{$(\eps, \delta)$-differentially private} if for any pair of neighboring databases $X,X'$ and any subset $S$ of possible outputs produced by \(\M\), 
$$ \Pr[\mathcal{M}(X) \in S] \leq e^\eps \cdot \Pr[\mathcal{M}(X') \in S] + \delta .$$
\end{definition}

A smaller value of $\epsilon$ implies stronger privacy guarantees (as the constraint above binds more tightly), but usually corresponds with decreased accuracy, relative to non-private algorithms or the same algorithm run with a larger value of $\epsilon$. Differential privacy is typically achieved by adding random noise that scales with the \emph{sensitivity} of the computation being performed, which is the maximum change in the output value that can be caused by changing a single entry.  Differential privacy has strong \emph{composition guarantees}, meaning that the privacy parameters degrade gracefully as additional algorithms are run on the same dataset.  It also has a \emph{post-processing} guarantee, meaning that any function of a differentially private output will retain the same privacy guarantees.

%
%
%

\subsection{Differentially Private Stochastic Gradient Descent (DP-SGD)} \label{sec:dp-sgd}




Training deep learning models reduces to minimizing some (empirical) loss function \(f(X;\theta):=\frac 1m\sum_{i=1}^m f(x_i;\theta)\) on a dataset \(X=\{x_i\in\R^n\}_{i=1}^m\). Typically \(f\) is a nonconvex function, and a common method to minimize \(f\) is by iteratively performing stochastic gradient descent (SGD) on a batch $B$ of sampled data points:
\begin{align}
B &\leftarrow \text{\textsc{BatchSample}}(X) \nonumber \\
\theta &\leftarrow \theta - \eta\cdot \tfrac {1}{|B|}\textstyle\sum_{i\in B} \nabla_\theta f(x_i,\theta) \label{eq:descent}
\end{align}
The size of \(B\) is typically fixed as a moderate number to ensure quick computation of gradient, while maintaining that \(\frac {1}{|B|}\sum_{i\in B} \nabla f(x_i,\theta)\) is a good estimate of true gradient \(\nabla_\theta f(X;\theta)\).



To make SGD private, \citet{abadi2016deep} proposed to first clip the gradient of each sample to ensure the \(\ell_2\)-norm is at most \(C\): 
\begin{equation*}
\text{\textsc{Clip}}(x,C):=x \cdot \min\left(1,C/{||x||_2}\right).
\end{equation*}
Then a multivariate Gaussian noise parametrized by noise multiplier \(\psi\) is added before taking an average across the batch, leading to noisy-clipped-averaged gradient estimate $g$:
\begin{equation*}
g \leftarrow\textstyle\frac{1}{|B|}\left(\sum_{i\in B}\text{\textsc{Clip}}( \nabla_\theta f(x_i,\theta),C) 
+\NN(0,C^2\psi^2I)\right).  
\end{equation*}
The quantity \(g\) is now private and can be used for the descent step \(\theta \leftarrow \theta-\eta \cdot g\) in place of \Eqref{eq:descent}.

\paragraph{Performance Improvements.} In general, the descent step can be performed using other optimization methods---such as Adam or RMSProp---in a private manner, by replacing the gradient value with \(g\) in each step. Also, one does not need to clip the individual gradients, but can instead clip the gradient of a group of datapoints, called a \emph{microbatch} \citep{mcmahan2018general}. Mathematically, the batch \(B\) is partitioned into microbatches \(B_1,\ldots,B_k\) each of size \(r\), and the gradient clipping is performed on the average of each microbatch: 
\begin{equation*}
g \leftarrow\frac {1}{k}\left(\textstyle\sum_{i=1 }^k \text{\textsc{Clip}}(\nabla_\theta f(X_{B_i},\theta) 
,C)+\NN(0,C^2\psi^2I)\right).  
\end{equation*}
Standard DP-SGD corresponds to setting \(r=1\), but setting higher values of \(r\) (while holding \(|B|\) fixed) significantly decreases the runtime and reduces the accuracy, and does not impact privacy significantly for large dataset. Other clipping strategies have also been suggested. We refer the interested reader to \cite{mcmahan2018general}  for more details of clipping and other optimization strategies.

The improved moment accountant privacy analysis by \cite{abadi2016deep} (which has been implemented in \citet{TensorflowPrivacy} and is widely used in practice) obtains a tighter privacy bound when data are subsampled, as in SGD. This analysis requires independently sampling each datapoint with a fixed probability \(q\) in each step.  
 
The DP-SGD framework (Algorithm \ref{alg:dp-sgd}) is generically applicable to private non-convex optimization. In our proposed model, we use this framework to train the autoencoder and GAN.

\begin{algorithm}
    \caption{\textsc{DP-SGD} (one iteration step)}
    \label{alg:dp-sgd}
    \begin{algorithmic}[1] 
        \State \textbf{parameter input}:
Dataset \(X=\{x_i\}_{i=1}^m\), deep learning model parameter \(\theta\), learning rate \(\eta\), loss function \(f\), optimization method \textsc{Optim}, batch sampling rate \(q\) (for the batch expectation size \(b=qm\)), clipping norm \(C\), noise multiplier \(\psi\), microbatch size \(r\)
\State \textbf{goal}: differentially privately  train one step of the model parametrized by \(\theta\) with \textsc{Optim}    
\Procedure{DP-SGD}{}
        \Procedure{SampleBatch}{$X,q$}
        \State \(\BB\leftarrow \{\}\)
               \For{\(i=1\ldots m\)}
               \State Add \(x_i\) to \(\BB\) with probability \(q\)
               \EndFor
               \State Return $\BB$
        \EndProcedure
                \State Partition \(\BB\) into \(B_1,\ldots,B_k\) each of size \(r\) (ignoring the dividend)
                \State \(\hat k \leftarrow \frac{qm}{r}\) \Comment{ an estimate of \(k\)}
        \State \(g \leftarrow\frac {1}{\hat{k}}\left(\sum_{i=1 }^k \text{\textsc{Clip}}(\nabla_\theta f(X_{B_i},\theta) 
,C)+\NN(0,C^2\psi^2I)\right)\)
        \State \(\theta \leftarrow \text{\textsc{Optim}}_{}(\theta,g,\eta)\)
\EndProcedure
\end{algorithmic}
\end{algorithm}

\subsection{Renyi Differential Privacy Accountant}\label{sec:rdp}

A variant notion of differential privacy, known as \emph{Renyi Differential Privacy (RDP)} \citep{mironov2017renyi}, is often used to analyze privacy for DP-SGD. A randomized mechanism \(\MM\) is \((\alpha,\eps)\)-RDP if for all neighboring databases \(X,X'\) that differ in at most one entry,
\begin{equation*}
RDP(\alpha) := D_\alpha(\MM(X)||\MM(X'))\leq \eps,
\end{equation*}
where \(D_\alpha(P||Q):=\frac{1}{\alpha-1} \log \E_{x\sim X} \left( \frac{P(x)}{Q(x)} \right)^\alpha\) is the \emph{Renyi divergence} or \emph{Renyi entropy} of order \(\alpha\) between two distributions \(P\) and \(Q\). Renyi divergence is better tailored to tightly capture the privacy loss from the Gaussian mechanism that is used in DG-SGD, and is a common analysis tool for DP-SGD literature. To compute the final \((\eps,\delta)\)-differential privacy parameters from iterative runs of DP-SGD, one must first compute the subsampled Renyi Divergence, then compose privacy under RDP, and then convert the RDP guarantee into DP.  

\textbf{Step 1: Subsampled Renyi Divergence.} Given sampling rate \(q\) and noise multiplier \(\psi\), one can obtain RDP privacy parameters as a function of \(\alpha\geq 1\) for one run of DP-SGD \citep{mironov2017renyi}. We denote this function by
\(\text{RDP}_{T=1}(\cdot)\), which will depend on $q$ and $\psi$.

\textbf{Step 2: Composition of RDP.}
When DP-SGD is run iteratively, we can compose the Renyi privacy parameter across all runs using the following proposition.
\begin{proposition}[\citep{mironov2017renyi}] \label{prop:rdp-compose}
If \(\MM_1,\MM_2\) respectively satisfy \((\alpha,\eps_1),(\alpha,\eps_2)\)-RDP for  \(\alpha\geq 1\), then the composition of two mechanisms \((\MM_2(X), \MM_1(X))\) satisfies \((\alpha,\eps_1+\eps_2)\)-RDP. 
\end{proposition}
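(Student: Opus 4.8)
The plan is to reduce the claim directly to the tensorization (additivity) property of the Rényi divergence under composition of the two output channels. Fix an arbitrary pair of neighboring databases $X, X'$; by the definition of RDP it suffices to show that the joint outputs $P := (\MM_2(X), \MM_1(X))$ and $Q := (\MM_2(X'), \MM_1(X'))$ satisfy $D_\alpha(P \| Q) \leq \eps_1 + \eps_2$. First I would write the joint density as a chain, $P(y_1, y_2) = P_1(y_1)\, P_2(y_2 \mid y_1)$ and $Q(y_1, y_2) = Q_1(y_1)\, Q_2(y_2 \mid y_1)$, where $P_1, Q_1$ are the marginal laws of $\MM_1$ on $X, X'$ and $P_2(\cdot \mid y_1), Q_2(\cdot \mid y_1)$ are the (possibly adaptive) conditional laws of $\MM_2$. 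In the non-adaptive case stated in the proposition these conditionals do not depend on $y_1$, the joint laws are products, and the argument below specializes immediately.

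Next I would expand the $\alpha$-moment appearing inside $D_\alpha$ and integrate over the second coordinate first (Fubini). Writing $\exp((\alpha-1) D_\alpha(P\|Q)) = \E_{(y_1,y_2)\sim Q}\big[(P(y_1,y_2)/Q(y_1,y_2))^\alpha\big]$, the chain factorization lets the integrand split as $(P_1(y_1)/Q_1(y_1))^\alpha$ times $(P_2(y_2 \mid y_1)/Q_2(y_2 \mid y_1))^\alpha$. Integrating the second factor against $Q_2(\cdot \mid y_1)$ yields exactly $\exp((\alpha-1) D_\alpha(P_2(\cdot \mid y_1)\| Q_2(\cdot \mid y_1)))$, which by the RDP guarantee of $\MM_2$ is at most $\exp((\alpha-1)\eps_2)$ for every fixed $y_1$. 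Pulling this uniform bound out of the outer integral leaves $\exp((\alpha-1)\eps_2)\cdot \E_{y_1 \sim Q_1}[(P_1(y_1)/Q_1(y_1))^\alpha] = \exp((\alpha-1)\eps_2)\exp((\alpha-1) D_\alpha(P_1\|Q_1)) \le \exp((\alpha-1)(\eps_1+\eps_2))$. Taking $\tfrac{1}{\alpha-1}\log$ of both sides recovers $D_\alpha(P\|Q) \le \eps_1 + \eps_2$, which is the claim.

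The main obstacle is the step where the inner integral is bounded: we need the RDP guarantee for $\MM_2$ to hold uniformly over every realization $y_1$ of the first output, not merely on average, so that the factor $\exp((\alpha-1)\eps_2)$ can be extracted from the outer expectation \emph{before} the logarithm is applied. This is exactly where the worst-case-over-neighbors nature of the RDP definition (applied here to the conditional channel) is essential, and it is the point at which an analogous argument would break for a privacy notion defined only in expectation. A secondary point requiring care is the sign of $\alpha - 1$: for $\alpha > 1$ it is positive, so monotonicity of $\log$ and of $t \mapsto t^{1/(\alpha-1)}$ preserves the inequality direction throughout; the boundary case $\alpha = 1$, where $D_\alpha$ degenerates to the KL divergence, should be handled separately as a limit, in which the multiplicative bounds above collapse to the additive statement $\KL(P\|Q) \le \eps_1 + \eps_2$.
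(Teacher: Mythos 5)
Your proof is correct, and it is essentially the standard argument for this result: the paper itself offers no proof of Proposition~\ref{prop:rdp-compose} (it is imported from \citet{mironov2017renyi}), and your chain-factorization argument --- splitting the joint $\alpha$-moment, integrating out the second coordinate, bounding the conditional R\'enyi moment uniformly in $y_1$, and then applying the guarantee for $\MM_1$ --- is precisely the proof given in that reference. Your added remarks on why the uniform (worst-case) bound on the conditional channel is needed, and on treating $\alpha=1$ as the KL limit, are accurate and address the only points where care is required.
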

Hence, we can compute RDP privacy parameters for \(T\) iterations of DP-SGD as $ \text{\textsc{RDP-Account}}(T,q_,\psi):=T\cdot \text{RDP}_{T=1}(\cdot)$.

\textbf{Step 3: Conversion to \((\eps,\delta)\)-DP.}
After obtaining an expression for the overall RDP privacy parameter values, any \((\alpha,\eps)\)-RDP guarantee can be converted into  \((\eps,\delta)\)-DP.
\begin{proposition}[\citep{mironov2017renyi}] \label{prop:rdp-dp}
If \(\MM \) satisfies \((\alpha,\eps)\)-RDP for  \(\alpha > 1\), then for all \(\delta>0\), \(\MM\) satisfies \((\eps+\frac{\log 1/\delta}{\alpha-1},\delta)\)-DP.
\end{proposition}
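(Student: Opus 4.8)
The plan is to unwind both definitions and reduce the claim to a single pointwise inequality between probabilities. Fix neighboring databases $X, X'$ and write $P := \MM(X)$, $Q := \MM(X')$ for the two output distributions, with densities $p,q$ against a common dominating measure $\mu$. The hypothesis that $\MM$ is $(\alpha,\eps)$-RDP says exactly that $D_\alpha(P\|Q)\le\eps$, which after clearing the logarithm in the definition of Rényi divergence (the expectation taken over the reference distribution $Q$) becomes the moment bound $\E_{x\sim Q}\bigl[(p(x)/q(x))^\alpha\bigr]\le e^{(\alpha-1)\eps}$. The goal is to show that for every measurable output set $S$ we have $P(S)\le e^{\eps'}Q(S)+\delta$ with $\eps'=\eps+\frac{\log(1/\delta)}{\alpha-1}$, which is precisely the definition of $(\eps',\delta)$-DP.

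First I would split $S$ according to the size of the likelihood ratio $L(x):=p(x)/q(x)$, using the threshold $t:=e^{\eps'}$. On the bulk region $S\cap\{L\le t\}$ the density $p$ is dominated pointwise by $t\,q$, so this part contributes at most $t\int_S q\,d\mu = e^{\eps'}Q(S)$, which supplies the multiplicative term of the DP inequality.

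The heart of the argument is bounding the tail region $S\cap\{L>t\}$ by $\delta$. Here I would write its $P$-mass as $\int_{\{L>t\}} L\,dQ=\E_{x\sim Q}[L\,\1_{\{L>t\}}]$ and apply a Chernoff/Markov-type estimate that exploits $\alpha>1$: on the event $\{L>t\}$ we have $(L/t)^{\alpha-1}\ge 1$, so $L\,\1_{\{L>t\}}\le t^{-(\alpha-1)}L^{\alpha}$. Taking expectations and invoking the RDP moment bound yields a tail mass of at most $t^{-(\alpha-1)}e^{(\alpha-1)\eps}=(e^{\eps}/t)^{\alpha-1}$. Substituting $t=e^{\eps'}$ makes this exactly $e^{(\eps-\eps')(\alpha-1)}=\delta$, by the choice of $\eps'$, which closes the bound $P(S)\le e^{\eps'}Q(S)+\delta$.

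I expect the main obstacle to be the tail estimate: getting the correct power of the threshold so that the single moment bound coming from RDP is strong enough, and choosing $t$ to balance the two regions so that the leftover tail is precisely $\delta$. A secondary point to handle with care is the measure-theoretic setup (densities against a common dominating measure, and the points where $q$ vanishes), but once the threshold split and the Chernoff-type inequality are in place the remainder is routine algebra. An alternative route would replace the threshold split by a direct application of Hölder's inequality with conjugate exponents $\alpha$ and $\alpha/(\alpha-1)$, giving $P(S)\le e^{\eps(\alpha-1)/\alpha}Q(S)^{(\alpha-1)/\alpha}$; however, converting that power-law estimate into the additive $(\eps',\delta)$ form is less direct, so I would favor the threshold argument.
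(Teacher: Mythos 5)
Your proof is correct, but note that the paper itself offers no proof of this proposition: it is quoted directly from \citet{mironov2017renyi}, so the only meaningful comparison is with Mironov's original argument. Your route is genuinely different from that one. Mironov first derives the ``probability preservation'' inequality $P(S)\le\bigl(e^{\eps}Q(S)\bigr)^{(\alpha-1)/\alpha}$ via H\"older (exactly the alternative you sketch and set aside), and then converts this power-law bound into the additive $(\eps',\delta)$ form by a case analysis: if $P(S)\le\delta$ the DP inequality is trivial, and otherwise $P(S)>\delta$ forces $Q(S)$ to be large enough that $\bigl(e^{\eps}Q(S)\bigr)^{(\alpha-1)/\alpha}\le e^{\eps'}Q(S)$ with $\eps'=\eps+\frac{\log(1/\delta)}{\alpha-1}$ --- so the conversion you judged ``less direct'' is in fact carried out there, at the cost of this extra case split. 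Your threshold-plus-Markov argument instead decomposes $S$ by the likelihood ratio $L=p/q$ at $t=e^{\eps'}$, bounds the bulk by $e^{\eps'}Q(S)$ pointwise, and bounds the tail mass $\E_Q[L\,\1_{\{L>t\}}]\le t^{-(\alpha-1)}\E_Q[L^\alpha]\le e^{(\alpha-1)(\eps-\eps')}=\delta$; all steps check out (and finiteness of $\eps$ in the RDP hypothesis gives $P\ll Q$, disposing of the measure-theoretic caveat you raise). What each buys: your argument is self-contained, makes $\delta$ appear transparently as exactly the tail probability of the privacy-loss ratio, and avoids any case analysis; Mironov's H\"older route yields the probability-preservation inequality as a reusable intermediate, which is why it is the one typically cited, and later refinements of this conversion (with strictly better constants) build on that same form.
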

Since the \(\eps\) privacy parameter of RDP is also a function of \(\alpha\), this last step involves optimizing for the \(\alpha\) that achieves smallest privacy parameter in Proposition \ref{prop:rdp-dp}.



\section{Algorithmic Framework}\label{s.algo}

The overview of our algorithmic framework DP-auto-GAN is shown in Figure \ref{fig:alg-frame}, and the full details are given in Algorithm \ref{alg:all}. Details of subroutines  in Algorithm \ref{alg:all}  can be found in Appendix \ref{app.algodetail}. 

The algorithm takes in $m$ raw data points, and \emph{pre-processes} these points into $m$ vectors $x_1,\ldots,x_m \in \R^n$ to be read by DP-auto-GAN, where usually $n$ is very large. For example, categorical data may be pre-processed using one-hot encoding, or text may be converted into high-dimensional vectors.  Similarly, the output of DP-auto-GAN can be \emph{post-processed }from $\R^n$ back to the data's original form.  We assume that this pre- and post-processing can done based on public knowledge, such as possible categories for qualitative features and reasonable bounds on quantitative features, and therefore do not incur a privacy cost.

\begin{figure}[h]
\begin{center}
\includegraphics[width=\textwidth]{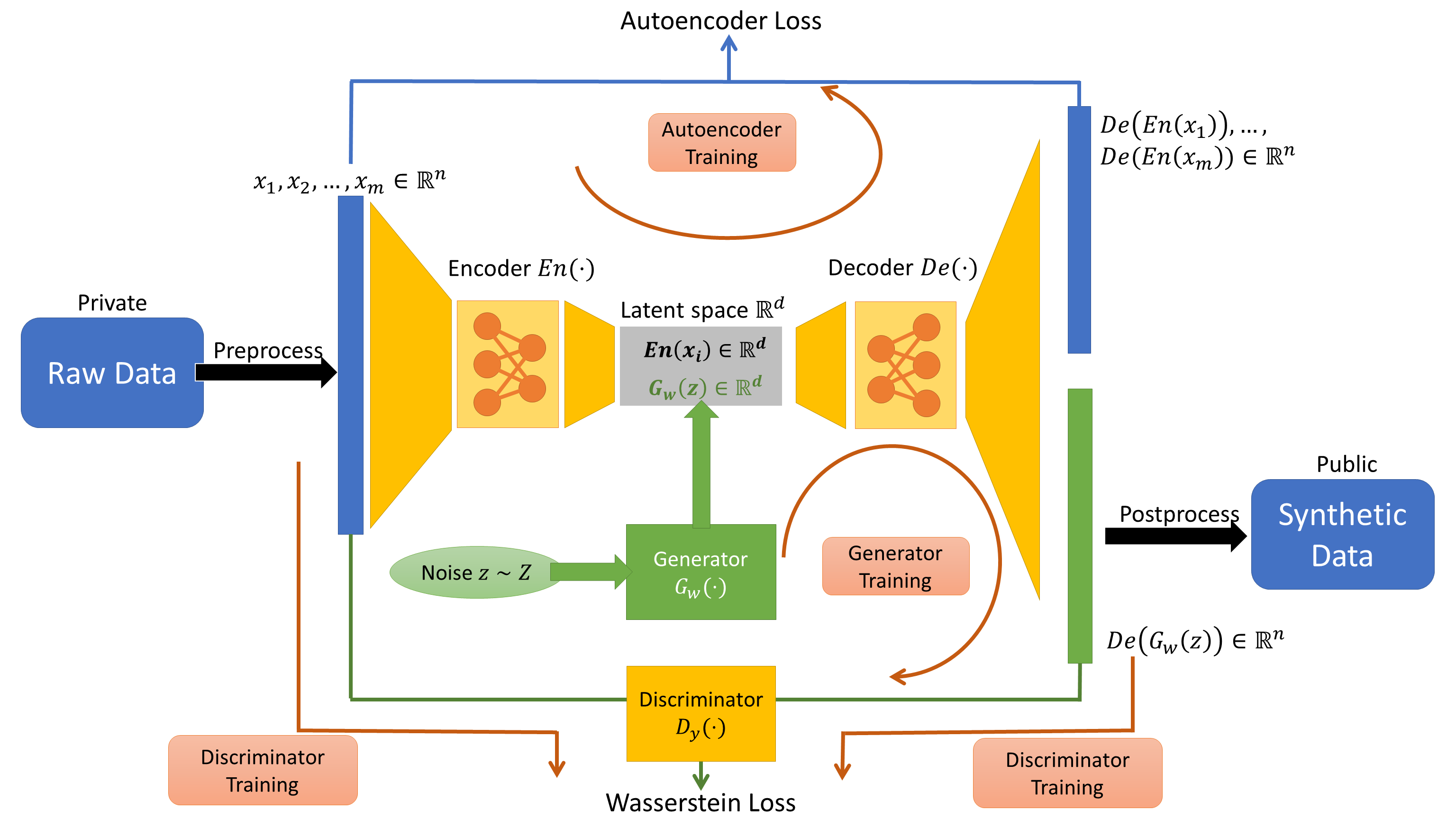}
\end{center}
\caption{The summary of our DP-auto-GAN algorithmic framework. Pre- and post-processing (in black) are assumed to be public knowledge. Generator (in green) is trained without noise, whereas encoder, decoder, and discriminator (in yellow) are trained with noise. The four red arrows indicate how data are forwarded for autoencoder, generator, and discriminator training. After training, the generator and decoder are released to the public to generate synthetic data. 
}
\label{fig:alg-frame}
\end{figure}

Within the DP-auto-GAN, there are two main components: the \emph{autoencoder} and the GAN.  The autoencoder serves to reduce the dimension of the data to $d\ll n$ before it is fed into the GAN. The GAN consists of a \emph{generator} that takes in noise $z$ sampled from a distribution $Z$ and produces $G_w(z)\in\R^d$, and a \emph{discriminator} $D_y(\cdot):\R^n\rightarrow \{0,1\}$.   Because of the autoencoder, the generator only needs to synthesize data based on the latent distribution in $\R^d$, which is much easier than synthesizing in $\R^n$. Both components of our architecture, as well as our algorithm's overall privacy guarantee, are described in the remainder of this section.



\begin{algorithm}
    \caption{\textsc{DPautoGAN} (full procedure)}
    \label{alg:all}
    \begin{algorithmic}[1] 
\State \textbf{architecture input:} Sensitive dataset \(D\in\XX^m\) where \(\XX\) is the (raw) data universe, preprocessed data dimension \(n\), latent space dimension \(d\), preprocessing function \(Pre:\XX\rightarrow \R^n\), post-processing function \(Post:\R^n\rightarrow \XX\), encoder architecture \(En_\phi:\R^n\rightarrow\R^d\) parameterized by \(\phi\), decoder architecture \(De_\theta:\R^d\rightarrow\R^n\) parameterized by \(\theta\), generator's noise distribution \(Z\) on sample space \(\Omega(Z)\), generator architecture \(G_w:\Omega( Z)\rightarrow \R^d\) parameterized by \(w\), discriminator architecture \(D_y:\R^n\rightarrow\{0,1\}\) parameterized by $y$. 
\State \textbf{autoencoder training parameters}: Learning rate \(\eta_1\), number of iteration rounds (or optimization steps) \(T_1\), loss function \(L_\text{auto}\), optimization method \textsc{Optim}\(_\text{auto}\) batch sampling rate \(q_1\) (for batch expectation size \(b_1=q_1m\)), clipping norm \(C_1\), noise multiplier \(\psi_1\), microbatch size \(r_1\)
\State \textbf{generator training parameters}: Learning rate \(\eta_2\), batch size \(b_2\), loss function \(L_G\), optimization method \textsc{Optim}\(_G\), number of generator iteration rounds (or optimization steps) \(T_2\)
\State \textbf{discriminator training parameters}: Learning rate \(\eta_3\), number of discriminator iterations per generator step \(t_D\), loss function \(L_D\), optimization method \textsc{Optim}\(_D\), batch sampling rate \(q_3\) (for batch expectation size \(b_3=q_3m\)), clipping norm \(C_3\), noise multiplier \(\psi_3\), microbatch size \(r_3\)
\State \textbf{privacy parameter} \(\delta>0\)
\State  {\bf procedure}{ DPautoGAN}
            \State \hspace{1em}\(X
            \leftarrow Pre(D)\)
            \State \hspace{1em}Initialize \(\phi,\theta,w,y\) for \(En_\phi,De_\theta,G_w,D_y\)\\
\hspace{1em}\Comment {\textit{Phase 1: autoencoder training}}
\For {\(t=1\ldots T_1\)}
            \State \textsc{DPTrain}$_{\textsc{auto}}$(\(X\),  \(En\),  \(De\), autoencoder training parameters)            
            \EndFor
\LeftComment{\textit{Phase 2: GAN training}}            
                \For{\(t=1\ldots T_2\)}
                \For{\(j=1\ldots t_D\)} 
                \LeftComment{\textit{(privately) train $D_y$ for \(t_D\) iterations}}
                \State \textsc{DPTrain}$_\textsc{Discriminator}$( $X, Z, G,De, D$, discriminator training parameters) 
                \EndFor
            \State \textsc{Train}$_\textsc{Generator}$(\(Z,G,De,D\), generator training parameters)  
            \EndFor
        \Comment{\textit{Privacy accounting }}
        \State $\text{RDP}_\text{auto}(\cdot) \leftarrow$ \textsc{RDP-Account}(\(T_1,q_1,\psi_1,r_1\))         
        \State $\text{RDP}_D(\cdot) \leftarrow$ \textsc{RDP-Account}(\(T_2\cdot t_D,q_3,\psi_3,r_3\))
        \State \(\eps\leftarrow\)\textsc{Get-Eps}($\text{RDP}_\text{auto}(\cdot)+\text{RDP}_D(\cdot)$)
        \State {\bf return} model \((G_w,De_\theta)\), privacy \((\eps,\delta)\)
    \end{algorithmic}
\end{algorithm}


%
%

\subsection{Autoencoder Framework and Training}

An autoencoder consists of an encoder $En_\phi(\cdot):\R^n\rightarrow \R^d$ and a decoder $De_\theta(\cdot):\R^d\rightarrow \R^n$ parametrized by  weights \(\phi,\theta\) respectively. The architecture of the autoencoder assumes that high-dimensional data \(x_i\in\R^n\) can be represented compactly in a low-dimensional \emph{latent space} \(\R^d\). The encoder \(En_\phi\) is trained to find such low-dimensional representations, and the decoder \(De_\theta\) maps\ \(En_\phi(x_i)\)  in the latent space back to \(x_i\). A natural measure of the information preserved in this process is the error between the decoder's image and the original \(x_i\). A good autoencoder should minimize the distance $\text{dist}(De_\theta(En_\phi(x_i)),x_i)$
for each point \(x_i\) for an appropriate distance function dist. Our autoencoder uses binary cross entropy loss: 
dist\((x,y)=-\sum_{j=1}^n y_{(j)}\log(x_{(j)})-\sum_{j=1}^n (1-y_{(j)})\log(1-x_{(j)})\), where \(x_{(j)}\) is the \(j\)th coordinate of  the data $x \in [0,1]^n$ after our preprocessing. 


This motivates a definition of a (true) loss function
\(\E_{x\sim Z_X}[\text{dist}(De_\theta(En_\phi(x)),x)]\) when data are drawn independently from an underlying distribution \(Z_X\). The corresponding empirical loss function when we have an access to sample \(\{x_i\}_{i=1}^m\) is
\begin{equation} \label{eq:auto-loss}
L_\text{auto}(\phi,\theta):=\textstyle\sum_{i=1}^m \text{dist}(De_\theta(En_\phi(x_i)),x_i).
\end{equation}
Finding a good autoencoder requires optimizing $\phi$ and $\theta$ to yield small empirical loss in \Eqref{eq:auto-loss}.

We minimize \Eqref{eq:auto-loss} privately using DP-SGD (Section \ref{sec:dp-sgd}). Our approach follows previous work on private training of autoencoders \citep{chen2018differentially,acs2018differentially,abay2018privacy} by adding noise to both the encoder and decoder.
The full description of our autoencoder training is given in Algorithm \ref{alg:auto} in Appendix \ref{app.algodetail}.
In our DP-auto-GAN framework, the autoencoder is trained first until completion, and is then fixed while training the GAN.

\subsection{GAN Framework and Training} 

A GAN consists of a generator \(G_w\) and a discriminator \(D_y:\R^n\rightarrow\{0,1\}\), parameterized respectively by  weights $w$ and $y$. The aim of the generator \(G_w\) is to synthesize (fake)\ data similar to the real dataset, while the discriminator aims to determine whether an input \(x_i\) is from the generator's synthesized data (and assign label \(D_y(x_i)=0\)) or is real data (and assign label \(D_y(x_i)=1\)).
The generator is seeded with a random noise \(z\sim Z\) that contains no information about the real dataset, such as a multivariate Gaussian vector, and aims to generate a distribution \(G_w(z)\) that is hard for \(D_y\) to distinguish from the real data. Hence, the generator wants to minimize the probability that \(D_y\) makes a correct guess,
\(\E_{z\sim Z} [1-D_y(G_w(z))] \). The discriminator wants to maximize its probability of a correct guess, which is \(\E_{z\sim Z} [1-D_y(G_w(z))]\) when the datum is fake and \(\E_{x\sim Z_X} [D_y(x)]\) when it is real.

We extend the binary output of \(D_y\) to a continuous range \([0,1]\), with the value indicating the confidence that a sample is real.  We use the zero-sum objective for the discriminator and generator  \citep{arjovsky2017wasserstein}, which is motivated by the Wasserstein distance of two distributions.  Although the proposed Wasserstein objective cannot be computed exactly, it can be approximated by optimizing:
\begin{equation}
\textstyle\min_y \textstyle\max_w O(y,w):=\Ebb_{x \sim Z_X} [D_y(x)] -\Ebb_{z \sim Z} [D_y(G_w(z))]. \label{eq:WGAN}
\end{equation}
We optimize \Eqref{eq:WGAN} privately using the DP-SGD framework described in Section \ref{sec:dp-sgd}. We differ from prior work on DP-GANs in that our generator \(G_w(\cdot)\) outputs data \(G_w(z)\) in the latent space \(\R^d\), which needs to be decoded by the fixed (pre-trained) \(De_\theta\) to \(De_\theta(G_w(z))\) before being fed into the discriminator \(D_y(z)\). The gradient \(\nabla_w G_w\) is obtained by backpropagation through this additional component \(De_\theta(\cdot)\). The full description of our GAN training is given in Algorithm \ref{alg:gan} in Appendix \ref{app.algodetail}.

After this two-phase training (of the autoencoder and GAN), the noise distribution \(Z\), trained generator \(G_w(\cdot)\), and trained decoder \(De_\theta(\cdot)\) are released to the public.  The public can sample \(z\sim Z\) to obtain a synthesized datapoint \(De_\theta(G_y(z))\) repeatedly to obtain a synthetic dataset of any desired size.

\subsection{Privacy Accounting}

We use Renyi Differential Privacy (RDP) of \cite{mironov2017renyi}, to account for privacy in each phase of training as in prior works. Our autoencoder and GAN are trained privately by clipping gradients and adding noise to the encoder, decoder, and discriminator.
Since the generator only accesses data through the discriminator's (privatized) output and \(De_\theta\) is first trained privately and then fixed during GAN training,  the trained parameters of generator are also private by post-processing guarantees of differential privacy. 
Privacy accounting is therefore required for only two parts that access real data \(X\): training of the autoencoder and of the discriminator. In each training procedure, we apply the RDP accountant described in Section \ref{sec:rdp}, to analyze privacy of the DP-SGD training algorithm. 

The RDP accountant is a function \(r:[1,\infty)\rightarrow\R_+\) and guarantees \((\eps,\delta)\)-DP for any given \(\delta>0\) with \(\eps=\min_{\alpha>1} r(\alpha)+\frac{\log{1/\delta}}{\alpha-1}\) (\cite{mironov2017renyi}; also used in Tensorflow Privacy \citep{TensorflowPrivacy}).  Hence, at the end of two-phase training, we have two RDP\ accountants \(r_1,r_2\). We compose two RDP accountants \textit{before} converting the combined accountant into \((\eps,\delta)\)-DP. Note that another method used in DP-SYN \citep{abay2018privacy} first converts \(r_i\) to \((\epsilon_i,\delta_i)\)-DP and then combines them  into \((\eps_1+\eps_2,\delta_1+\delta_2)\)-DP by basic composition \citep{DMNS06}. For completeness,  we show that composing RDP accountants first always results in a better privacy analysis.

\begin{restatable}{lemma}{rdpbetter}\label{lem:rdp-better}
Let \(\M_1,\M_2\) be any mechanisms and $r_1,r_2:[1,\infty)\rightarrow \R_+\cup\{\infty\}$ be functions such that \(\M_1,\M_2\) are \((\alpha,r_1(\alpha))\)- and \((\alpha,r_2(\alpha))\)-RDP, respectively. Let \(\delta\in(0,1]\) and let 
\[
\eps_1=\min_{\alpha>1} r_1(\alpha)+\frac{\log(2/\delta)}{\alpha-1},\quad \eps_2=\min_{\alpha>1} r_2(\alpha)+\frac{\log(2/\delta)}{\alpha-1}, \quad
\text{ and } \quad \eps =\min_{\alpha>1} r_1(\alpha)+r_2(\alpha)+\frac{\log(1/\delta)}{\alpha-1}.
\]
Then $\M_1$ is $(\epsilon_1,\delta/2)$-DP, $\M_2$ is $(\epsilon_2,\delta/2)$-DP, and the composition $\M = (\M_1,\M_2)$ is $(\epsilon,\delta)$-DP. If \(\epsilon_1\) and \(\epsilon_2\) are finite, then \(\epsilon<\epsilon_1+\epsilon_2\). 
\end{restatable}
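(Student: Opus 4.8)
The plan is to derive the three DP guarantees directly from the two RDP propositions stated above, and then to isolate the strict-inequality claim as the only part requiring real work.

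For the individual guarantees I would apply Proposition~\ref{prop:rdp-dp} to $\M_1$ at each order $\alpha>1$ with target failure probability $\delta/2$: since $\M_1$ is $(\alpha,r_1(\alpha))$-RDP, it is $\left(r_1(\alpha)+\frac{\log(2/\delta)}{\alpha-1},\,\delta/2\right)$-DP, and optimizing over the order gives exactly the $(\eps_1,\delta/2)$-DP guarantee; the same argument with $r_2$ yields $(\eps_2,\delta/2)$-DP. For the composition I would first invoke Proposition~\ref{prop:rdp-compose} to conclude that $\M=(\M_1,\M_2)$ is $(\alpha,r_1(\alpha)+r_2(\alpha))$-RDP for every $\alpha$, and then apply Proposition~\ref{prop:rdp-dp} \emph{once}, now with target $\delta$, to obtain $\left(r_1(\alpha)+r_2(\alpha)+\frac{\log(1/\delta)}{\alpha-1},\,\delta\right)$-DP; minimizing over $\alpha$ gives $(\eps,\delta)$-DP. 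These three steps are essentially bookkeeping, the only point of care being that the conversion is applied once (with budget $\delta$) after composing, rather than twice (with budget $\delta/2$ each) before composing.

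The substance is the strict bound $\eps<\eps_1+\eps_2$. Let $\alpha_1,\alpha_2$ attain the minima defining $\eps_1,\eps_2$ (finite, with $\alpha_i>1$, which holds because $r_i(\alpha)+\frac{\log(2/\delta)}{\alpha-1}$ blows up both as $\alpha\to1^+$ and, for a realistic accountant, as $\alpha\to\infty$). Assume without loss of generality $\alpha_1\le\alpha_2$. Rather than re-optimizing the combined accountant, I would simply evaluate it at the single order $\gamma=\alpha_1$, giving $\eps\le r_1(\alpha_1)+r_2(\alpha_1)+\frac{\log(1/\delta)}{\alpha_1-1}$. The key inequality is that the RDP curve $r_2$ is non-decreasing in the order $\alpha$ (Rényi divergence is monotone in its order), so $r_2(\alpha_1)\le r_2(\alpha_2)$. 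Subtracting this bound from $\eps_1+\eps_2=r_1(\alpha_1)+\frac{\log(2/\delta)}{\alpha_1-1}+r_2(\alpha_2)+\frac{\log(2/\delta)}{\alpha_2-1}$, the $r_1(\alpha_1)$ terms cancel, $r_2(\alpha_1)$ is dominated by $r_2(\alpha_2)$, and what remains is a strictly positive gap $\frac{\log 2}{\alpha_1-1}+\frac{\log(2/\delta)}{\alpha_2-1}>0$, using $\log(2/\delta)>0$ for $\delta\le 1$.

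The main obstacle is this last step, and specifically that it genuinely relies on monotonicity of the order-$\alpha$ RDP value: evaluating the combined accountant at one well-chosen order dominates the two separate optimizations only because pushing $r_2$ from $\alpha_2$ down to the smaller order $\alpha_1$ cannot increase it. Without such a structural property the strict bound can fail, so I would state the monotonicity explicitly and flag where existence of finite minimizers $\alpha_i>1$ is invoked. The two savings being exploited---replacing two $\log(2/\delta)$ conversion terms by a single $\log(1/\delta)$ term---are precisely what the residual gap records.
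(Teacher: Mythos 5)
Your proposal is correct and follows essentially the same route as the paper's proof: both evaluate the composed RDP accountant at the smaller of the two optimal orders $\min\{\alpha_1^*,\alpha_2^*\}$, invoke monotonicity of the R\'enyi divergence in the order $\alpha$ to bound $r_i$ at that common point by its value at its own minimizer, and observe that the leftover gap $\frac{\log 2}{\alpha_1-1}+\frac{\log(2/\delta)}{\alpha_2-1}$ is strictly positive. Your explicit flagging of where monotonicity and attainment of the minimizers are used is a minor presentational refinement, not a different argument.
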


In practice, we observe that composing at the RDP level first in Lemma \ref{lem:rdp-better} reduces privacy cost by \(\approx30\%\). An empirical application of Lemma \ref{lem:rdp-better} to our parameter setting is illustrated in Figure \ref{fig:rdp-eps}. The figure, theoretical support of this approximate privacy reduction factor, and the proof of Lemma \ref{lem:rdp-better} are in Appendix \ref{app.rdp}. 

\section{Experiments}\label{s.exp}

In this section, we empirically evaluate the performance of our \dpgan framework on the \mimic  \citep{johnson2016mimic} and ADULT \citep{uci_ml} datasets, which have been used in prior works on differentially private synthetic data generation. We compare against these prior approaches using a variety of qualitative and quantitative evaluation metrics, including some from prior work and some novel metrics we introduce. We target $\delta = 10^{-5}$ in all settings. For more details on the evaluation metrics used, see Appedix \ref{app.expmetrics}.  All experimental details and additional experimental results can be found in Appendices \ref{app.mimictrain} and \ref{app.adulttrain}, and our code is available at \url{https://github.com/DPautoGAN/DPautoGAN}.

\subsection{Binary Data}\label{s.binary}

\mimic \citep{johnson2016mimic} is a binary dataset consisting of medical records of 46K intensive care unit (ICU) patients over 11 years old with 1071 features.
Even though  \dpgan  can handle mixed-type data, we  evaluate it first on  \mimic since this dataset has been used in similar non-private \citep{choi2017generating} and private \citep{xie2018differentially} GAN frameworks.  We apply the same evaluation metrics used in these papers, namely  dimension-wise probability and dimension-wise prediction. Prediction is defined by AUROC score of a logistic regression classifier. 

\begin{figure}[h]
        \centering
        \begin{subfigure}{0.245\textwidth}
                \includegraphics[width = 0.95\linewidth]{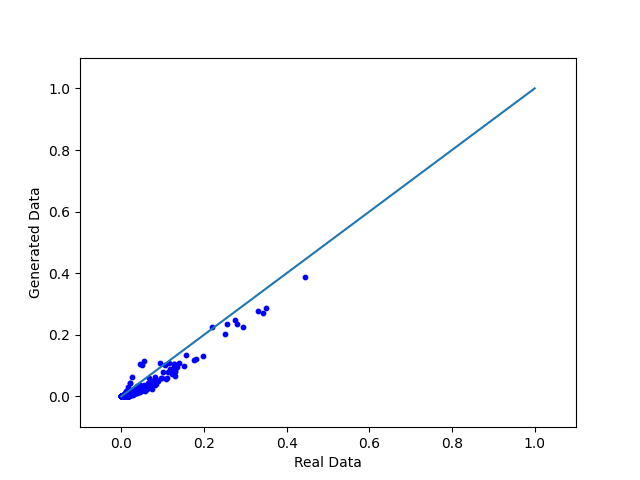}
                \caption{$\eps = \infty$}
        \end{subfigure}
        \begin{subfigure}{0.245\textwidth}
        \includegraphics[width = 0.95\linewidth]{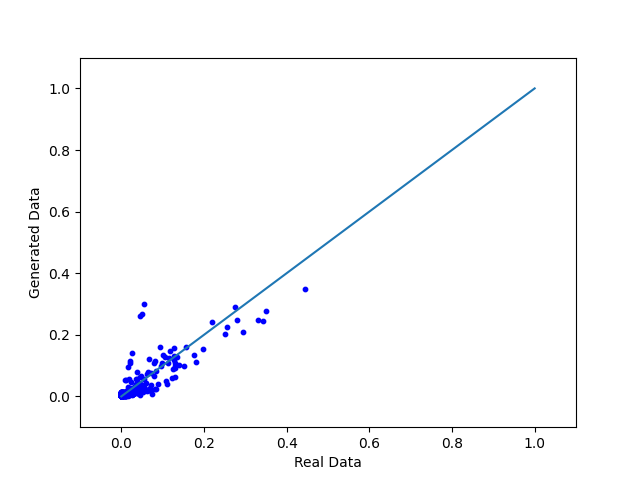}
        \caption{$\eps = 2.70$}
\end{subfigure}
        \begin{subfigure}{0.245\textwidth}
        \includegraphics[width = 0.95\linewidth]{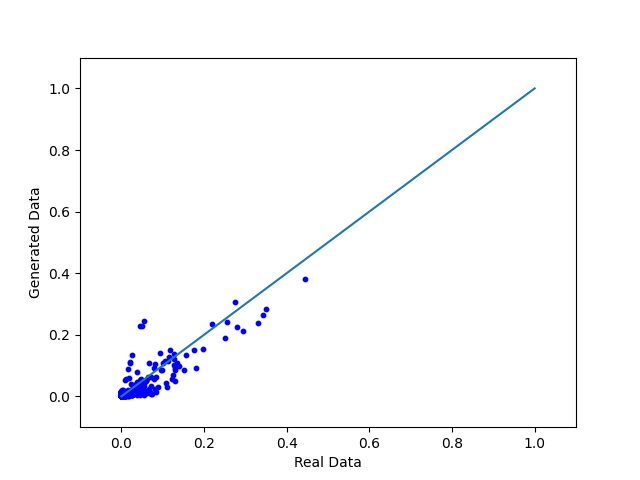}
        \caption{$\eps = 1.33$}
\end{subfigure}
        \begin{subfigure}{0.245\textwidth}
        \includegraphics[width = 0.95\linewidth]{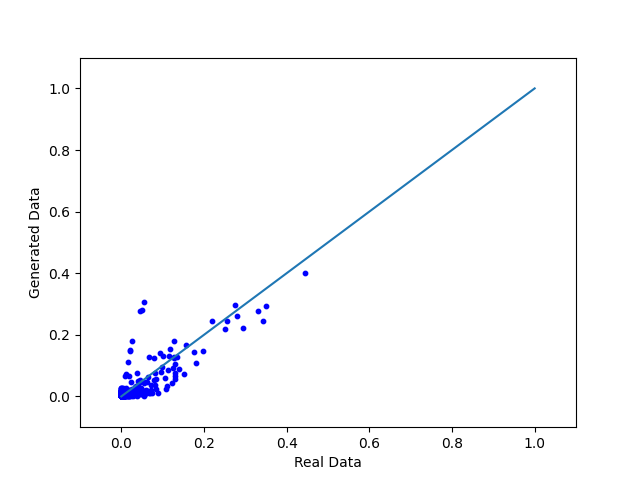}
        \caption{$\eps = 0.81$}
\end{subfigure}
\caption{Dimension-wise probability scatterplots for different values of $\eps$. Each point represents one of the 1071 features in the \mimic dataset. The $x$ and $y$ coordinates of each point are the proportion of 1s in real and synthetic datasets of a feature, respectively. The line $y=x$, which represents ideal performance, is shown in each plot. Note that even for small $\epsilon$ values, performance is not degraded much relative to the non-private method. Compare with Figure 4 in \cite{xie2018differentially}, which provides worse performance for $\epsilon \in [96, 231]$.}
\label{fig:dim_prob}
\end{figure}


 \paragraph{Dimension-Wise Probability.} Figure \ref{fig:dim_prob} shows the dimension-wise probability of \dpgan for different \(\epsilon\). Each point in the figure corresponds to a feature in the dataset, and the $x$ and $y$ coordinates respectively show the proportion of 1s in the real and synthetic datasets.  Points closer to the $y=x$ line correspond to better performance, because this indicates the distribution is similar in the real and synthetic datasets.  As shown in Figure \ref{fig:dim_prob}, the proportion of 1's in the marginal distribution for is similar on the real and synthetic datasets in the non-private ($\eps = \infty$) and private settings.  The marginal distributions of the privately generated data from DP-auto-GAN remain a close approximation of the real dataset, even for small values of $\epsilon$, because nearly all points fall close to the line $y = x$. We note that our results are significantly stronger than the ones obtained in \cite{xie2018differentially} with $\eps \in [96.5,231]$ because we obtain dramatically better performance with $\eps$ values that are two orders of magnitude smaller. For visual performance comparison, see Figure 4 of \cite{xie2018differentially}.


\begin{figure}[h]
        \centering
        \begin{subfigure}{0.245\textwidth}
                \includegraphics[width = 0.95\linewidth]{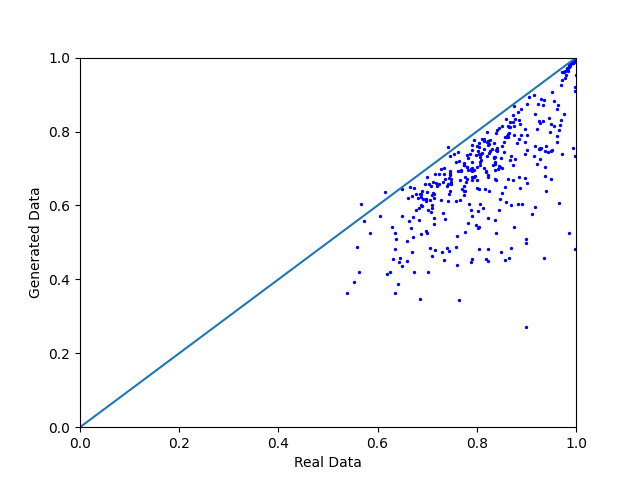}
                \caption{$\eps = \infty$}
        \end{subfigure}
        \begin{subfigure}{0.245\textwidth}
        \includegraphics[width = 0.95\linewidth]{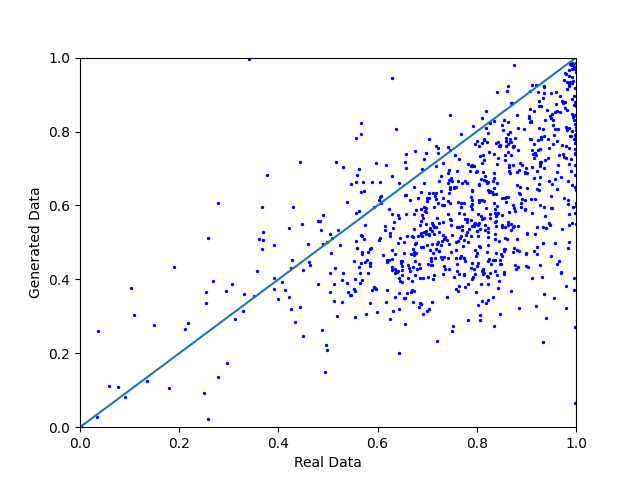}
        \caption{$\eps = 2.70$}
\end{subfigure}
        \begin{subfigure}{0.245\textwidth}
        \includegraphics[width = 0.95\linewidth]{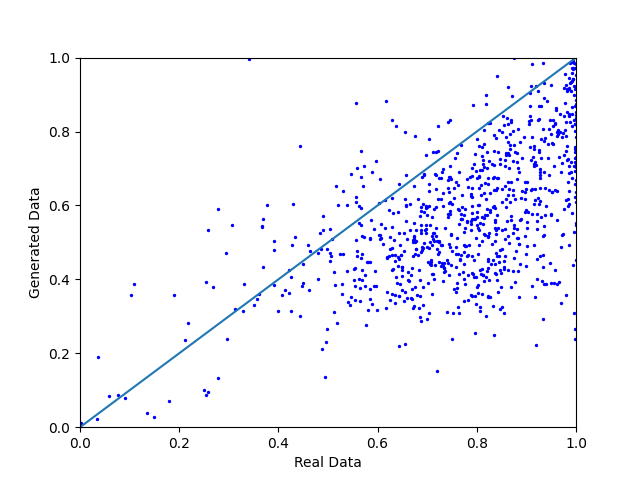}
        \caption{$\eps = 1.33$}
\end{subfigure}
        \begin{subfigure}{0.245\textwidth}
        \includegraphics[width = 0.95\linewidth]{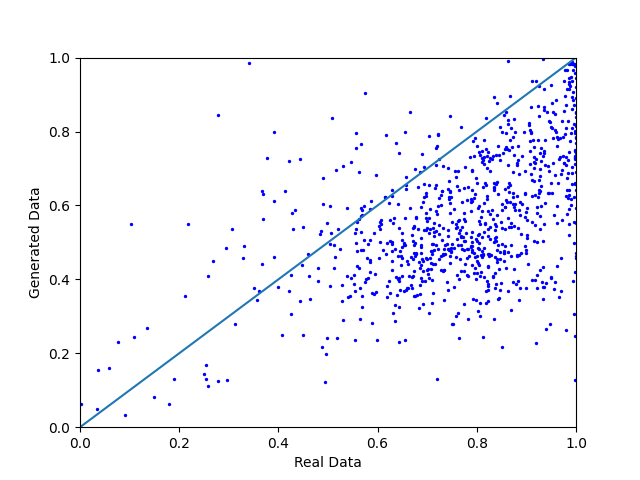}
        \caption{$\eps = 0.81$}
\end{subfigure}
\caption{Dimension-wise prediction scatterplots for different values of $\eps$. Each point represents one of the 1071 features in the \mimic dataset. The $x$ and $y$ coordinates of each point represent the AUROC score of a logistic regression classifier trained on real and synthetic datasets, respectively. The line $y = x$ corresponds to the ideal performance.  Again we note that even for small $\epsilon$ values, performance is not degraded much relative to the non-private method. Compare with Figure 5 in \cite{xie2018differentially}, which provides worse performance for $\epsilon \in [96, 231]$.}
\label{fig:dim_pred_mimic}
\end{figure}

\paragraph{Dimension-Wise Prediction.} Figure \ref{fig:dim_pred_mimic} shows dimension-wise prediction using DP-auto-GAN for different values of $\eps$. Each point in the figure corresponds to a feature in the dataset, and the $x$ and $y$ coordinates respectively show the AUROC score of a logistic regression classifier trained on the real and synthetic datasets, and points closer to the $y=x$ line still correspond to better performance. As shown in the figure, for $\eps = \infty$, many points are concentrated along the lower side of line $y = x$, which indicates that the AUROC score of the real dataset is only marginally higher than that of the synthetic dataset. When privacy is added, there is a gradual shift downwards relative to the line $y =x$, with larger variance in the plotted points, indicating that AUROC scores of real and synthetic data show more difference when privacy is introduced. Surprisingly, there is little degradation in performance for smaller $\eps$ values, including $\eps = 0.81$. For sparse features with few 1's in the data, the generative model will output all 0's for that feature, making AUROC ill-defined. (See Appendix \ref{app.dwpsparse} for more details.) We follow \citet{xie2018differentially} by excluding those features from dimension-wise  prediction plots.

Our results for DP-auto-GAN under this metric are also significantly stronger than the ones obtained in \cite{xie2018differentially} with much larger $\eps$ values of $\eps \in [96.5,231]$; for visual performance comparison, see Figure 5 of \cite{xie2018differentially}.  Our probability and prediction plots of  \dpgan are either comparable to or better than  \cite{xie2018differentially}, with our  prediction plots detecting many more sparse features. The performance of \dpgan degrades only slightly as \(\eps\) decreases and is achieved at much smaller \(\eps\) values, giving a roughly 100x improvement in privacy compared to \cite{xie2018differentially}.

\subsection{Mixed-Type Data}\label{s.mixed}

ADULT dataset \citep{uci_ml} is an extract of the U.S. Census of 48K working adults, consisting of mixed-type data: nine categorical features (one of which is a binary label) and four continuous.  This dataset has been used to evaluate DP-WGAN \citep{frigerio2019differentially} and DP-SYN \citep{abay2018privacy}.  We compare \dpgan against these methods, as well as DP-VAE \citep{acs2018differentially}. We target \(\eps=1.01,0.51,0.36\). For DP-SYN, we allow \(\eps=1.4,0.8,0.5\) because their implementation uses standard privacy composition, which is looser than than RDP composition (Lemma \ref{lem:rdp-better}). These larger $\eps$ values provide comparable privacy guarantees to the smaller $\eps$ values achieved by RDP composition, and allow for a fair comparison of architectures without modifying the implementation in \cite{abay2018privacy}. For more details, see Appendices \ref{app.rdp} and \ref{app.dpsyn}.

\begin{figure}[h!]
        \centering
        \begin{subfigure}{0.31\textwidth}
                \includegraphics[width = 0.95\linewidth]{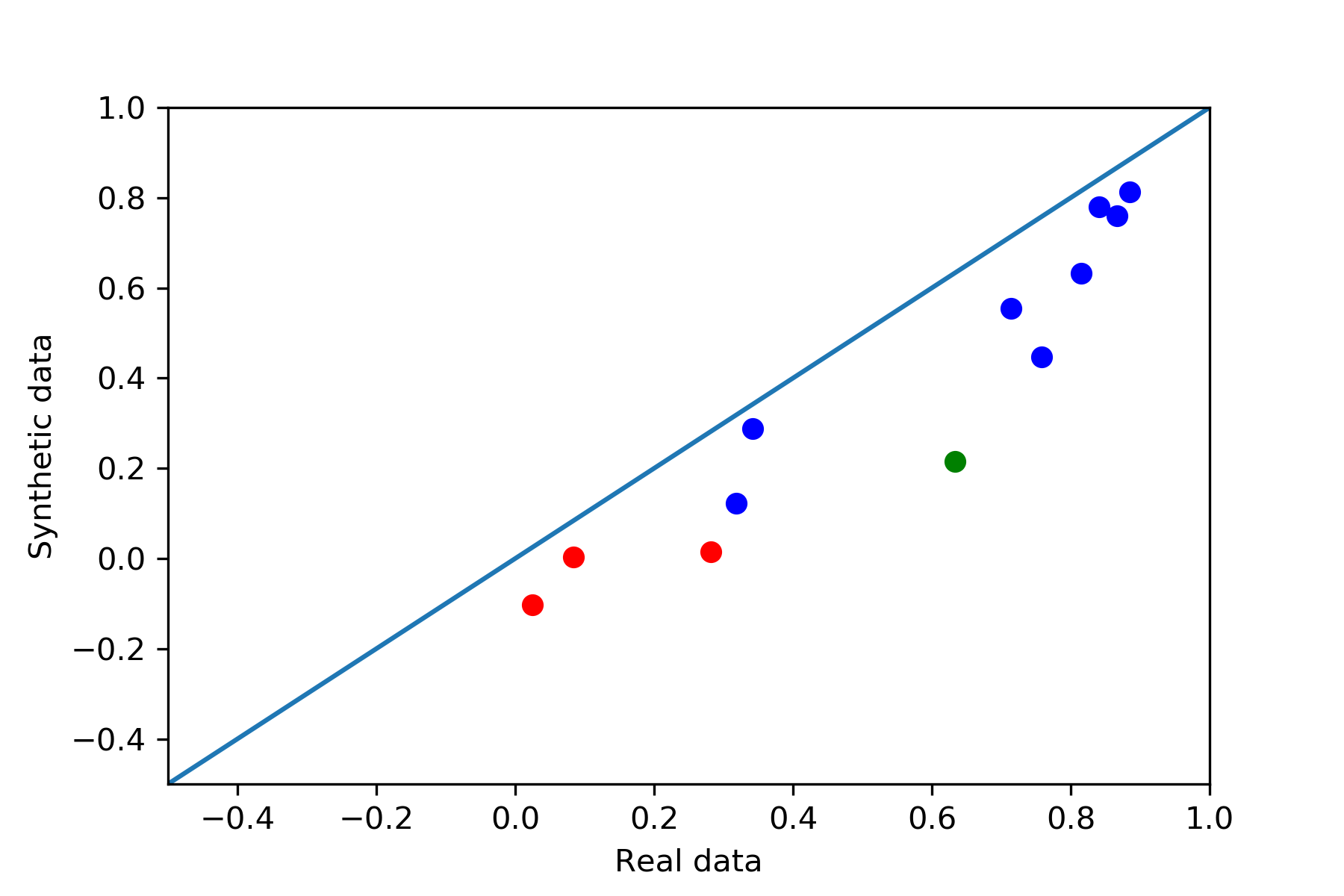}
                \caption{DPautoGAN $\eps = 1.01$}
        \end{subfigure}
        \begin{subfigure}{0.31\textwidth}
                \includegraphics[width = 0.95\linewidth]{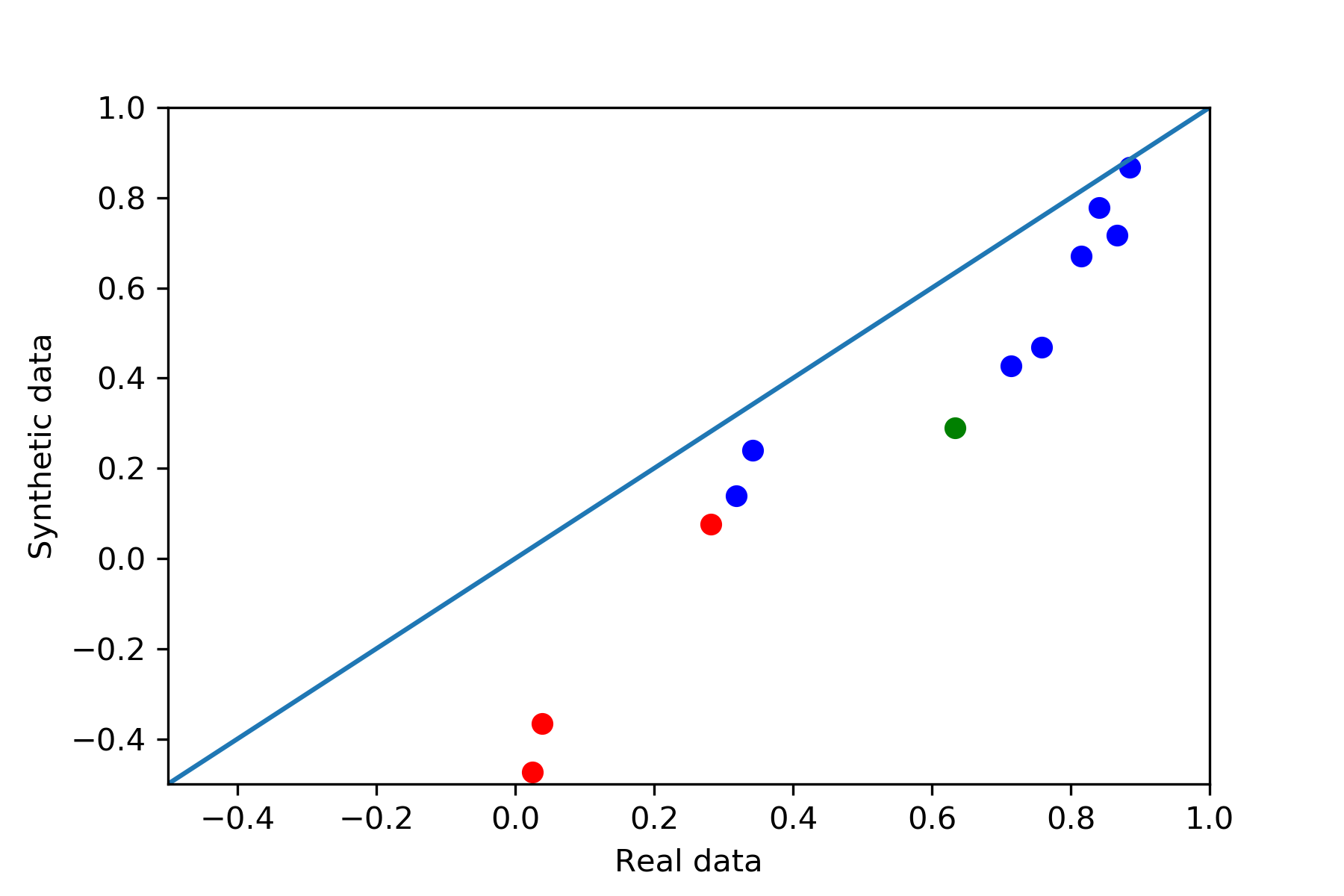}
                \caption{DPautoGAN $\eps = 0.51$}
        \end{subfigure}
        \begin{subfigure}{0.31\textwidth}
                \includegraphics[width = 0.95\linewidth]{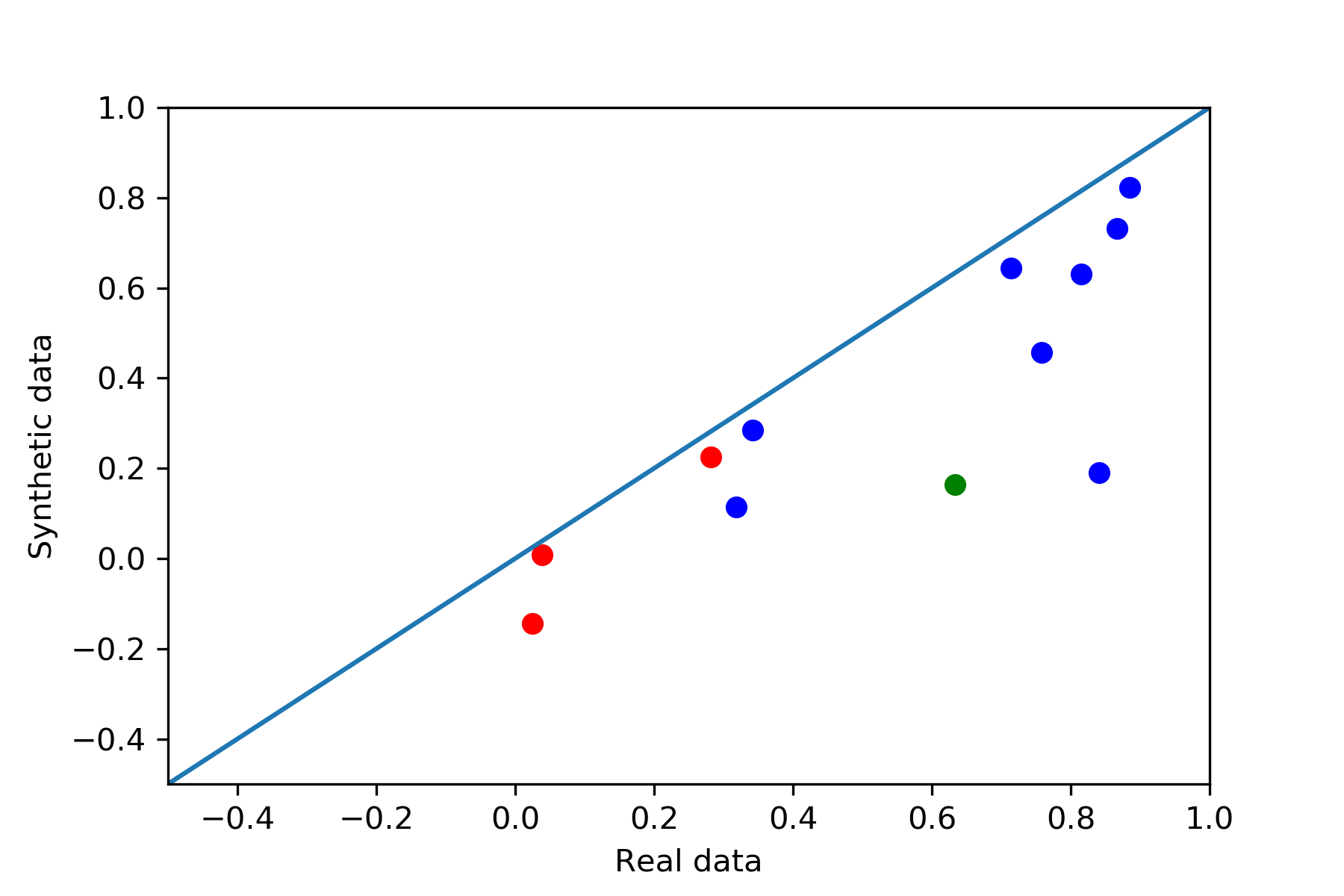}
                \caption{DPautoGAN $\eps = 0.36$}
        \end{subfigure}
        
                \begin{subfigure}{0.31\textwidth}
                \includegraphics[width = 0.95\linewidth]{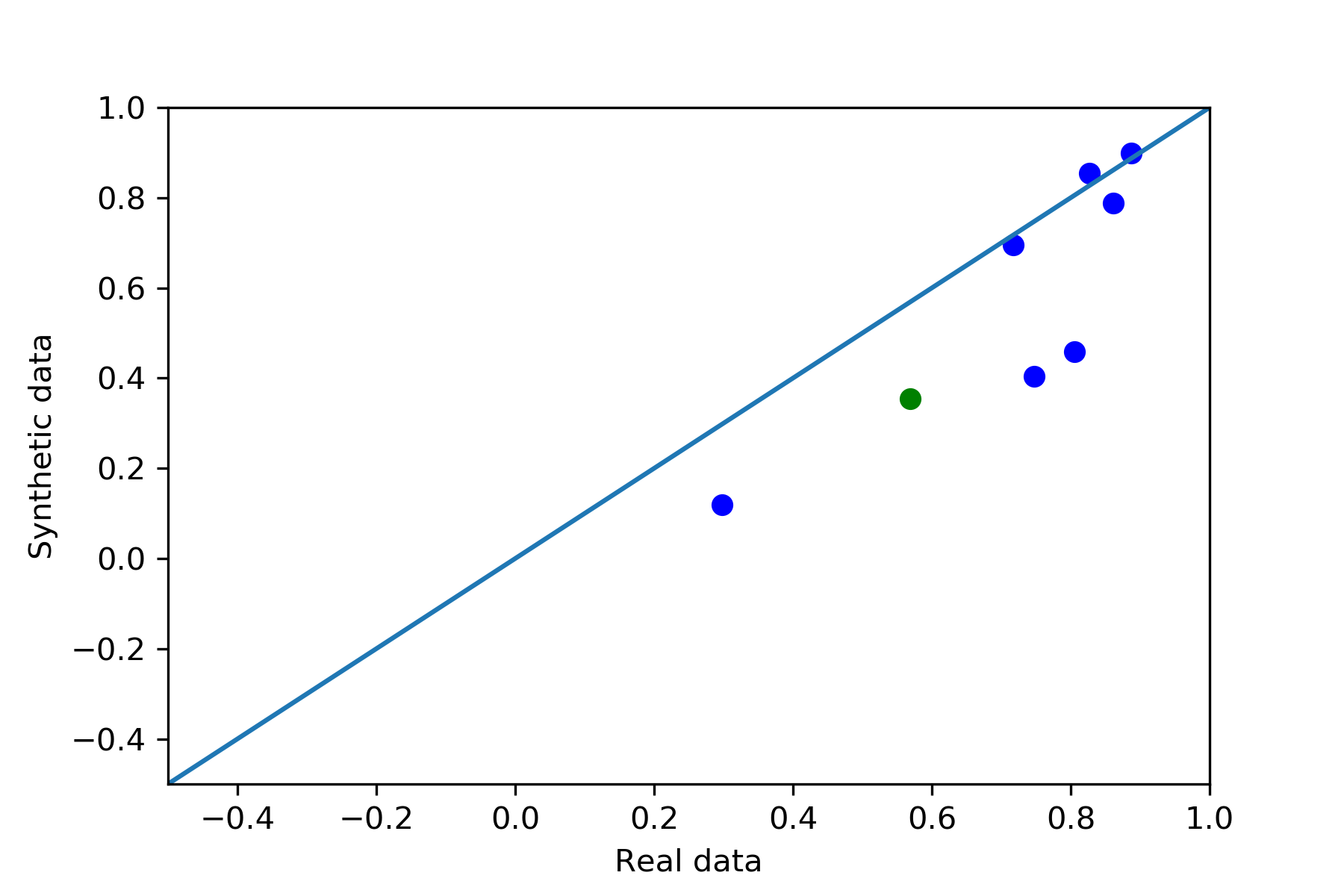}
                \caption{DP-WGAN $\eps = 1.01$}
        \end{subfigure}
        \begin{subfigure}{0.31\textwidth}
                \includegraphics[width = 0.95\linewidth]{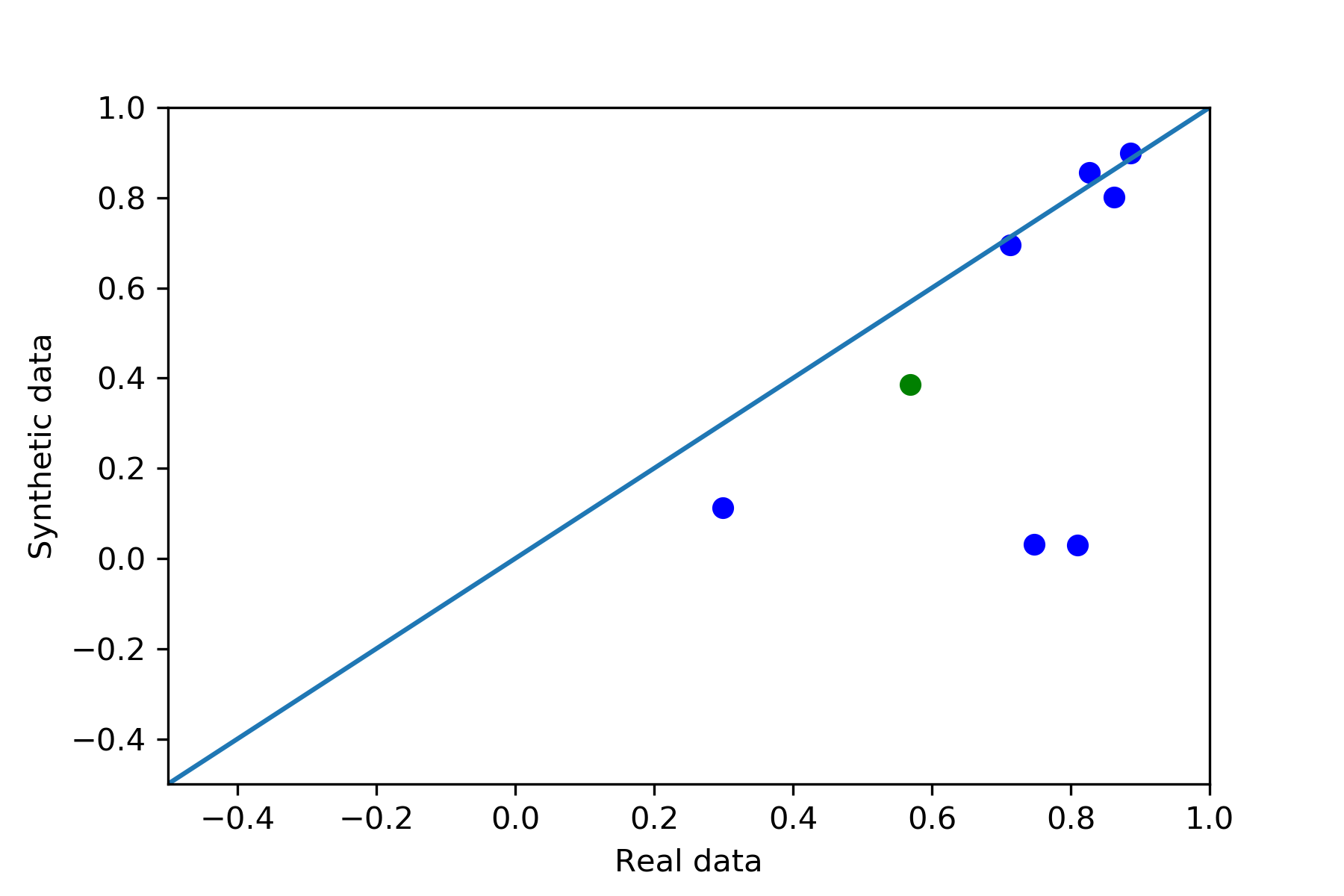}
                \caption{DP-WGAN $\eps = 0.51$}
        \end{subfigure}
        \begin{subfigure}{0.31\textwidth}
                \includegraphics[width = 0.95\linewidth]{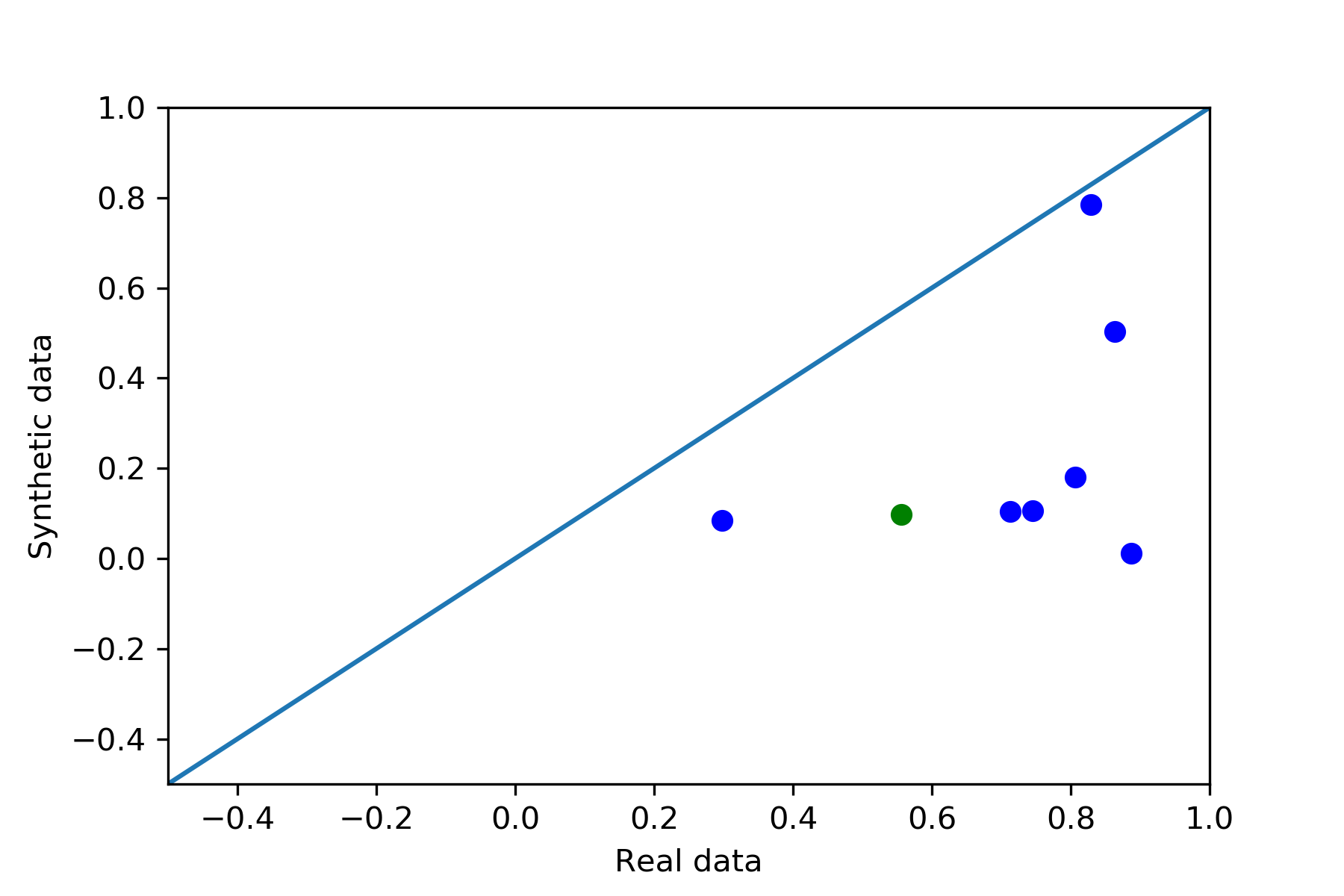}
                \caption{DP-WGAN $\eps = 0.36$}
        \end{subfigure}
        
         \begin{subfigure}{0.31\textwidth}
                \includegraphics[width = 0.95\linewidth]{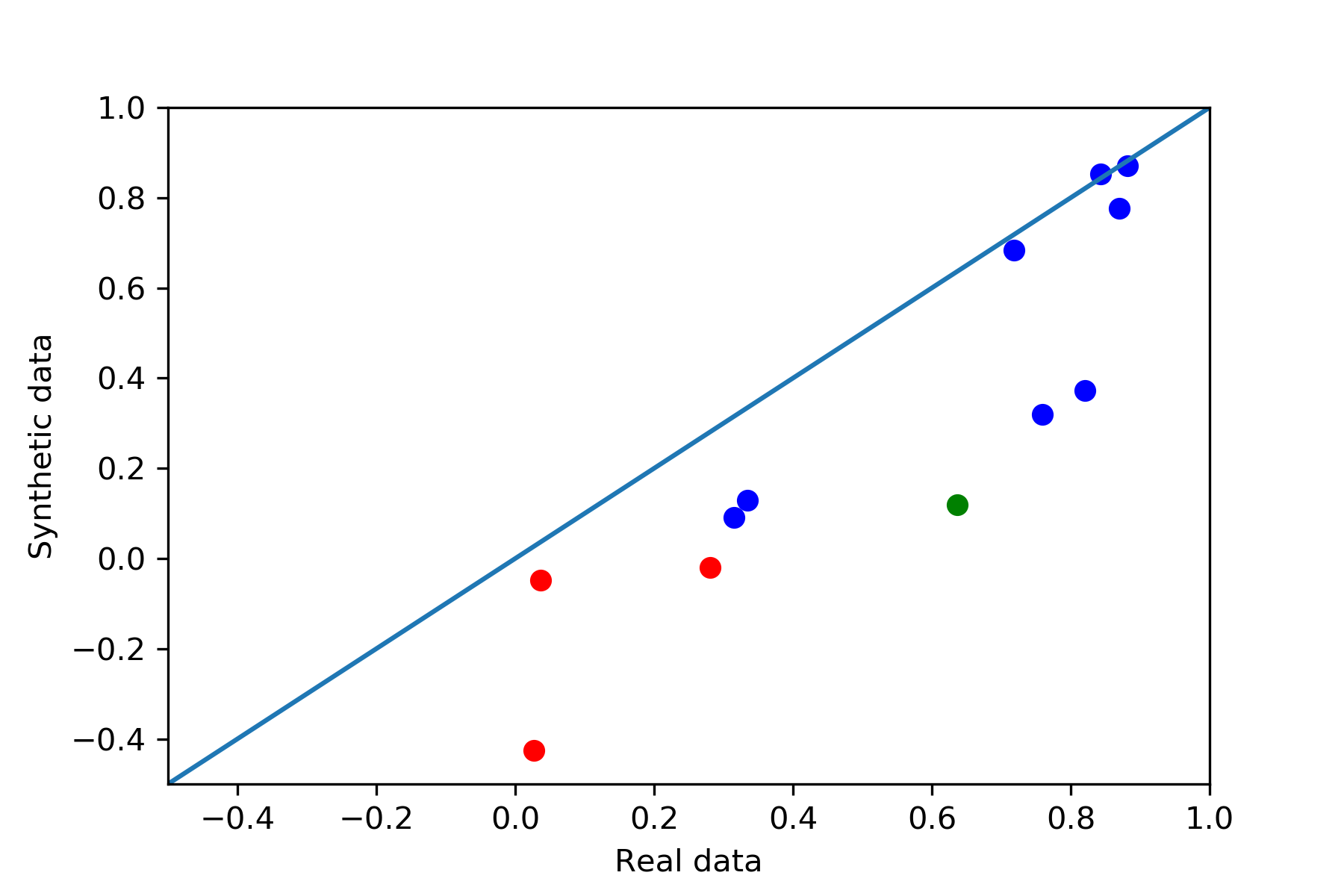}
                \caption{DP-VAE $\eps = 1.01$}
        \end{subfigure}
        \begin{subfigure}{0.31\textwidth}
                \includegraphics[width = 0.95\linewidth]{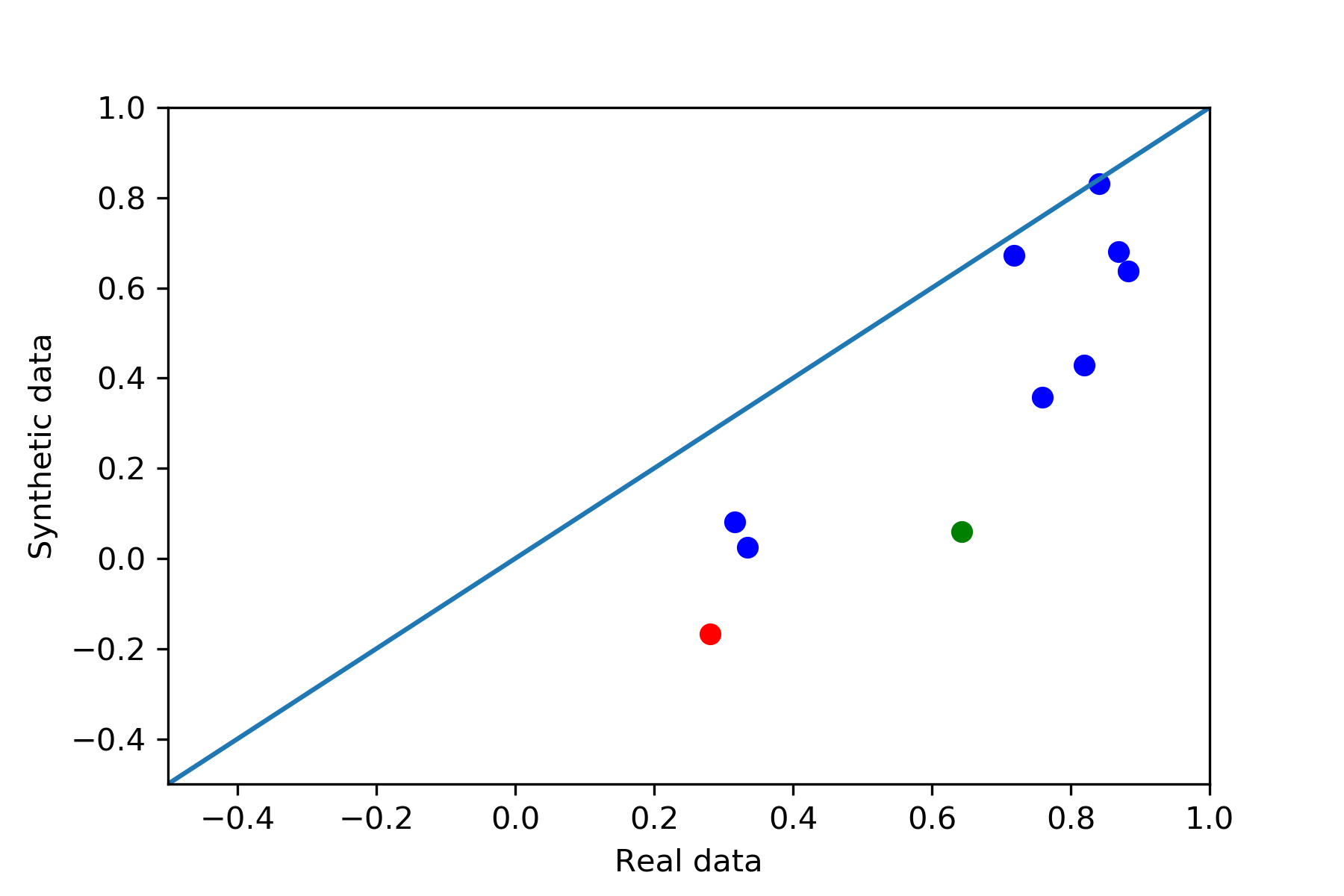}
                \caption{DP-VAE $\eps = 0.51$}
        \end{subfigure}
        \begin{subfigure}{0.31\textwidth}
                \includegraphics[width = 0.95\linewidth]{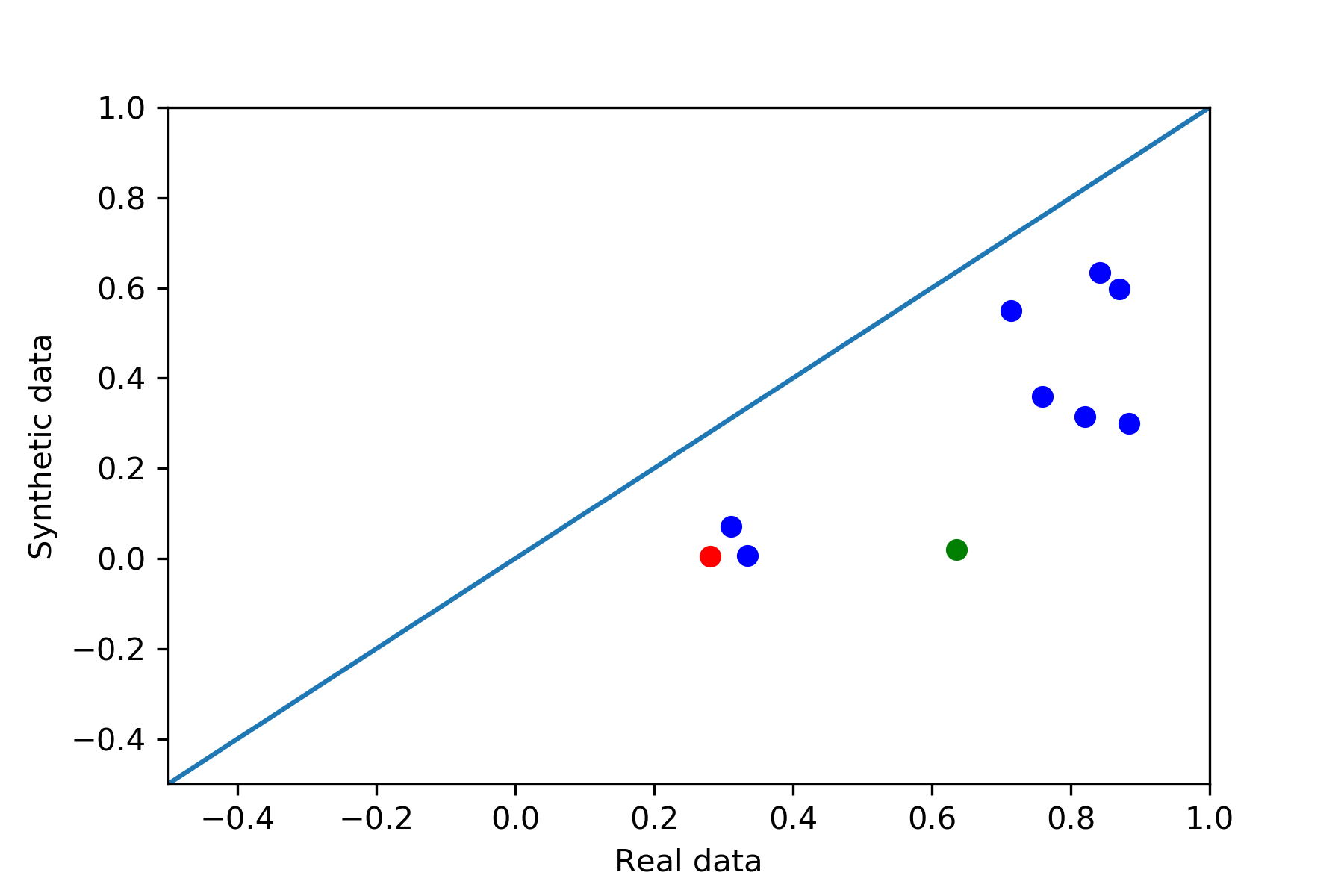}
                \caption{DP-VAE $\eps = 0.36$}
        \end{subfigure}
        
         \begin{subfigure}{0.31\textwidth}
                \includegraphics[width = 0.95\linewidth]{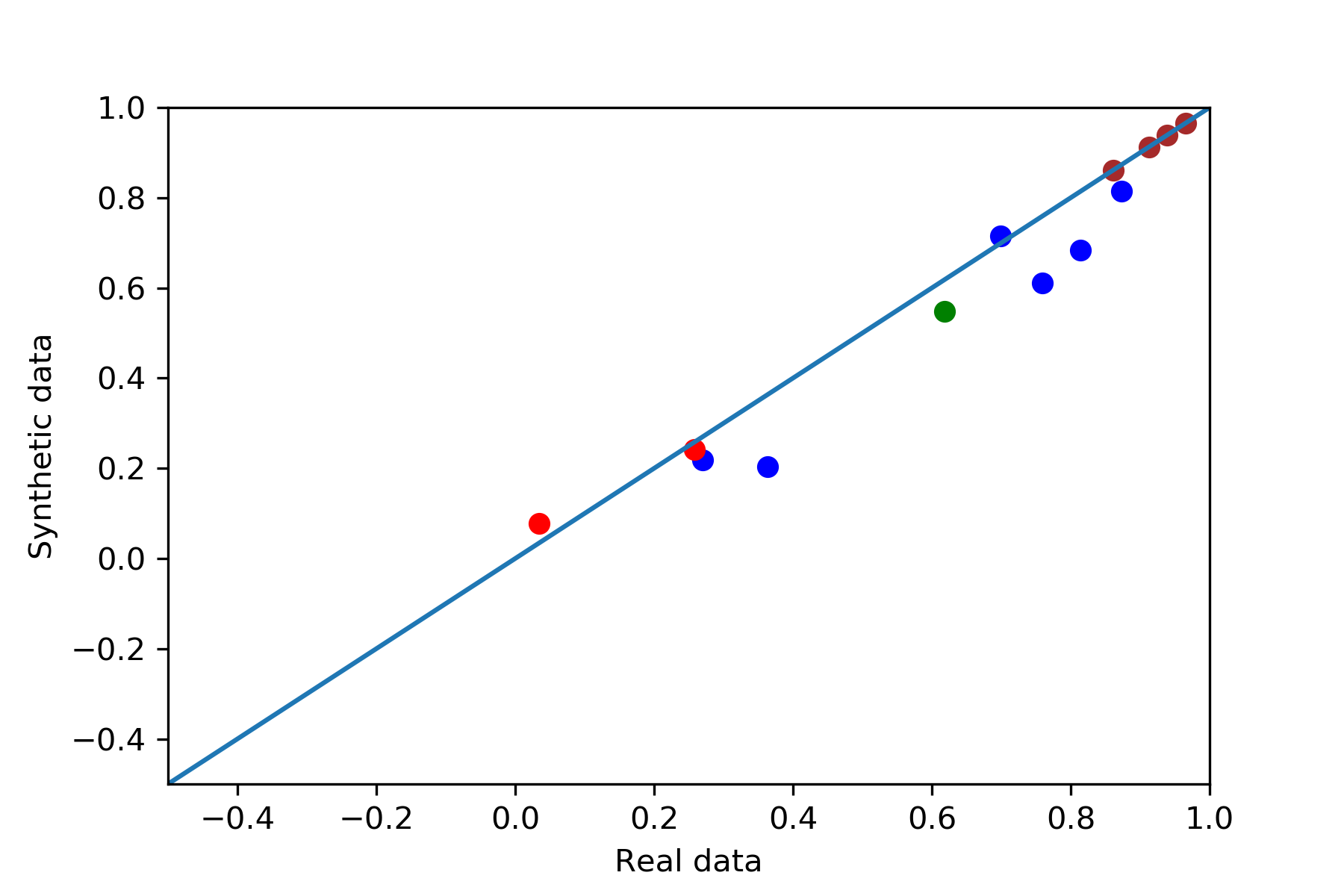}
                \caption{DP-SYN $\eps = 1.40$}
        \end{subfigure}
        \begin{subfigure}{0.31\textwidth}
                \includegraphics[width = 0.95\linewidth]{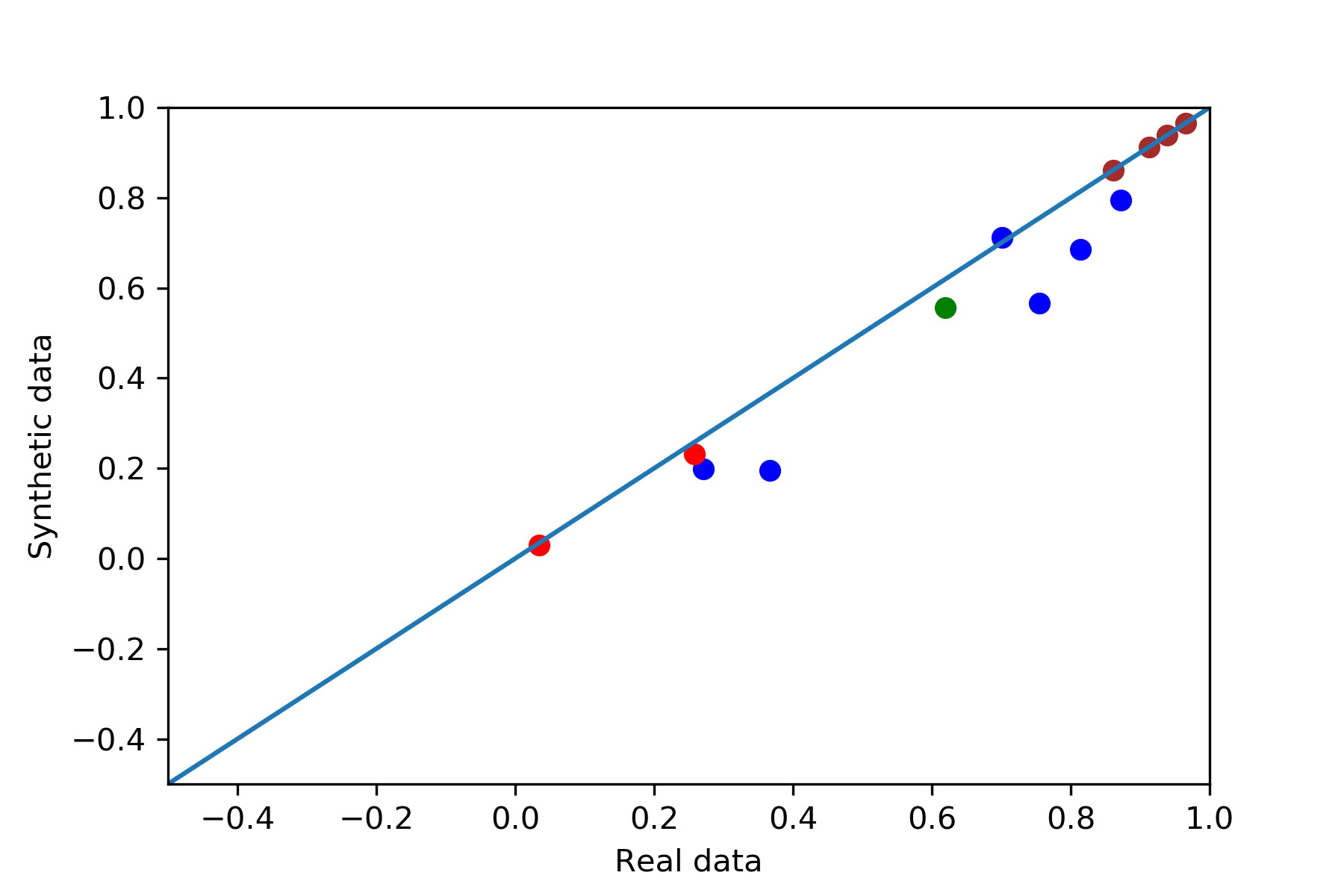}
                \caption{DP-SYN $\eps = 0.80$}
        \end{subfigure}
        \begin{subfigure}{0.31\textwidth}
                \includegraphics[width = 0.95\linewidth]{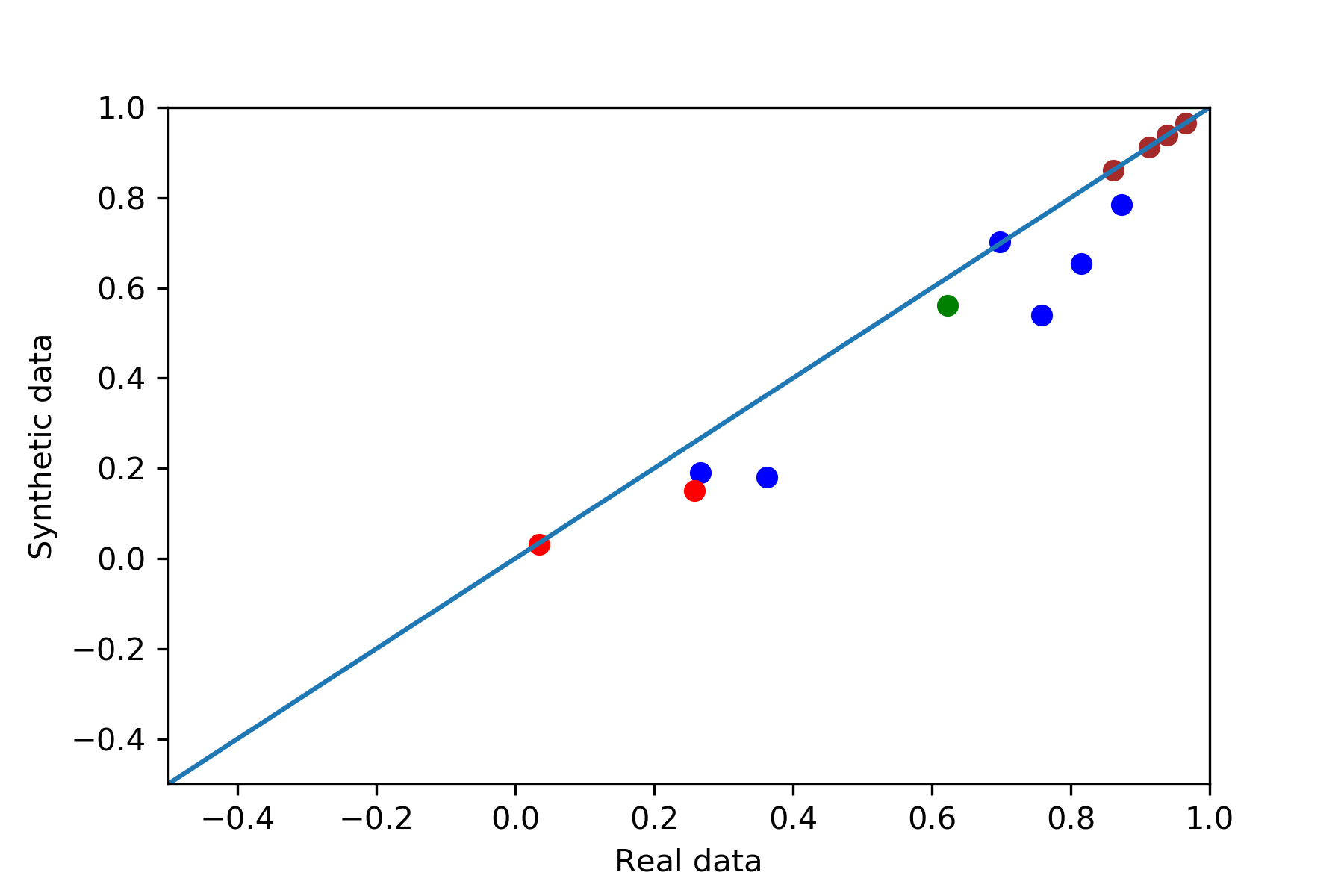}
                \caption{DP-SYN $\eps = 0.50$}
        \end{subfigure}
        \caption{Dimension-wise prediction scatterplot of all (applicable) features of ADULT dataset for different $\eps$ values and algorithms. The line $y=x$ represents ideal performance. Blue, green, and red points respectively correspond to unlabeled categorical, labeled binary, and continuous features. Brown points indicate the synthetic data exhibit no diversity (i.e., all data points have the same category). Note that DP-SYN has several features without diversity. Red points with \(R^2\) scores close to zero in the original data have unstable (and  unmeaningful) synthetic \(R^2\) scores due to the sparse nature of those features in the original data, and some of these \(R^2\) scores fall outside of the plotted range. The implementation of DP-WGAN in \cite{frigerio2019differentially} did not allow continuous features, and the implementation of DP-SYN in \cite{abay2018privacy} converted two continuous features to categorical; see Appendix \ref{app.adulttrain} for more details.}
        \label{fig:dim_pred_adult}
\end{figure}

\paragraph{Dimension-Wise Prediction.} Figure \ref{fig:dim_pred_adult} compares the performance of \dpgan with these three prior algorithms for the task of dimension-wise prediction. For categorical features (represented by blue points and a single green point), we use a random forest classifier for prediction as in \cite{frigerio2019differentially}, and we measure performance using $F_1$ score, which is more appropriate than AUROC for multi-class prediction. For continuous features (represented by red points), we used Lasso regression  and report \(R^2\) scores.  The green point corresponds to the \emph{salary feature} of the data, which is real-valued but treated as binary based on the condition $> \$50k$, which was similarly used as a binary label in \cite{frigerio2019differentially}.  We use brown points to indicate the categorical features for which the synthetic data exhibit no diversity, where all synthetic data points have the same category. We explore metrics for measuring diversity later in this section.

Note that in Figure \ref{fig:dim_pred_adult}, there are not four red points in each plot (corresponding to the four continuous features of the dataset). While AUROC for the binary features is always supported on $[0,1]$, the $R^2$ score for real-valued features can be negative if the predictive model is poor, and these values for these missing points fell outside the range of Figure \ref{fig:dim_pred_adult}. These features are explored later in Figure \ref{fig:1-way-hist-adult}, using 1-way marginals as a qualitative metric.
 
Each point in Figure \ref{fig:dim_pred_adult} corresponds to one feature, and the $x$ and $y$ coordinates respectively show the accuracy score on the real data and the synthetic data. Figure \ref{fig:dim_pred_adult} shows that \dpgan achieves considerable performance for all $\eps$ values tested. As expected, its performance degrades as \(\eps\) decreases, but not substantially. DP-WGAN \citep{frigerio2019differentially} performs well at \(\epsilon=1.01\), but its performance degrades rapidly with smaller $\eps$. This is consistent with \cite{frigerio2019differentially}, which uses higher \(\eps=3,7\). \dpgan outperforms DP-VAE \citep{acs2018differentially} across all \(\eps\) values. DP-SYN \citep{abay2018privacy} is able to capture relationships between features well even for small \(\eps\) using this metric.  

\paragraph{1-Way Marginal and Diversity Divergence.} While DP-SYN has good dimension-wise prediction, this does not capture \textit{diversity}, a  concern of bias known for DP-SGD (\cite{bagdasaryan2019differential}). For features with a large majority class and many minority classes, the classifier often predicts the majority class with probability one. We found that for four features, DP-SYN generates data from only one class, whereas all other algorithms do not behave this way for any feature. Lack of diversity in synthetic data can raise fairness concerns, as societal decisions based on the private synthetic data will inevitably ignore minority groups.

We start by turning to 1-way marginal as a method of evaluation, which is able to detect such issues and give another perspective of synthetic data. 
Figure \ref{fig:adult_hist} shows histograms of synthetic data from the four algorithms on two categorical features: marital-status and race. Marital-status distributes more evenly across categories, and DP-VAE,  DP-SYN and \dpgan are able to learn this distribution well. Race, on the other hand, has an 85.5\% majority; DP-SYN only generated data from the majority class, whereas \dpgan and DP-VAE were able to detect the existence of minority classes. DP-WGAN suffered similar issues on the marital status feature. 

Figure \ref{fig:hist-country} shows similar histograms for the native-country feature of the ADULT dataset, where the majority class constitutes $>$85\% of the population. Each of the 41 minority classes in native-country constitute less than 2\% in the original dataset, with most of them weighing less than 0.2\% of the population. \dpgan and DP-WGAN are able to capture some minority classes. DP-VAE was unable to accurately learn the majority structure and significantly overestimates the weight on all minority classes, which greatly impacts the estimate of the majority class. DP-SYN did not capture an existence of any minority classes.

\begin{figure}[h]
        \centering
                 \begin{subfigure}{0.49\textwidth}
                \includegraphics[width = \linewidth]{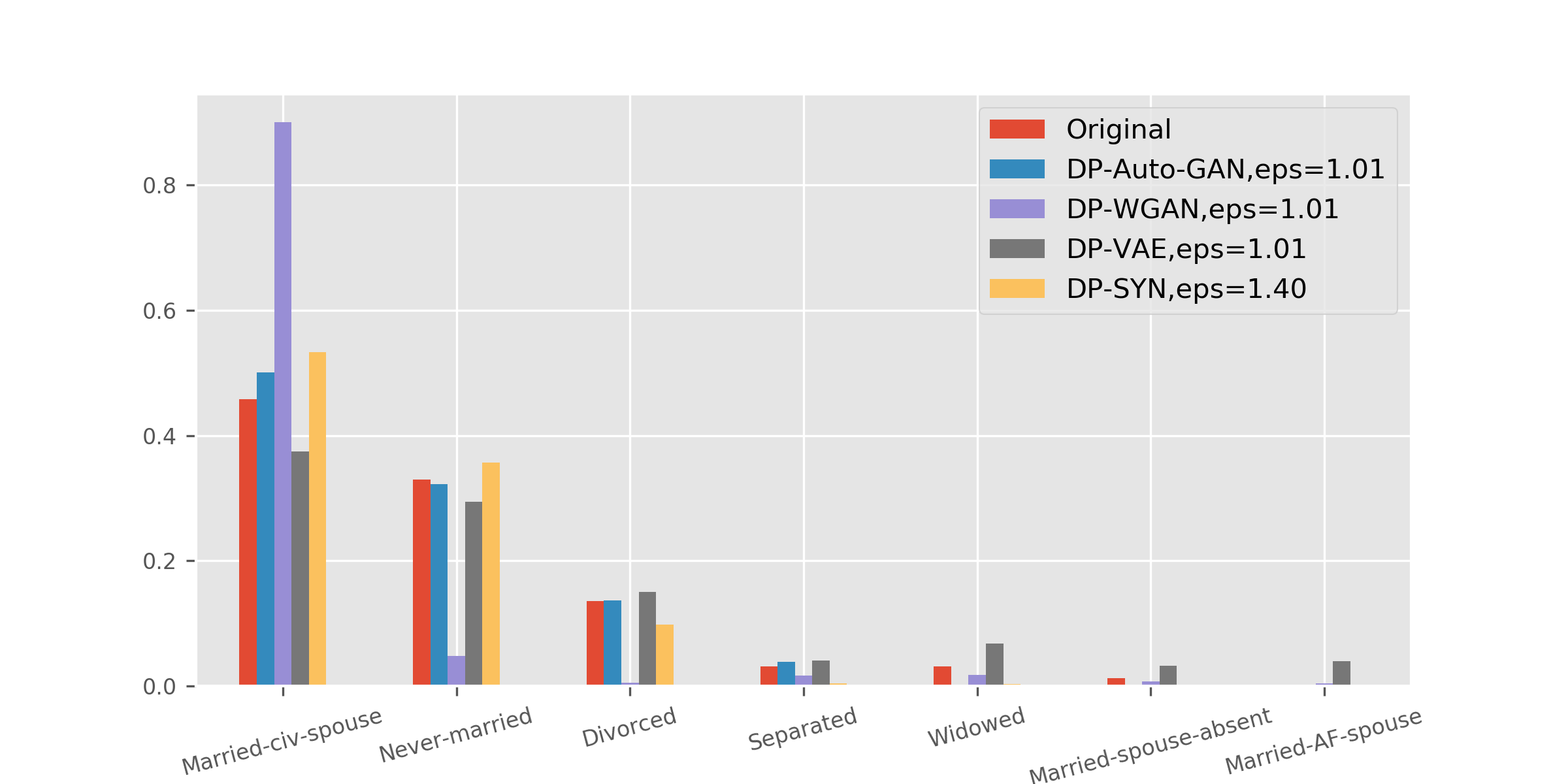}
                \caption{Marital-status}
        \end{subfigure}
        \begin{subfigure}{0.49\textwidth}
                \includegraphics[width = \linewidth]{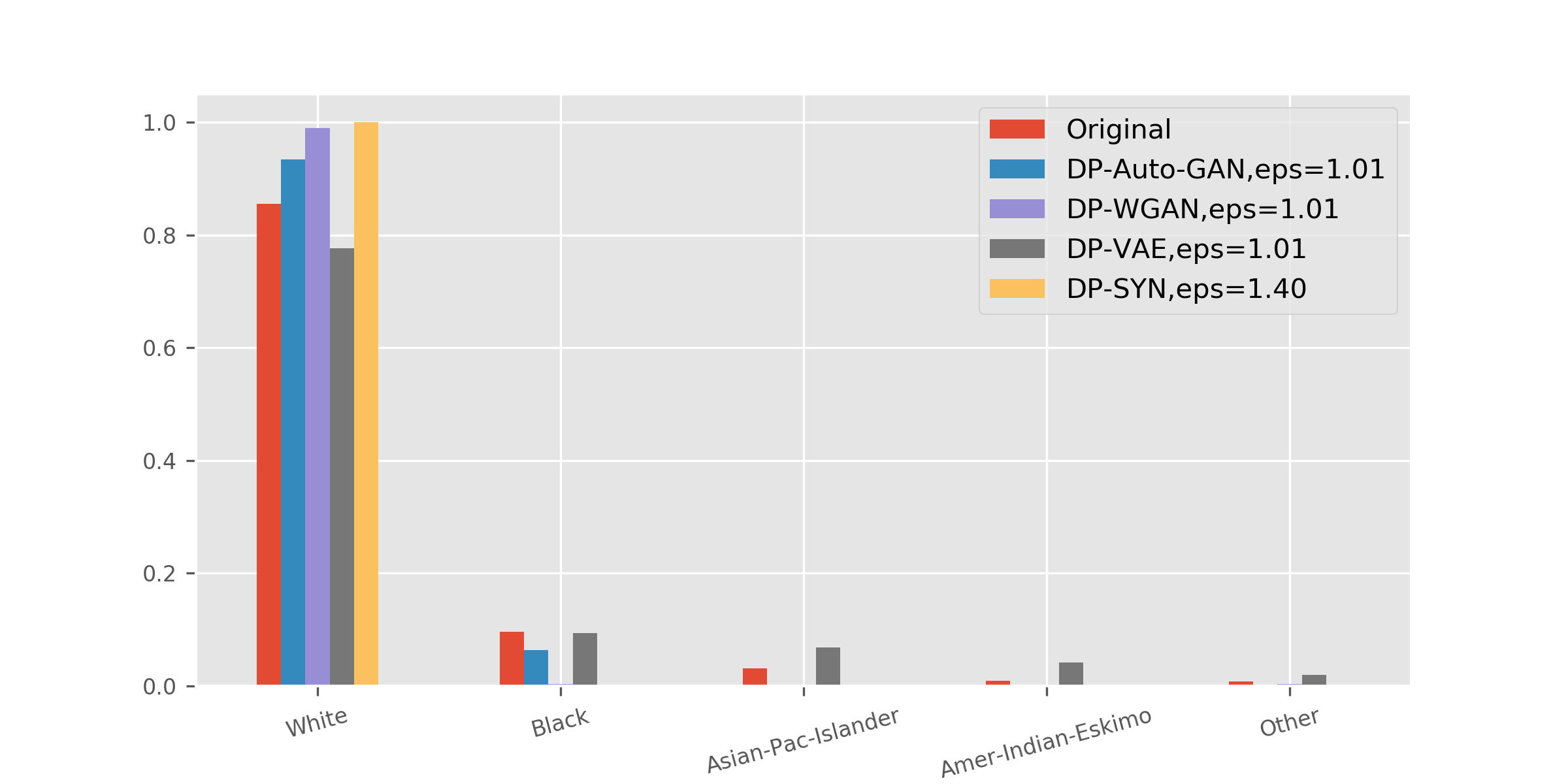}
                \caption{Race}
        \end{subfigure}
                \caption{Histograms of synthetic data generated by different algorithms.    }
                \label{fig:adult_hist}
\end{figure}

\begin{figure}[h]
        \centering
        \begin{subfigure}{0.09\textwidth}
        \includegraphics[width = \linewidth]{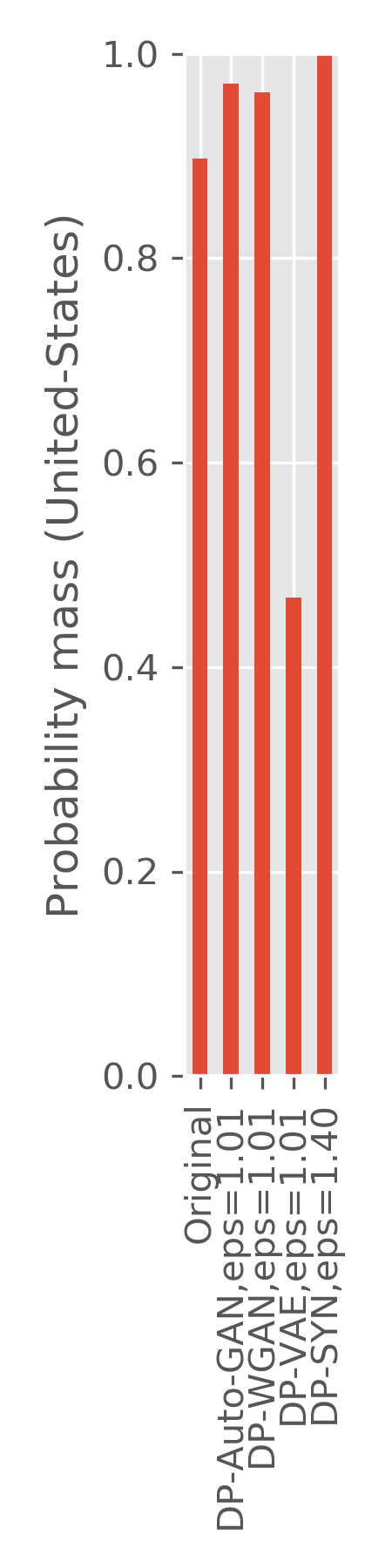}
        \caption{US}
        \end{subfigure} 
        \begin{subfigure}{0.9\textwidth}
        \includegraphics[width = \linewidth]{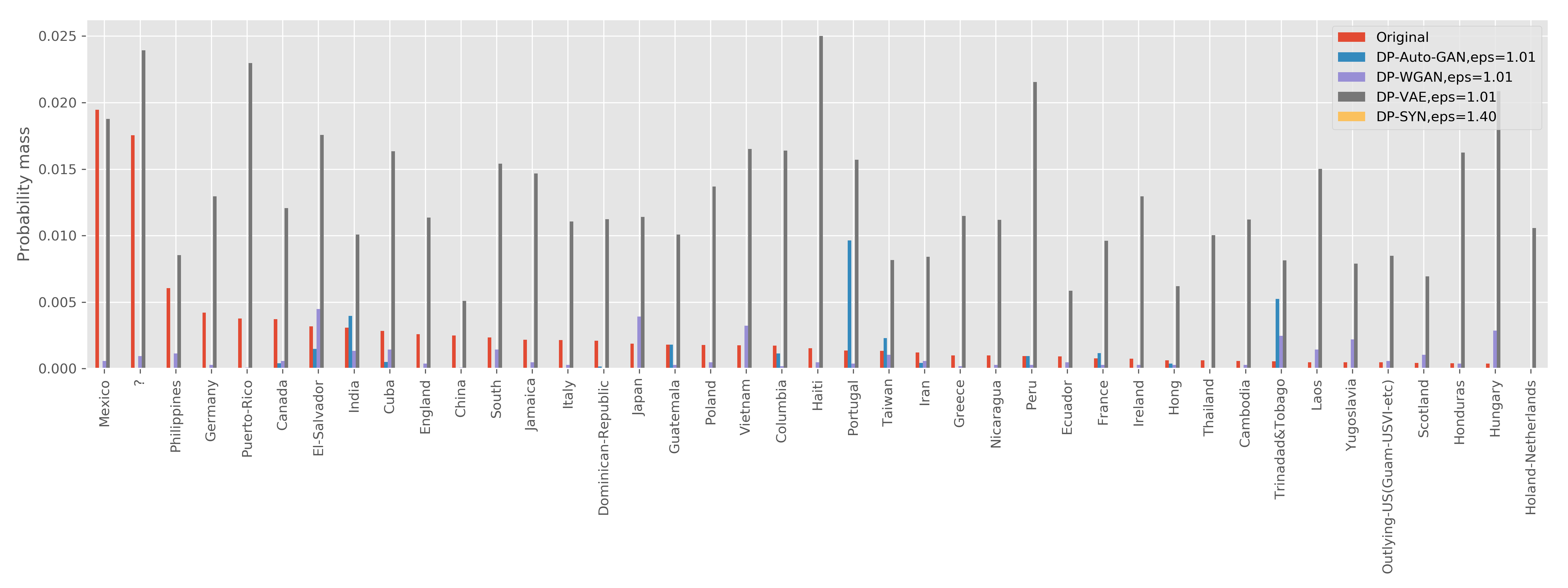}
        \caption{Minority classes}
        \end{subfigure}         
        \caption{Histogram of native-country features of the original data and synthetic data generated by different algorithms.}
\label{fig:hist-country}     
\end{figure}

\begin{table*}[tbh]
        \caption{Diversity measures \(\textrm{JSD}\) and \(D_{KL}^\mu\) on different features of ADULT data and the sum of divergences across all eight applicable  categorical features (All). Recall that \(p_1\) is the maximum probability across all categories of that feature in the original data. Smaller values for the diversity measures imply more diverse synthetic data. For each row (feature), the smallest value for each setting of \(\eps\) is highlighted in bold.}
        \label{tab:diversity-JSD}
        \centering
        \begin{tabular}{c|ccc|ccc|ccc|ccc}\hline
 & \multicolumn{3}{c}{\textbf{DP-auto-GAN}} & \multicolumn{3}{c}{\textbf{DP-WGAN}} & \multicolumn{3}{c}{\textbf{DP-VAE}} & \multicolumn{3}{c}{\textbf{DP-SYN}} \\
\(\eps\) \emph{values} & \emph{0.36} & \emph{0.51} & \emph{1.01}  & \emph{0.36} & \emph{0.51} & \emph{1.01} & \emph{0.36} & \emph{0.51} & \emph{1.01} & \emph{0.50} & \emph{0.80} & \emph{1.40}\\\hline \multicolumn{13}{c}{JSD Diversity Measure} \\
\hline \hline
Marital & .025 & .043 & \textbf{.014} & .119 & .624 & .136 & .139 & .043 & .021 & \textbf{.017} & \textbf{.013} & .017 \\ \hline
Race & \textbf{.021} & \textbf{.014} & .016 & .081 & .053 & .040 & .095 & .031 & \textbf{.011} & .053 & .053 & .053 \\ \hline
All & {0.33} & \textbf{0.23} & \textbf{0.19} & 1.29 & 2.41 & 0.73 & 0.80 & 0.44 & 0.23 & \textbf{0.25} & 0.27 & 0.28 \\ \hline
\multicolumn{13}{c}{\(D_{KL}^\mu\) Diversity Measure, with \(\mu=e^{-\frac{1}{1-p_1}}\)} \\
\hline \hline
Marital & {.019} & .053 & \textbf{.005} & .165 & 1.16 & .290 & .207 & .044 & .017 & \textbf{.017} & \textbf{.011} & .012 \\\hline
Race & \textbf{.125} & \textbf{.064} & .089 & .262 & .465 & .277 & .315 & .102 & \textbf{.038} & .465 &.465 & .465 \\\hline
All & \textbf{0.81} & \textbf{0.48} & \textbf{0.53} & 5.26 & 6.39 & 1.53 & 2.52 & 1.17 & 0.58 & 0.99 & 1.00 & 1.02 \\\hline
\end{tabular}  
\end{table*}

A standard measure for diversity between the original distribution \(P\) and synthetic distribution \(Q\) includes Kullback-Leibler (KL) divergence \(D_{KL}(P||Q)\). Under differential privacy the support of \(P\)  is a private information,  so the private synthetic data inherently cannot ensure its support to align with the original data. This makes \(D_{KL}(P||Q)\) and \(D_{KL}(Q||P)\) (and related metrics such as Inception score \citep{salimans2016improved}) undefined. One alternative is Jensen--Shannon divergence (JSD) \citep{jeffreys1946invariant,GAN14}: \(\mathrm{JSD}(P||Q):=\frac12D_{KL}(P||Q)+\frac12D_{KL}(Q||P)\) which is always defined and nonnegative. We use this metric to evaluate the diversity of the synthetic data. 

In addition, we propose another diversity measure, \textit{\(\mu\)-smoothed Kullback-Leibler (KL) divergence}  between the original distribution \(P\) and synthetic distribution \(Q\): 
$$D_{KL}^\mu(P||Q):=\textstyle\sum_{x \in supp(P)} (P(x)+\mu)\log(\frac{P(x)+\mu}{Q(x)+\mu}),$$ for small \(\mu>0\).  \(D_{KL}^\mu\) maintains the desirable property that \(D_{KL}^\mu\geq0\) and is zero if and only if \(P=Q\). Smaller \(\mu\) implies stronger penalties for missing minority categories in the synthetic data, and the penalty approaches \(\infty\) as \(\mu\rightarrow0\). This allows \(\mu\) as a knob to adjust the penalty necessary in private setting. 
In our settings, we are concerned with one category dominating in the original distribution \(P\) (e.g., as in Figure \ref{fig:adult_hist}), say \(P=(p_1,\ldots,p_k)\) with  high \(p_1=\max_ip_i\), and when the synthetic distribution \(Q=(q_1,\ldots,q_k)=(1,0,\ldots,0)\) supports only one single category. Then, we have 
$D_{KL}^\mu(P||Q)=\sum_{i=2}^k (p_i+\mu)\log(p_i+\mu)-(p_1+\mu)\log(\frac{p_1+\mu}{1+\mu})-(\sum_{i=2}^k (p_i+\mu))\log\mu$. 
For small \(\mu>0\), 
\(\log\mu\) dominates \(\log(p_i+\mu)\) and \(\log(\frac{p_1+\mu}{1+\mu})\), so the dominating term is \(\left(\sum_{i=2}^k (p_i+\mu)\right)\log\mu\approx(1-p_1)\log\mu\). Hence, we use \(\mu=e^{-\frac{1}{1-p_1}}\) so that this term is a constant, thus normalizing scores across features.

Table \ref{tab:diversity-JSD} reports the diversity divergences of all four algorithms for marital-status, race, and the sum across  eight  categorical features. One out of the nine categorical features are not used due to a difference in preprocessing of DP-SYN; see Appendix \ref{app.adulttrain} for details. Both measures are able to detect the lost of diversity in DP-SYN in race, and identify DP-auto-GAN as generating more diverse data than the prior methods for most features and $\eps$ values. 

We note that predictive scores may also not be appropriate for continuous features when no good classifier exists to predict the feature, even in the original dataset. In our setting, we found three continuous features with \(R^2\) scores close to zero even with more complex regression models, and with negative \(R^2\) scores on synthetic data, which is not meaningful. For those features, 1-way marginals (histograms, explored next) are preferred to prediction scores. 

In general, we suggest that an evaluation of synthetic data should be based on probability measures (distributions of data) and not predictive scores of models. Models may be a source of not only unpredictability and instability, but also of bias and unfairness.


\paragraph{Histograms for Continuous Features with Small \(R^2\) Scores.} For three continuous features in the ADULT dataset (capital gain, capital loss, and hours worked per week), we were not able to find a regression model with good fit (as measured by $R^2$ score) for the latter three features (capital gain, capital loss, and hours worked per week) in terms of the other features even on the real data. We attempted several different approaches, ranging from simple regression models such as lasso to complex models such as neural networks, and all had a low $R^2$ score on both the real and synthetic data. The capital gain and capital loss attributes are inherently hard to predict because the data are sparse (mostly zero) in these attributes.

Since the $R^2$ scores did not prove to be a good metric for these features, we instead plotted 1-way feature marginal histograms for each of these three remaining features to check whether the marginal distribution was learned correctly. These 1-way histograms for \dpgan are shown in Figure \ref{fig:1-way-hist-adult}.  The figure shows that \dpgan identifies the marginal distribution of capital gain and capital loss quite well, and it does reasonably well on the hours-per-week feature.

\begin{figure}[h]
        \centering
        \begin{subfigure}{0.24\textwidth}
        \includegraphics[width = \linewidth]{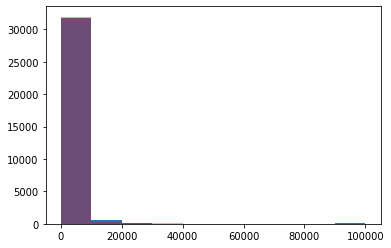}
        \caption{Capital gain, $\eps = \infty$}
        \end{subfigure}
        \begin{subfigure}{0.24\textwidth}
        \includegraphics[width = \linewidth]{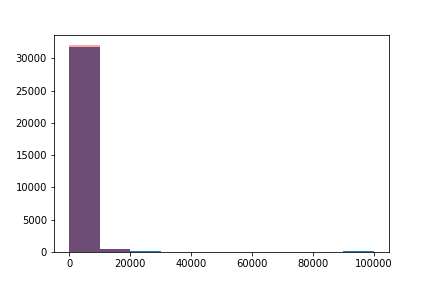}
        \caption{$\eps = 1.01$}
        \end{subfigure}
        \begin{subfigure}{0.24\textwidth}
        \includegraphics[width = \linewidth]{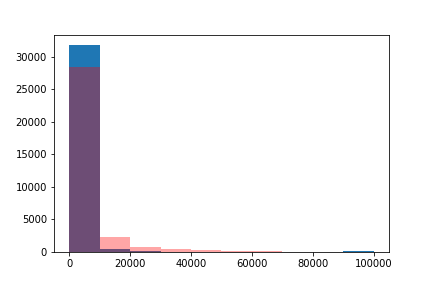}
        \caption{$\eps = 0.51$}
        \end{subfigure}
        \begin{subfigure}{0.24\textwidth}
        \includegraphics[width = \linewidth]{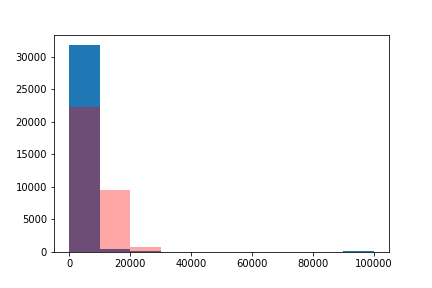}
        \caption{$\eps = 0.36$}
        \end{subfigure}
                \begin{subfigure}{0.24\textwidth}
        \includegraphics[width = \linewidth]{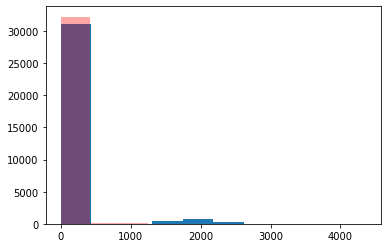}
        \caption{Capital loss, $\eps = \infty$}
        \end{subfigure}
        \begin{subfigure}{0.24\textwidth}
        \includegraphics[width = \linewidth]{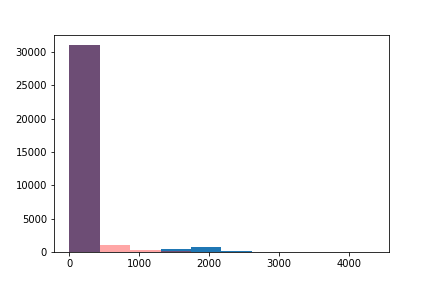}
        \caption{$\eps = 1.01$}
        \end{subfigure}
        \begin{subfigure}{0.24\textwidth}
        \includegraphics[width = \linewidth]{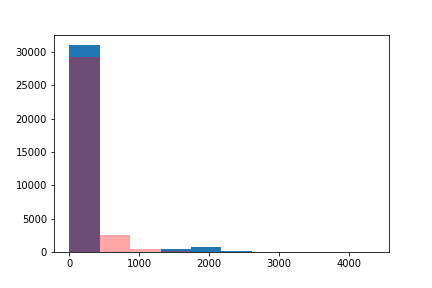}
        \caption{$\eps = 0.51$}
        \end{subfigure}
        \begin{subfigure}{0.24\textwidth}
        \includegraphics[width = \linewidth]{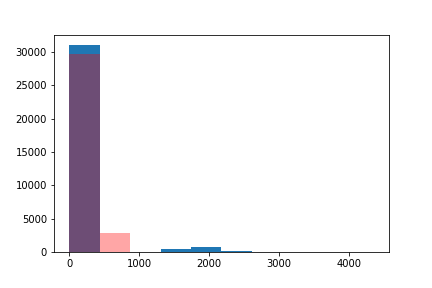}
        \caption{$\eps = 0.36$}
        \end{subfigure}
                \begin{subfigure}{0.24\textwidth}
        \includegraphics[width = \linewidth]{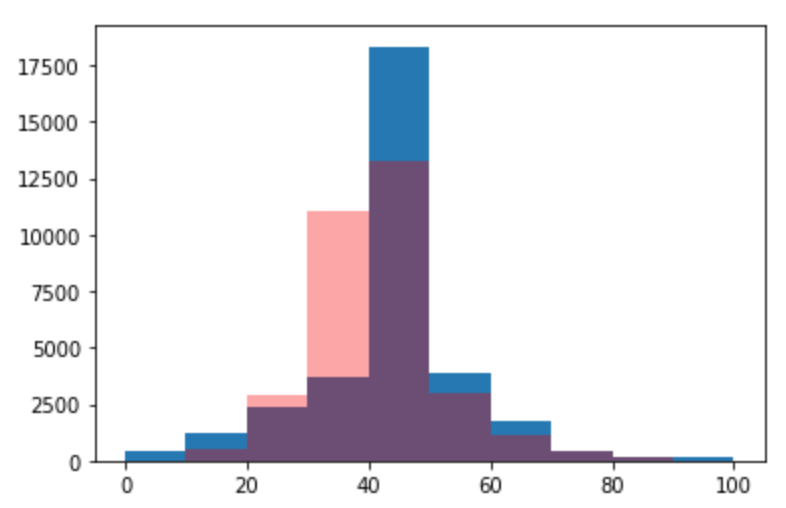}
        \caption{Hours/week, $\eps = \infty$}
        \end{subfigure}
        \begin{subfigure}{0.24\textwidth}
        \includegraphics[width = \linewidth]{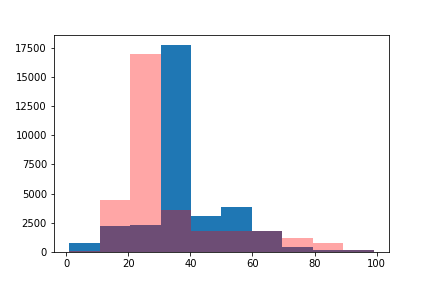}
        \caption{$\eps = 1.01$}
        \end{subfigure}
        \begin{subfigure}{0.24\textwidth}
        \includegraphics[width = \linewidth]{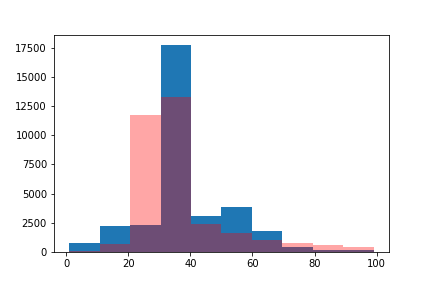}
        \caption{$\eps = 0.51$}
        \end{subfigure}
        \begin{subfigure}{0.24\textwidth}
        \includegraphics[width = \linewidth]{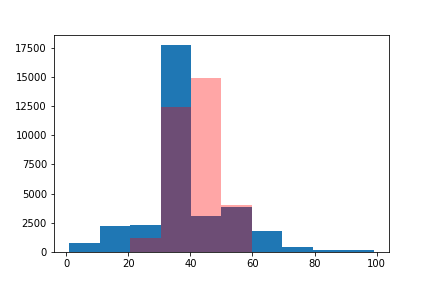}
        \caption{$\eps = 0.36$}
        \end{subfigure}
%
%
%
        \caption{1-way histogram for different values of $\eps$. Each pair of consecutive rows correspond to capital gain, capital loss and weekly work-hours, respectively. Blue corresponds to the histogram of the real dataset, and red corresponds to the histogram of the synthetic dataset generated by DP-auto-GAN with the indicated $\eps$.  The overlap of both histograms is purple.}
        \label{fig:1-way-hist-adult}    
\end{figure} 

\begin{table*}[tbh]
\centering
\caption{Accuracy scores of random forest prediction on salary feature of ADULT dataset by \dpgan and DP-WGAN in  \cite{frigerio2019differentially} over different privacy parameter $\eps$.}
\label{tab:acc_score}
        \begin{tabular}{|c|c|c|c|c|c|c|c|} \hline
                $\eps$ value & Real dataset & $\infty$  & 7 & 3 & 1.01 & 0.51 & 0.36\\ \hline
                DP-auto-GAN Accuracy (ours) & 84.53\%& 79.18\% &  & & 79.19\% & 78.68\% & 74.66\% \\    \hline
              DP-WGAN  Accuracy & 77.2\%& 76.7\% & 76.0\%  & 75.3\% &  &  &  \\    \hline
        \end{tabular}
\end{table*}

\paragraph{Random Forest Prediction Scores.} Following   \cite{frigerio2019differentially}, we also evaluate the quality of synthetic data by the accuracy of a random forest classifier to predict the label ``salary'' feature. In particular, we train a random forest classifier on synthetic data and test on the  holdout original data, and report the $F_1$ accuracy score.
The aim is that a classifier trained on synthetic data should report a similar accuracy score as the one trained on the original data. 

In Table \ref{tab:acc_score}, we report the accuracy of  synthetic datasets generated by \dpgan and DP-WGAN \citep{frigerio2019differentially}.  The results reported in \cite{frigerio2019differentially} use $\eps=3,7,\infty$, whereas our algorithms used parameter values $\eps=0.36, 0.51, 1.01, \infty$, a significant improvement in privacy. We see that our accuracy guarantees are higher than those of \cite{frigerio2019differentially} with smaller $\eps$ values, and DP-auto-GAN achieved higher accuracy in the non-private setting. We note that part of the accuracy discrepancy because \dpgan can handle mixed-typed features, whereas DP-WGAN only handles categorical features. 




\begin{figure}[tbh]
        \centering
        \begin{subfigure}{0.3\textwidth}
        \centering
        \includegraphics[width = \linewidth]{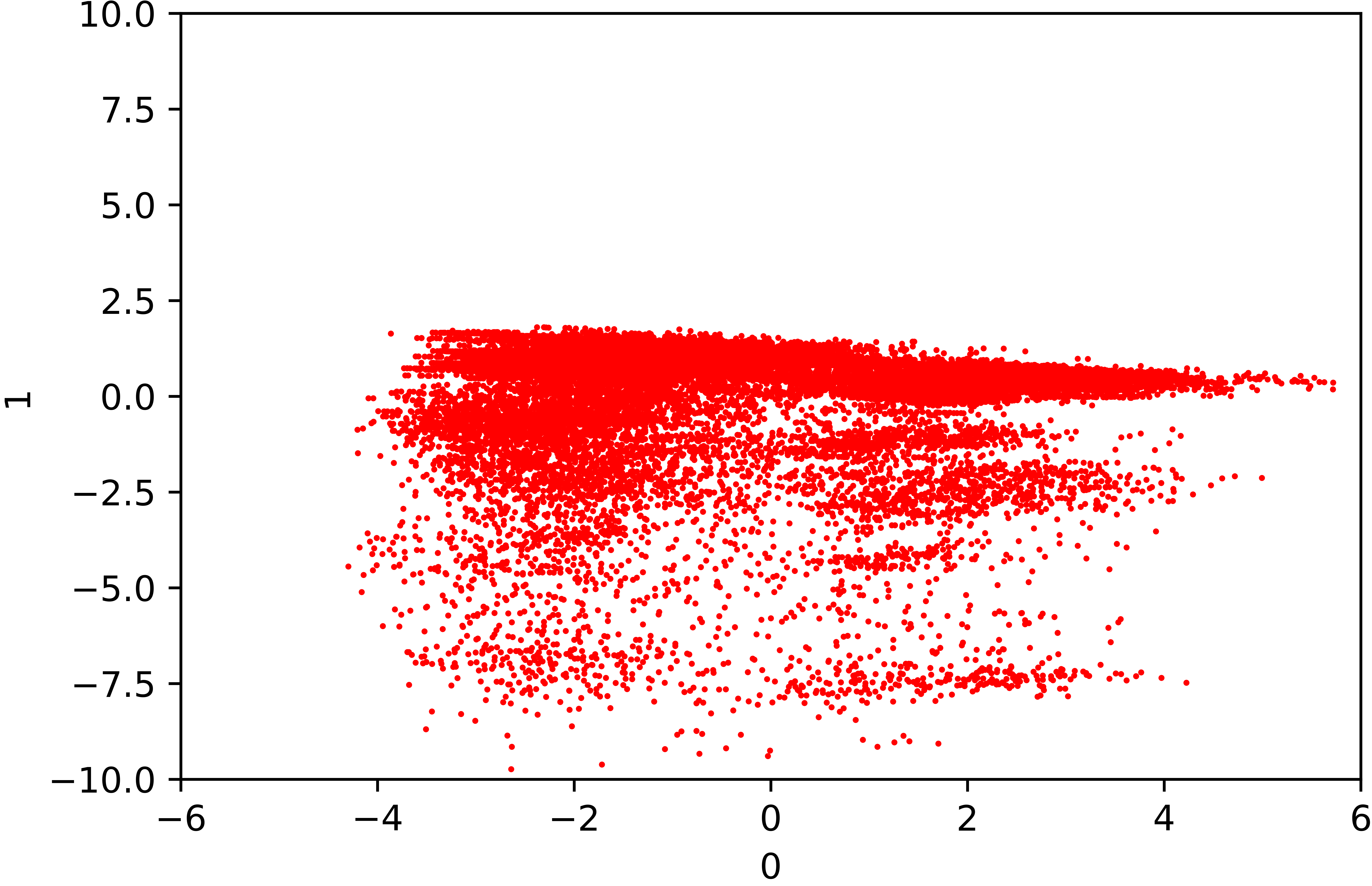}
        \caption{Original data}
        \end{subfigure} 
        
        ~
        
        \begin{subfigure}{0.24\textwidth}
        \includegraphics[width = \linewidth]{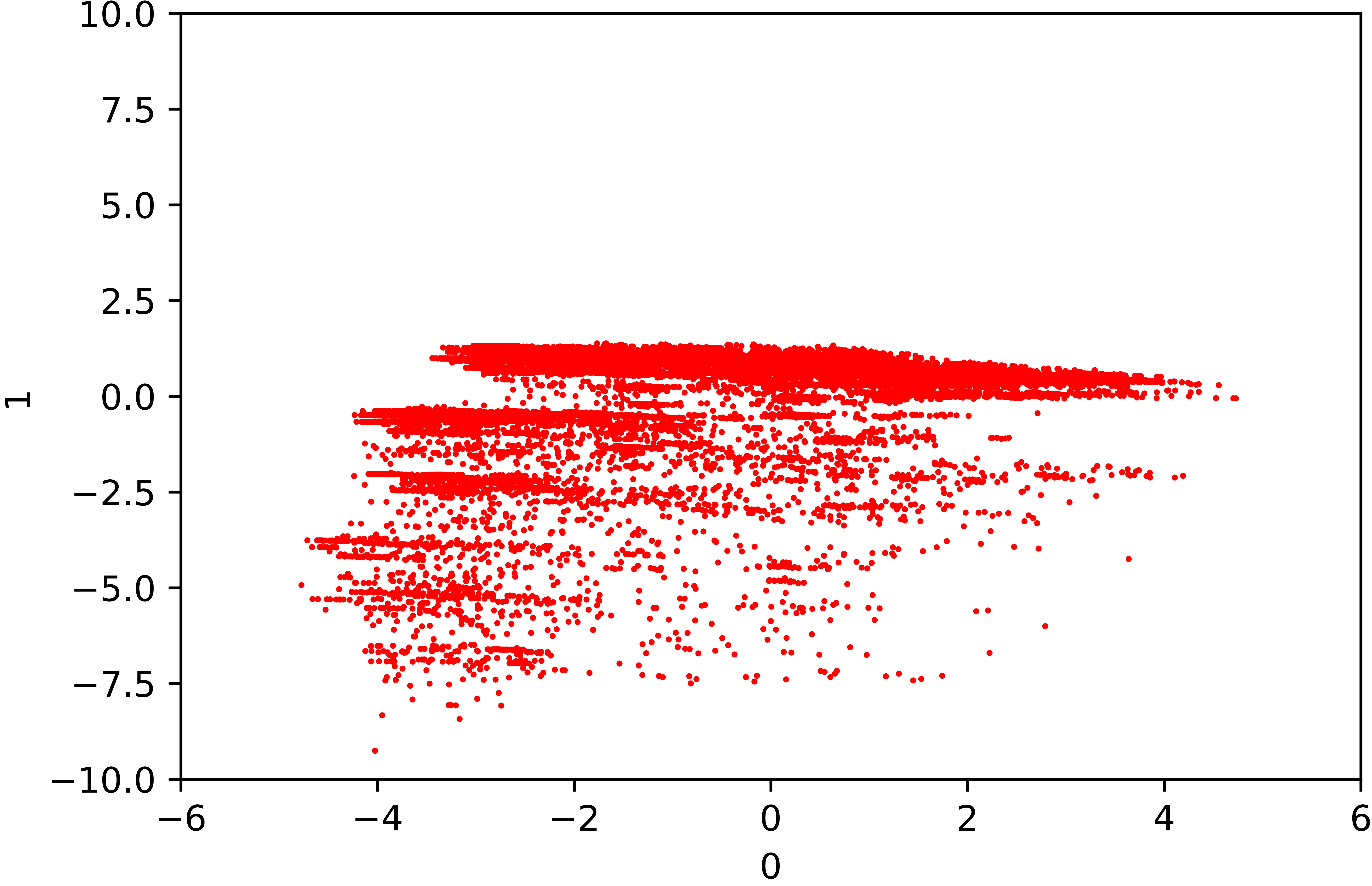}
        \caption{DP-auto-GAN,  $\eps = \infty$}
        \end{subfigure} 
        \begin{subfigure}{0.24\textwidth}
        \includegraphics[width = \linewidth]{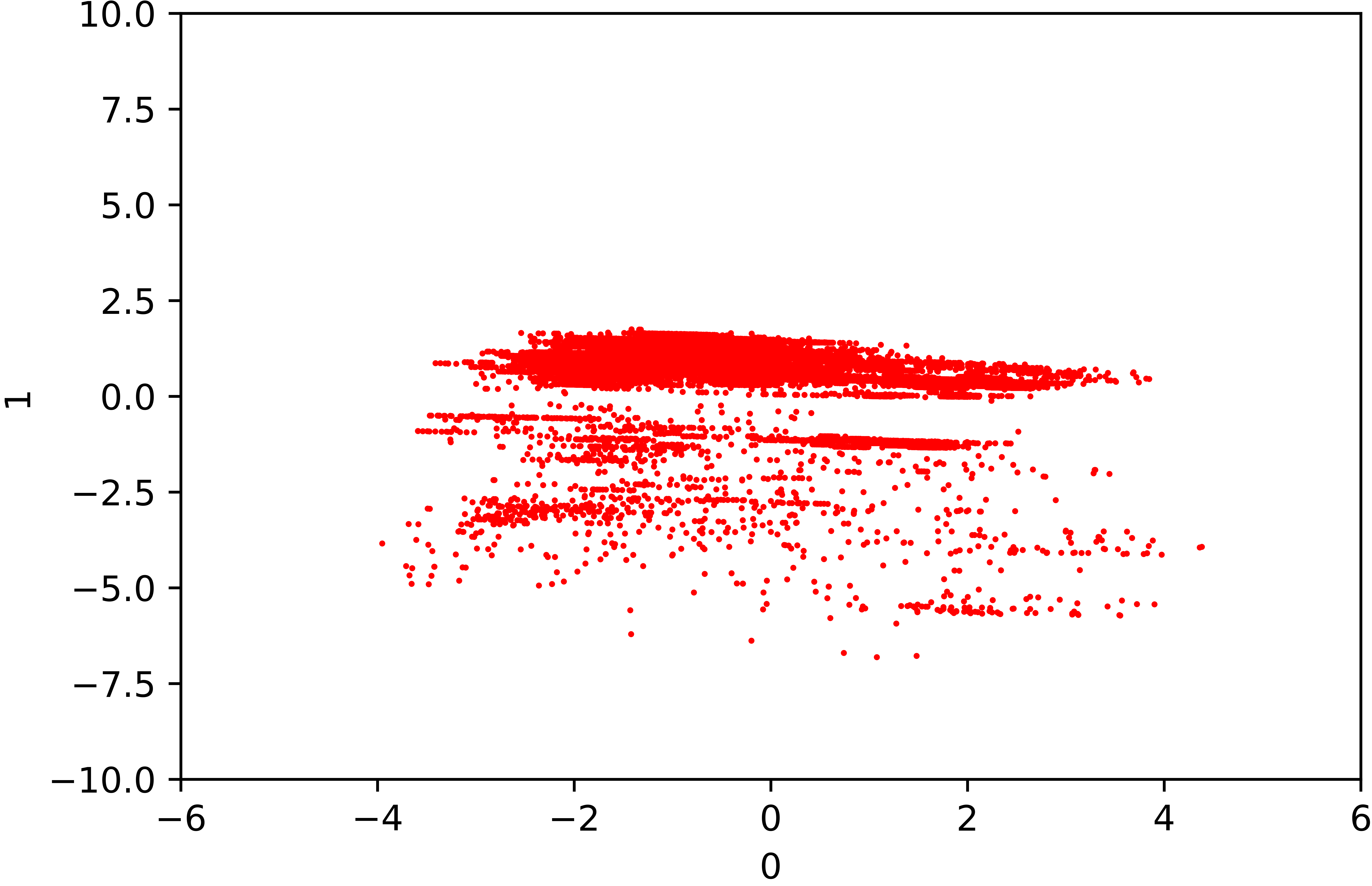}
        \caption{$\eps = 1.01$}
        \end{subfigure}
        \begin{subfigure}{0.24\textwidth}
        \includegraphics[width = \linewidth]{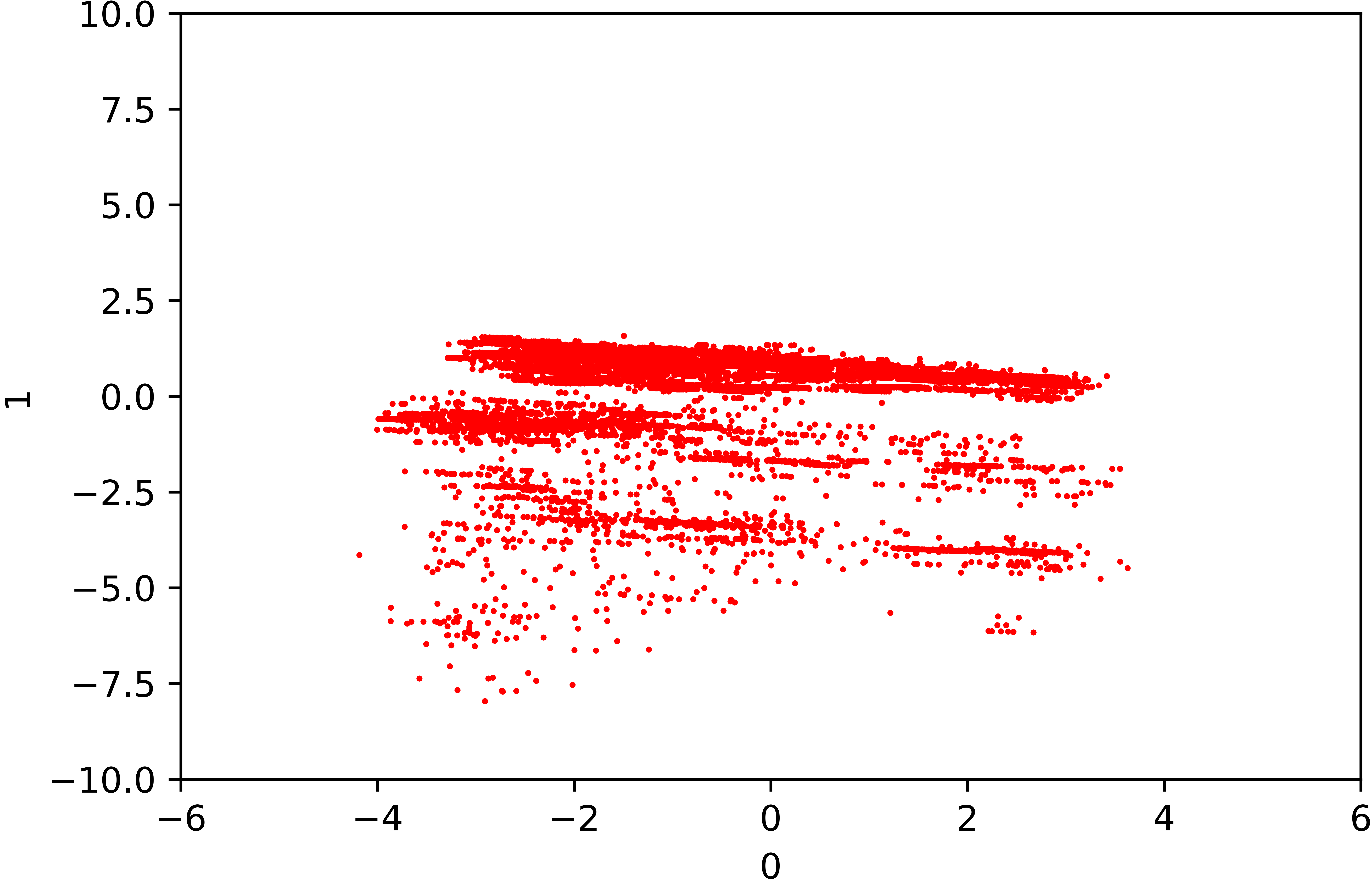}
        \caption{$\eps = 0.51$}
        \end{subfigure}
        \begin{subfigure}{0.24\textwidth}
        \includegraphics[width = \linewidth]{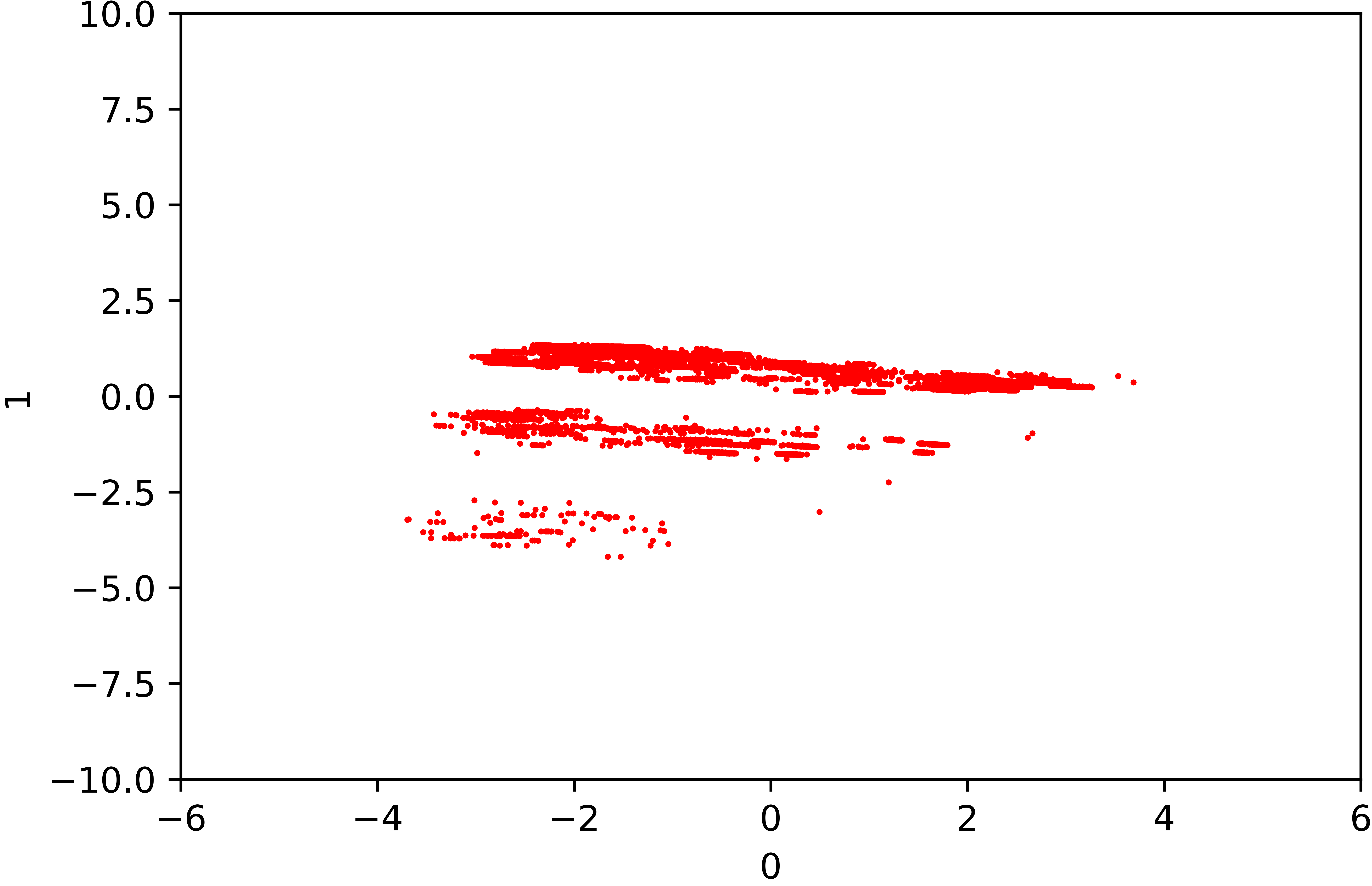}
        \caption{$\eps = 0.36$}
        \end{subfigure} 
        \begin{subfigure}{0.24\textwidth}
        \includegraphics[width = \linewidth]{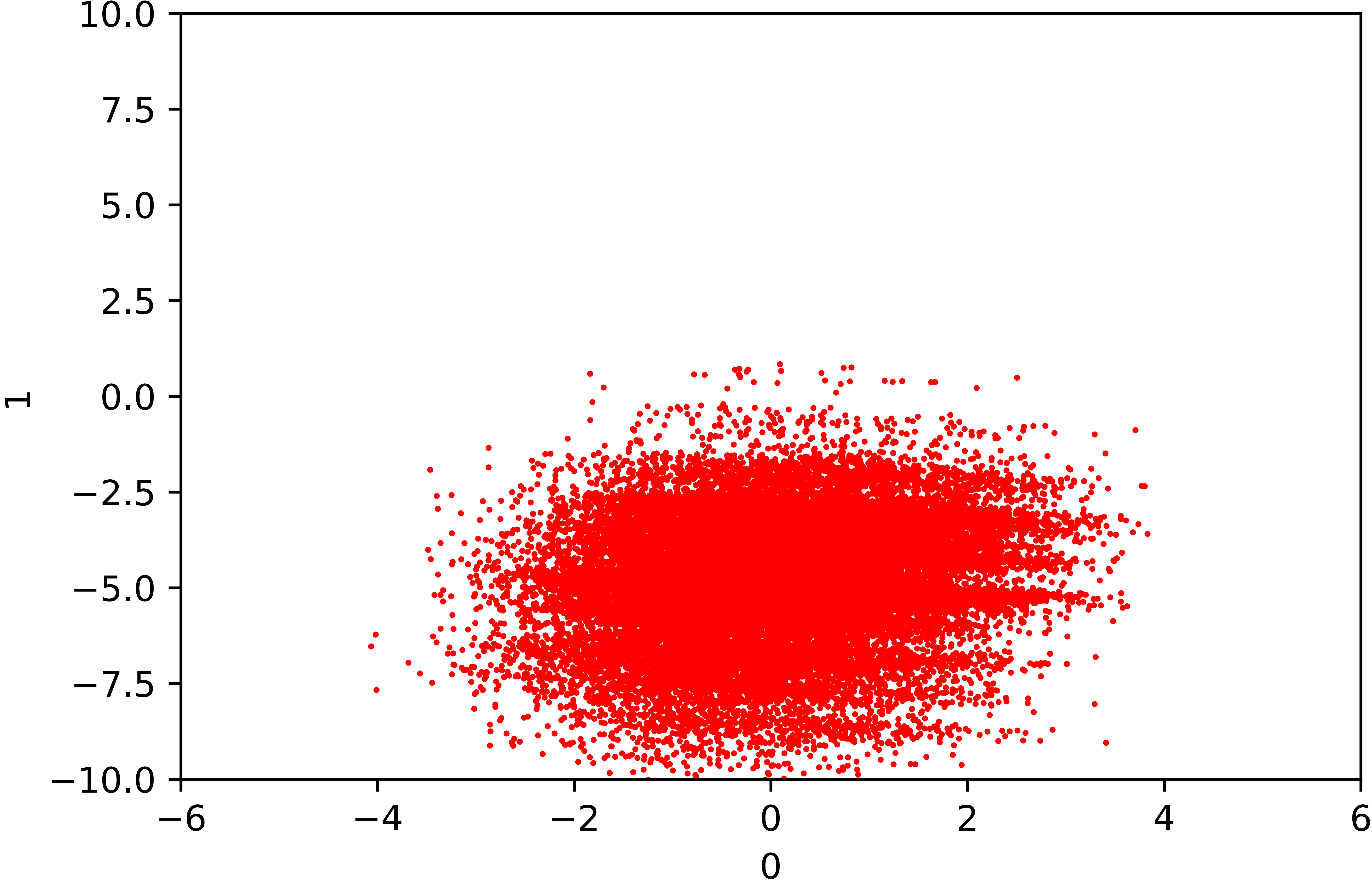}
        \caption{DP-VAE,  $\eps = \infty$}
        \end{subfigure} 
        \begin{subfigure}{0.24\textwidth}
        \includegraphics[width = \linewidth]{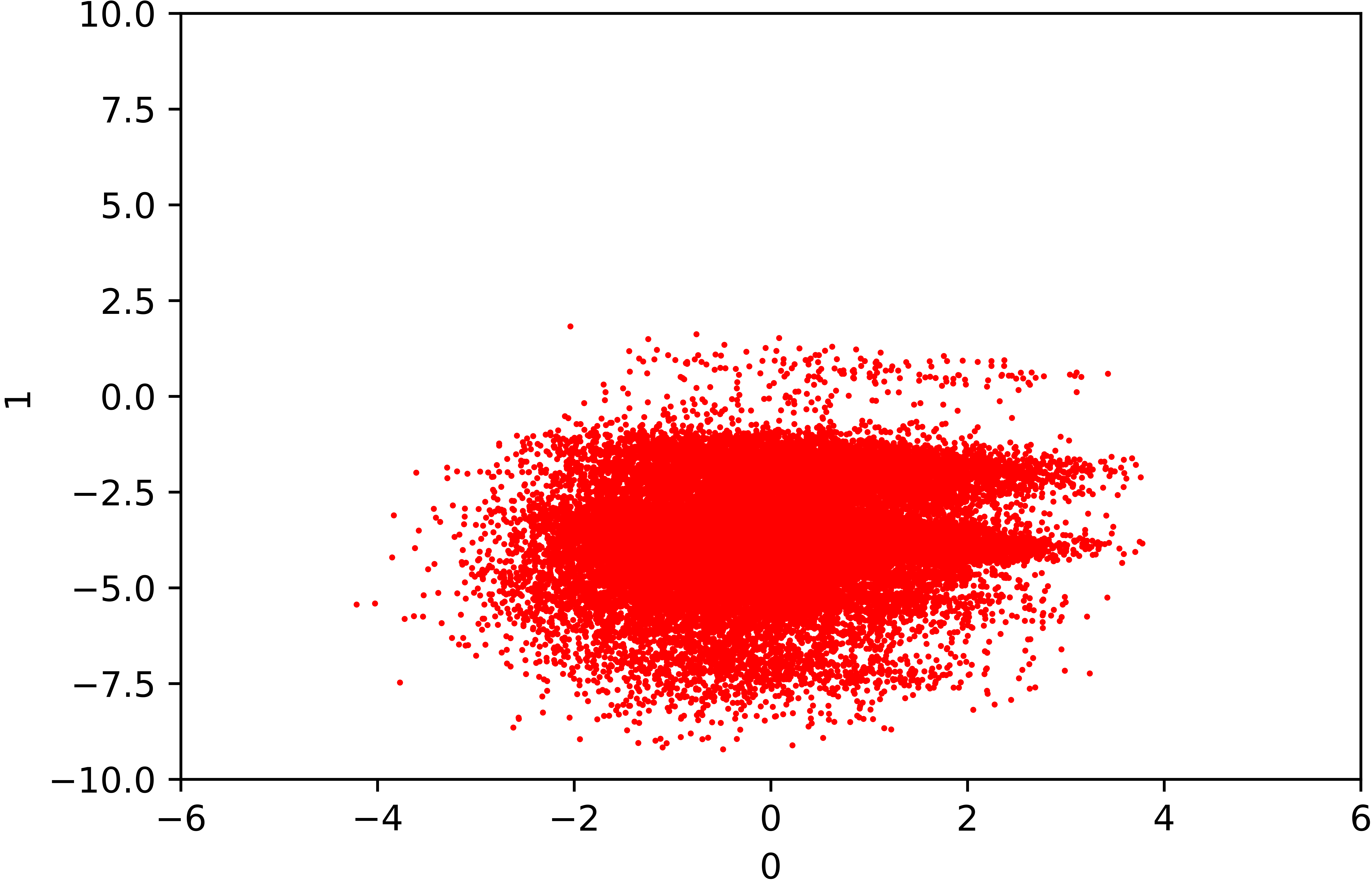}
        \caption{$\eps = 1.01$}
        \end{subfigure}
        \begin{subfigure}{0.24\textwidth}
        \includegraphics[width = \linewidth]{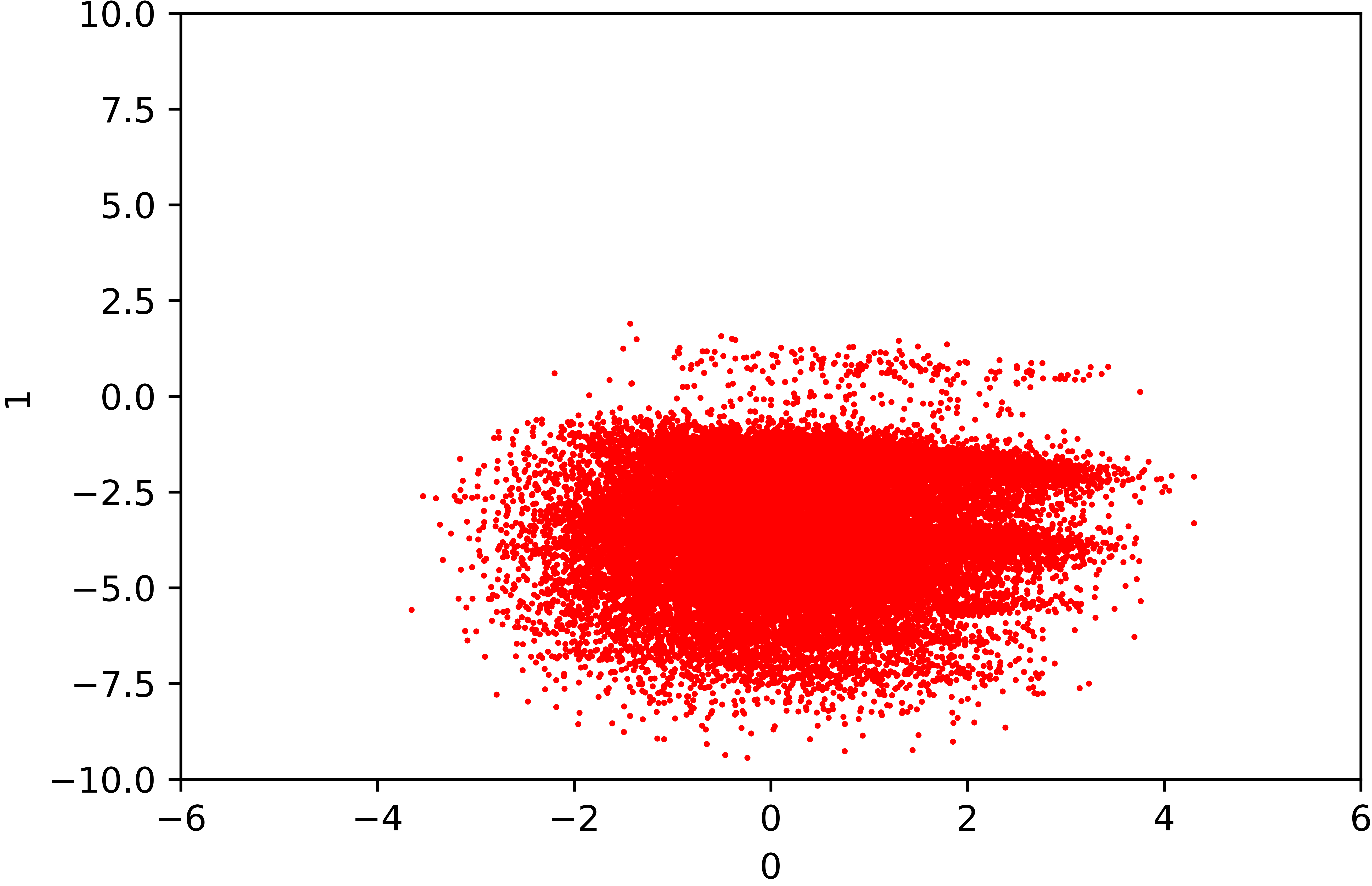}
        \caption{$\eps = 0.51$}
        \end{subfigure}
        \begin{subfigure}{0.24\textwidth}
        \includegraphics[width = \linewidth]{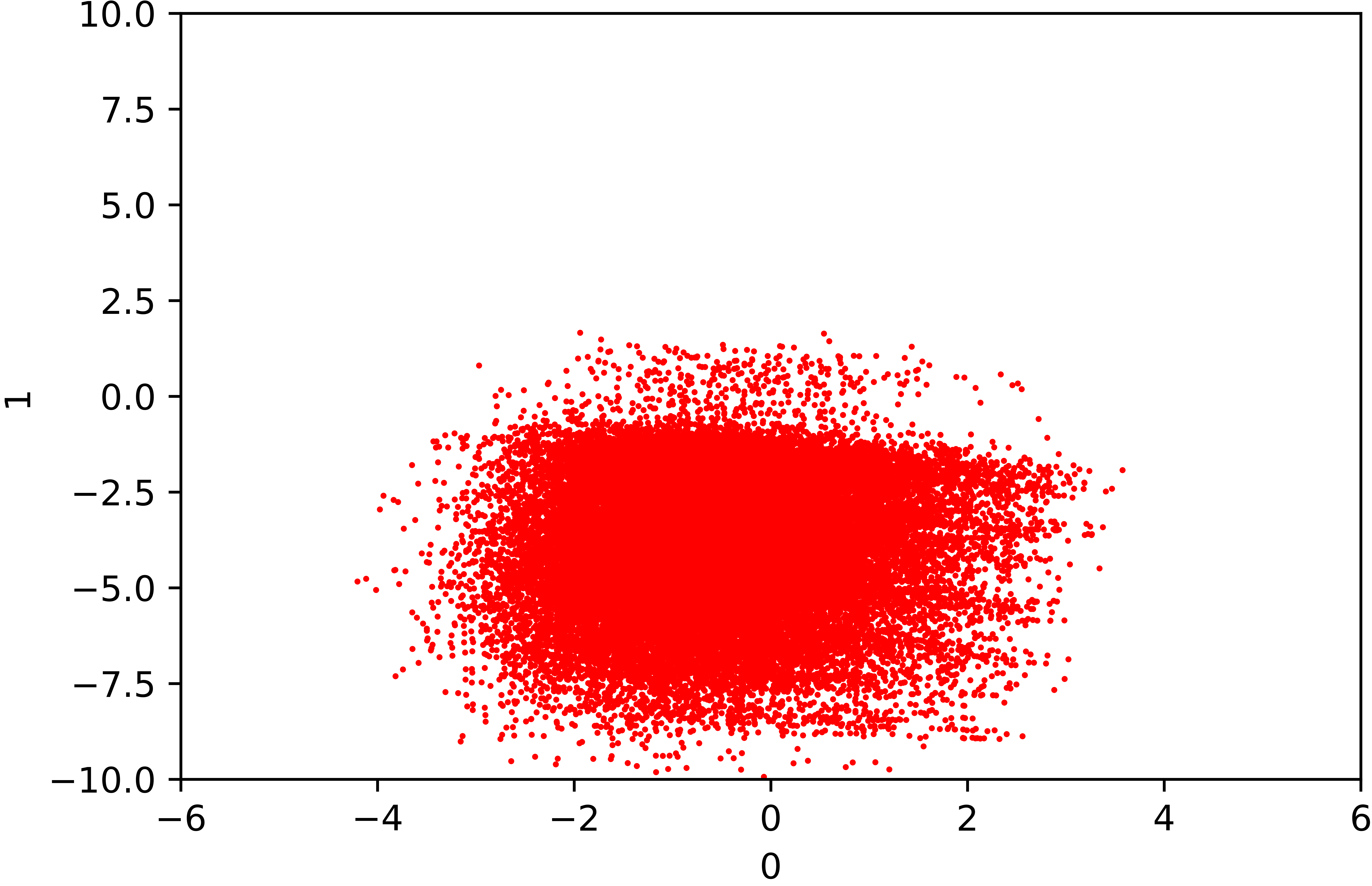}
        \caption{$\eps = 0.36$}
        \end{subfigure}         
        \begin{subfigure}{0.24\textwidth}
        \includegraphics[width = \linewidth]{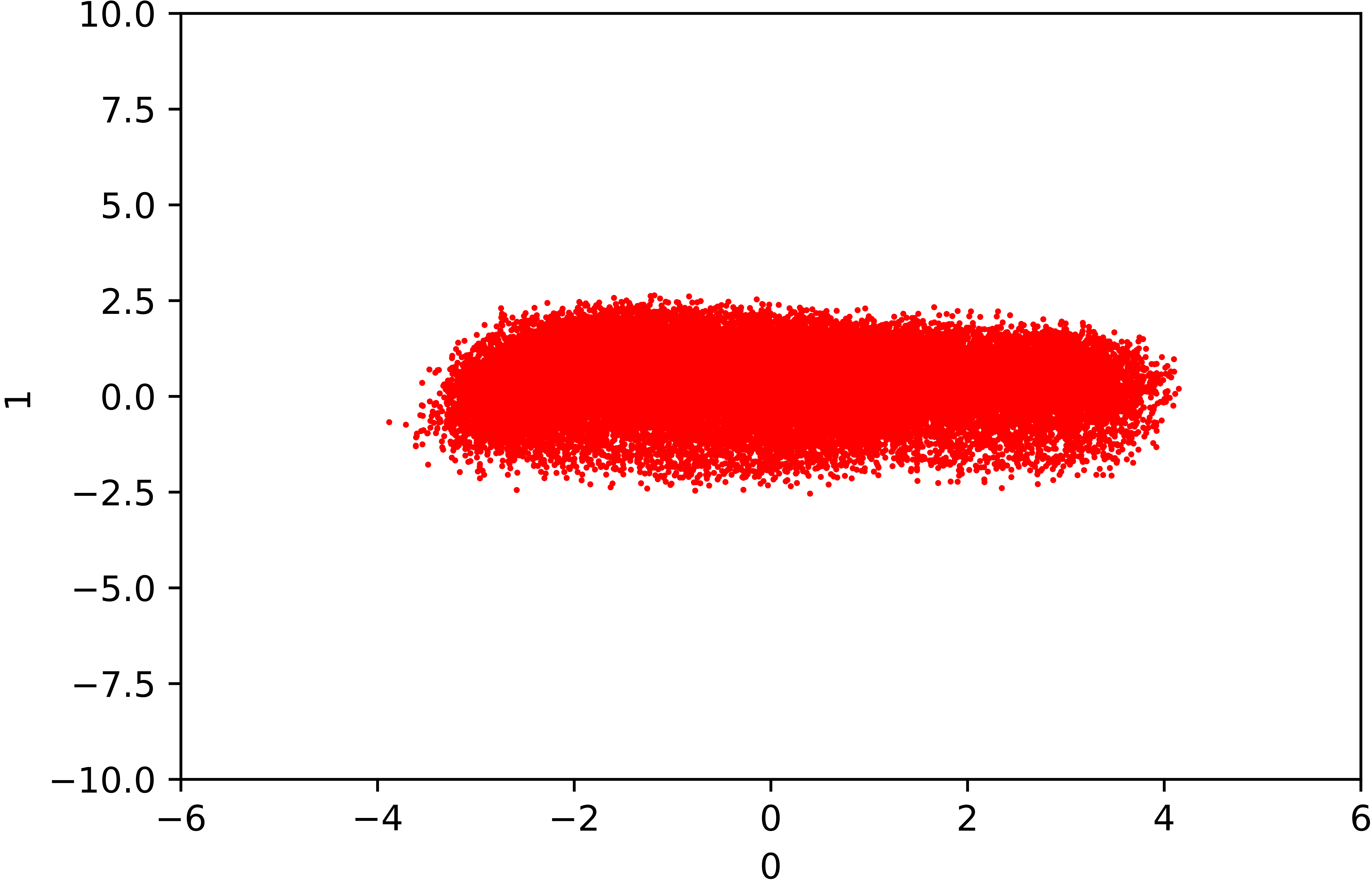}
        \caption{DP-SYN,  $\eps = \infty$}
        \end{subfigure} 
        \begin{subfigure}{0.24\textwidth}
        \includegraphics[width = \linewidth]{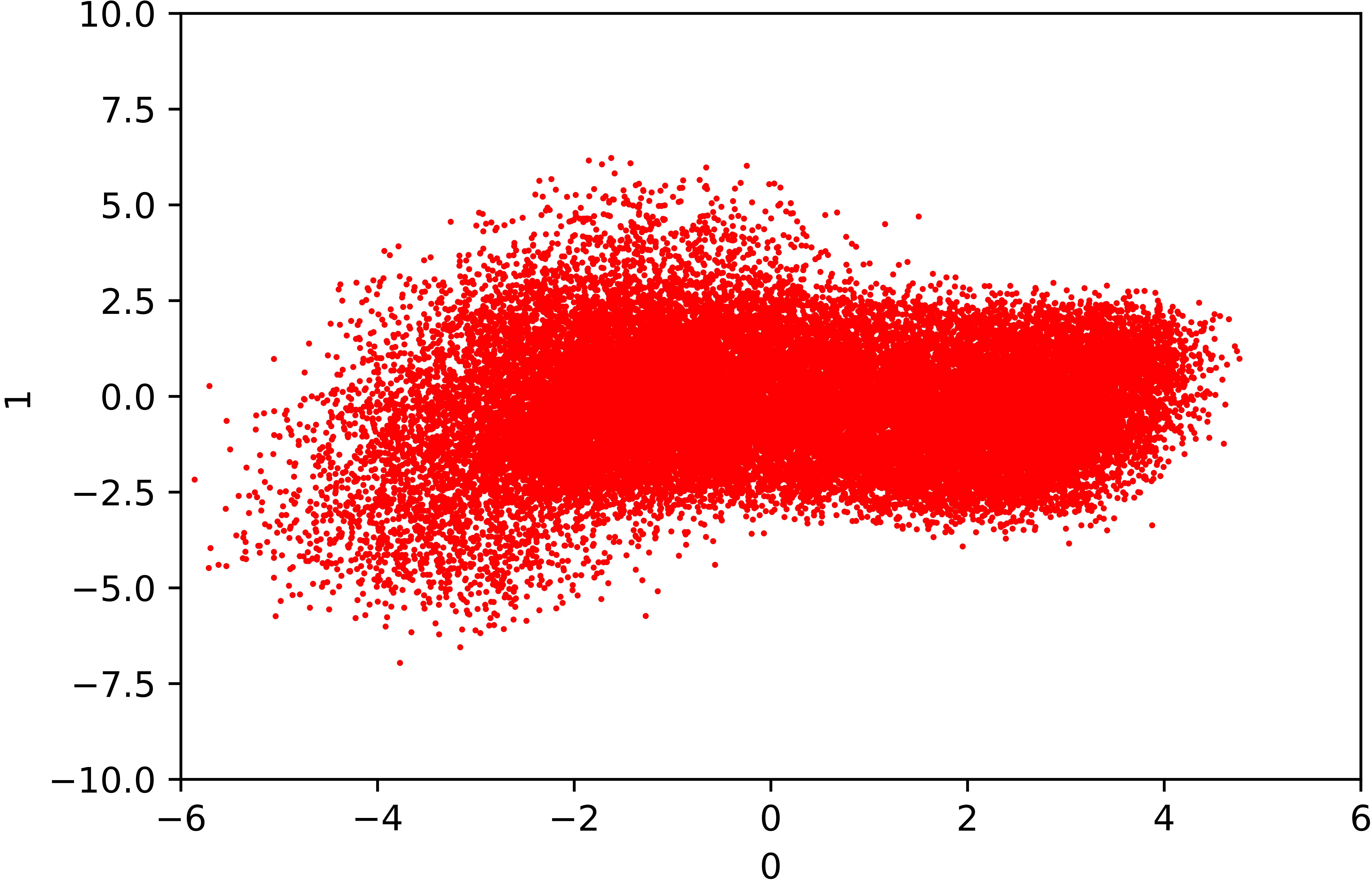}
        \caption{$\eps = 1.40$}
        \end{subfigure}
        \begin{subfigure}{0.24\textwidth}
        \includegraphics[width = \linewidth]{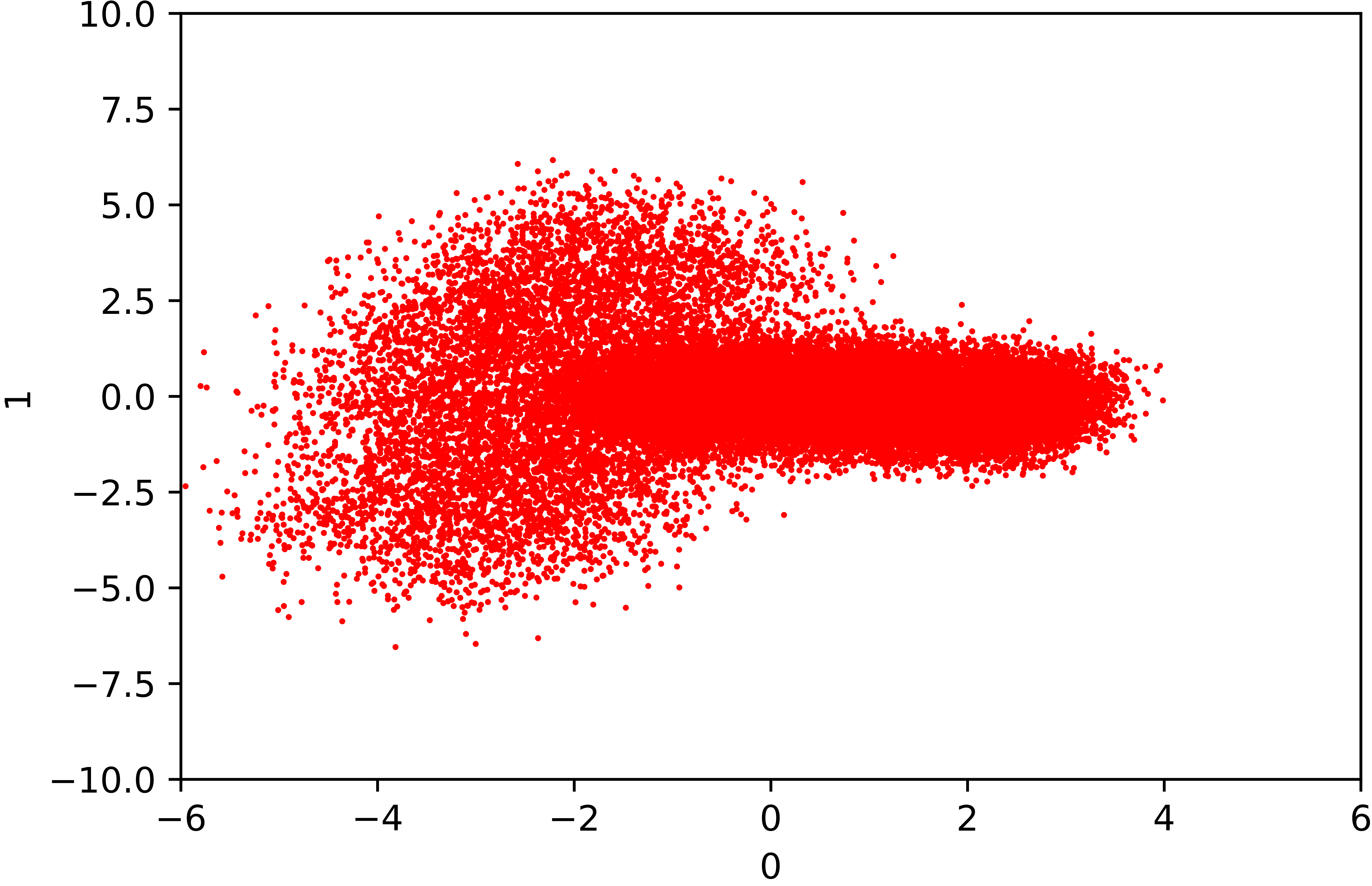}
        \caption{$\eps = 0.80$}
        \end{subfigure}
        \begin{subfigure}{0.24\textwidth}
        \includegraphics[width = \linewidth]{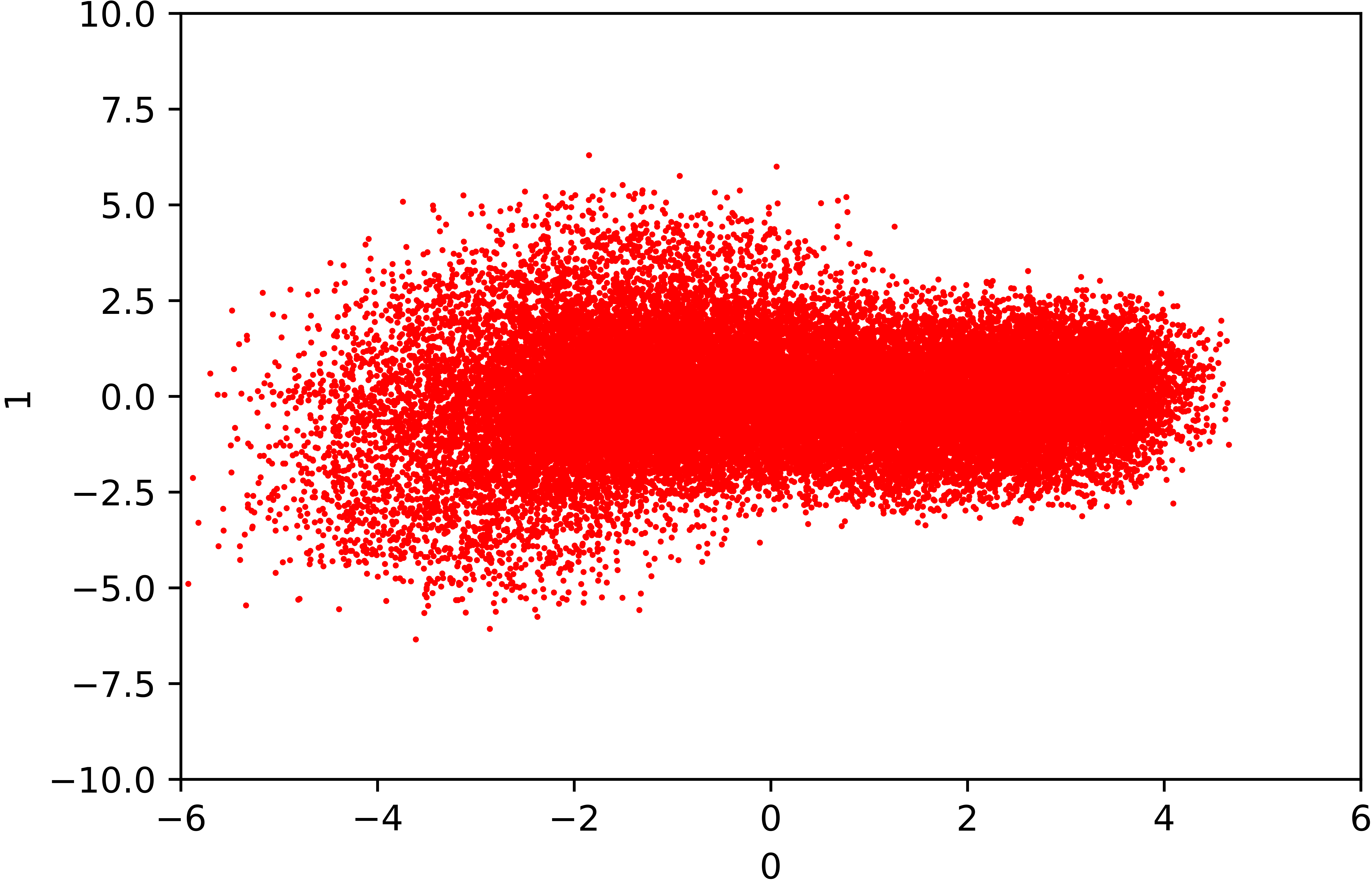}
        \caption{$\eps = 0.50$}
        \end{subfigure}                
        \caption{Scatterplots of projection of ADULT  original and synthetic datasets on first two principle components of the original dataset. Synthetic datasets are generated from several algorithms at different \(\eps\) values.}
\label{fig:2-way-pca-adult}     
\end{figure}

\paragraph{2-Way PCA. } In order to understand combined qualitative performance of all features, we show 2-way PCA marginal in Figure \ref{fig:2-way-pca-adult}. We fix the same projection from the original data, and require the synthetic data to be of the same format and go through the same preprocessing. For this reason, we do not compare to DP-WGAN since the original implementation \cite{frigerio2019differentially} does not handle continuous columns, and we apply our preprocessing rather than the original preprocessing for DP-SYN. A qualitative inspection of the plots clearly shows the similarities of trends between the plots for real dataset and synthetic data generated by \dpgan for different values of $\eps$, as low as $\eps =0.51$. 


Figure \ref{fig:2-way-pca-adult} is able to depict a qualitative description of the synthetic datasets that dimension-wise probability and predictive scores may not capture. \dpgan is able to capture the overall structure of 2-dimension PCA, with more points collapsing into a cluster as privacy budget \(\eps\) decreases. The algorithm's internal GAN structure, however, is able to generate points on different clusters even at smaller \(\eps\), which better matches the projection of the original dataset. DP-VAE has a clear 1-cluster Gaussian-like distribution of 2-way PCA, which is consistent with this method's assumption that the underlying distribution in the latent space is Gaussian. The autoencoder transforms some datapoints near the edge of the cluster to shapes similar to the original 2-way PCA, but most datapoints remain at the center of the cluster. DP-SYN is able to capture the large single cluster in the original data, but is not diverse enough to capture small clusters. This is consistent with our previous observations on diversity under DP-SYN and the fact that DP-SYN assumes a mixture of Gaussian distributions in latent space, which may not be diverse or complex enough to capture smaller clusters of the original distribution.  

DP-SYN and DP-VAE do not improve 2-way PCA plots as \(\eps\) increases, suggesting that the underlying assumptions in latent space are likely a bottleneck. Interestingly, for \(\eps=\infty\), DP-SYN synthetic data collapse to a single flat cluster (it is possible that many clusters are generated, but only one appears due to PCA), suggesting that DP-SYN overfits to the majority, and that adding noise for privacy splits the cluster. \dpgan 2-way PCA does not show structural limitations, but rather a promising result that it is possible to generate more detailed distributions that are closer to the original data.

\section{Conclusion}

We propose \dpgan---a combination of DP-autoencoder and DP-GAN---for differentially private data generation of mixed-type data. The inclusion of the autoencoder improves the efficacy of GANs, especially for high-dimensional data. Our method enjoys a ~5x privacy improvement compared to \cite{frigerio2019differentially} on the ADULT dataset in 14 dimensions and greater 100x improvement compared to \cite{xie2018differentially} on a higher 1071-dimensional dataset, and achieves a meaningful privacy  \(\epsilon<1\) for practical use. This approach is more complex than assuming a standard Gaussian distribution as in DP-VAE \citep{acs2018differentially}, and is better able to learn relationships among features.

\bibliography{reference}
\bibliographystyle{plainnat}

\appendix
\section{Algorithm Description and Pseudocode of DP-Auto-GAN}\label{app.algodetail}

In this appendix, we provide the pseudocode of the subroutines in DP-auto-GAN (Algorithm \ref{alg:all}): \textsc{DPTrain}$_{\textsc{auto}}$, \textsc{DPTrain}$_\textsc{Discriminator}$, and \textsc{Train}$_\textsc{Generator}$. The complete DP-auto-GAN algorithm is specified by the architecture and training parameters of the encoder, decoder, generator, and discriminator. 

After initial data pre-processing, the \textsc{DPTrain}$_{\textsc{auto}}$ algorithm trains the autoencoder.  Details of this training process are fully specified in Algorithm \ref{alg:auto}. As noted earlier, the decoder is trained privately by clipping gradient norm and injecting Gaussian noise in order to obtain the gradient of decoder \(g_\theta\), while the gradient of encoder \(g_\phi\) can be used directly as encoder can be trained non-privately.

The second phase of DP-auto-GAN is to train the GAN. As suggested by \cite{GAN14}, the discriminator trained for several iterations per one iteration of generator training. While the discriminator is being trained, the generator is fixed, and vice-versa. The discriminator and generator training are described in Algorithms \ref{alg:disc} (\textsc{DPTrain}$_\textsc{Discriminator}$) and \ref{alg:gan} (\textsc{Train}$_\textsc{Generator}$) respectively.
Since the discriminator receives real data samples as input for training, the training is made differentially private by clipping the norm of the gradient updates, and adding Gaussian noise to the gradient \(g\). The generator does not use any real data in training (or any functions of the real data that were computed without differential privacy), and hence it can be trained without any need to clip the gradient norm or to inject noise into the gradient.

Finally, the overall privacy analysis of DP-auto-GAN is done via the RDP accountant for each training, and composing at the RDP level (as a function of \(\alpha\)) as described in Section \ref{sec:rdp} and Lemma \ref{lem:rdp-better}.

\begin{algorithm}
    \caption{\textsc{DPTrain}$_{\textsc{auto}}$(\(X\),  \(En_\phi\),  \(De_\theta\), training parameters) }
    \label{alg:auto}
    \begin{algorithmic}[1] 
        \State \textbf{training parameter input}:
Learning rate \(\eta_1\), number of iteration rounds (or optimization steps) \(T_1\), loss function \(L_\text{auto}\), optimization method \textsc{optim}\(_\text{auto}\) batch sampling rate \(q_1\) (for the batch expectation size \(b_1=q_1m\)), clipping norm \(C_1\), noise multiplier \(\psi_1\), microbatch size \(r_1\)
\State \textbf{goal}: train one step of autoencoder
\((En_\phi,De_\theta)\)    \Procedure{\textsc{DPTrain}$_{\textsc{auto}}$}{}
        
                \State \(\BB\leftarrow\) \textsc{SampleBatch}(\(X,q_1\))
                \State Partition \(\BB\) into \(B_1,\ldots,B_k\) each of size \(r\) (ignoring the dividend)
                \State \(\hat k \leftarrow \frac{q_1 m}{r}\) \Comment{ an estimate of \(k\)}
                \For{\(j=1\ldots k\)}
                \LeftComment{\textit{Both \(g_\phi^j,g_\theta^j \) can be computed in one backpropagation}}
                        \State \(g_\phi^j,g_\theta^j \leftarrow \nabla_\phi(L_\text{auto}(De_\theta(En_\phi(B_j)),B_j)),\nabla_\theta(L_\text{auto}(De_\theta(En_\phi(B_j)),B_j)\)
                            
                \EndFor
        \State \(g_\phi\leftarrow \frac{1}{\hat k} \sum_{j=1}^k g_\phi^j\)
        \State \(g_\theta\leftarrow \frac{1}{\hat k} \left( \left(\sum_{j=1}^k \textsc{Clip} (g_\phi^j,C_1) \right)+\NN(0,C_1^2\psi_1^2 I)\right) \) 
        \State \((\phi,\theta)\leftarrow \textsc{optim}_\text{auto}(\phi,\theta,g_\phi,g_\theta,\eta_1)\)
        
    \EndProcedure
    \end{algorithmic}
\end{algorithm}

\begin{algorithm}
    \caption{\textsc{DPTrain}$_\textsc{Discriminator}$(\(X\),  \(Z\), \(G_w,De_\theta\),  $D_y$,  training parameters)}
    \label{alg:disc}
    \begin{algorithmic}[1] 
        \State \textbf{training parameter input}:
Learning rate \(\eta_3\), number of discriminator iterations per generator step \(t_D\), loss function \(L_D\), optimization method \textsc{optim}\(_D\), batch sampling rate \(q_3\) (for the batch expectation size \(b_3=q_3m\)), clipping norm \(C_3\), noise multiplier \(\psi_3\), microbatch size \(r_3\)
\State \textbf{goal}: train one step of discriminator
\(D_y\)    \Procedure{\textsc{DPTrain}$_{\textsc{discriminator}}$}{}
                \State \(\BB\leftarrow\) \textsc{SampleBatch}(\(X,q_3\))
                \State Partition \(\BB\) into \(B_1,\ldots,B_k\) each of size \(r\) (ignoring the dividend)
                \State \(\hat k \leftarrow \frac{q_1 m}{r}\) \Comment{ an estimate of \(k\)}
                \For{\(j=1\ldots k\)}
                        \State \(\{z_i\}_{i=1}^r\sim Z^r\)
                        \State \(B'\leftarrow \{De(G_w(z_i))\}_{i=1}^r\)
                        \State \(g^j\leftarrow \nabla_y(L_D(B_j,B',D_y)) \)
                        \LeftComment{In the case of WGAN, 
                        \[L_D(B_j,B',D_y):=\frac{1}{r}\sum_{b\in B_j} D_y(b)-\frac 1r \sum_{b'\in B'} D_y(b') \] }
                            
                \EndFor
        \State \(g\leftarrow \frac{1}{\hat k} \left( \left(\sum_{j=1}^k \textsc{Clip} (g^j,C_3) \right)+\NN(0,C_3^2\psi_3^2 I)\right) \) 
        \State \(y \leftarrow \textsc{optim}_{D}(y,g,\eta_3)\)
    \EndProcedure
    \end{algorithmic}
\end{algorithm}

\begin{algorithm}
    \caption{\textsc{Train}$_\textsc{Generator}$(\(Z,G_w,De_\theta,D_y\), generator training parameters) }
    \label{alg:gan}
    \begin{algorithmic}[1] 
        \State \textbf{training parameter input}:
Learning rate \(\eta_2\), batch size \(b_2\), loss function \(L_G\), optimization method \textsc{optim}\(_G\), number of generator iteration rounds (or optimization steps) \(T_2\)    
\State \textbf{goal}: train one step of generator \(G_w\)

\Procedure{\textsc{Train}$_{\textsc{generator}}$}{}
                
                        \State \(\{z_i\}_{i=1}^{b_2}\sim Z^{b_2}\)
                        \State \(B'\leftarrow \{De(G_w(z_i))\}_{i=1}^{b_2}\)

                        \State \(g\leftarrow \nabla_w(L_G(B',D_y)) \)
                        \LeftComment{In the case of WGAN, 
                        \[L_G(B',D_y):=-\frac{1}{b_2}\sum_{b'\in B'} D_y(b')  \] }

        \State \(w \leftarrow \textsc{optim}_{G}(w,g,\eta_2)\)
    \EndProcedure
    \end{algorithmic}
\end{algorithm}





\section{Comparison of RDP and Standard Composition and Proof of Lemma \ref{lem:rdp-better}}\label{app.rdp}
In this section, we prove Lemma \ref{lem:rdp-better} and give a discussion for the privacy factor reduction found in practice by Lemma \ref{lem:rdp-better}.

\rdpbetter*

\begin{proof}
Let
\[
\alpha_1^*\in\argmin_{\alpha>1} r_1(\alpha)+\frac{\log(2/\delta)}{\alpha-1} \quad \text{and}\quad \alpha_2^*\in\min_{\alpha>1} r_2(\alpha)+\frac{\log(2/\delta)}{\alpha-1}
\]
and let \(\alpha=\min\{\alpha_1^*,\alpha_2^*\}\). Then, we have
\begin{align*}
\eps &\leq r_1(\alpha)+r_2(\alpha)+\frac{\log(1/\delta)}{\alpha-1} \\
&\leq r_1(\alpha_1^*)+r_2(\alpha_2^*)+\frac{\log(1/\delta)}{\alpha-1} \\
&=\epsilon_1+ \eps_2+\frac{\log(1/\delta)}{\alpha-1}-\frac{\log(2/\delta)}{\alpha_1^*-1}-\frac{\log(2/\delta)}{\alpha_2^*-1} <\eps_1+\eps_2 
\end{align*}
where the two inequalities use the definitions of \(\eps_1,\eps_2,\eps\), and the second inequality uses the fact that \(r_i\) is an increasing function of \(\alpha\) (\cite{van2014renyi}).
\end{proof}

For most settings of training parameters, we found that \(\eps\) by RDP composition in Lemma \ref{lem:rdp-better} is \(\approx30\%\) smaller than that of the standard composition (see Figure \ref{fig:rdp-eps} for this privacy saving in our \dpgan \(\eps=0.51\) ADULT setting). The observation can be support by theoretical analysis as follows. It is observed in \cite{wang2018subsampled} that   \(r_i(\alpha)\) appears linear until a phase transition at some \(\alpha\), and is close to linear again. In our parameter settings, the optimal order to achieve smallest \(\eps\) is before the phase transition, and thus \(r_i(\alpha)\) "practically" behaves linear as the privacy analysis never uses \(r_i(\alpha)\) at \(\alpha\) beyond the phase transition. This is illustrated in Figure \ref{fig:rdp-linear} by an example of our \dpgan \(\eps=0.51\) ADULT setting.   

Assuming linear \(r_i(\alpha)=c_i\alpha\), we can compute the analytical solutions:
\begin{align*}
\eps_1=c_1+2\sqrt{c_1\log 2/\delta},\quad\eps_2=c_2+2\sqrt{c_2\log 2/\delta},\quad\eps=c_1+c_2+2\sqrt{(c_1+c_2)\log 1/\delta}
\end{align*} 
In practice, \(\delta\) is small compared to \(c_i\)'s and the term \(\sqrt{c_i\log 1/\delta}\) dominates. Hence, \(\eps^2\approx\eps_1^2+\eps_2^2\), and for many settings where we set \(\eps_1\) close to \(\eps_2\) (such as in our setting or DP-SYN \cite{abay2018privacy}), this implies \(\eps\approx0.707(\eps_1+\eps_2)\), an approximately \(30\%\) reduction of privacy cost.

\begin{figure}[h]
\begin{center}
\includegraphics[width = 0.5\linewidth]{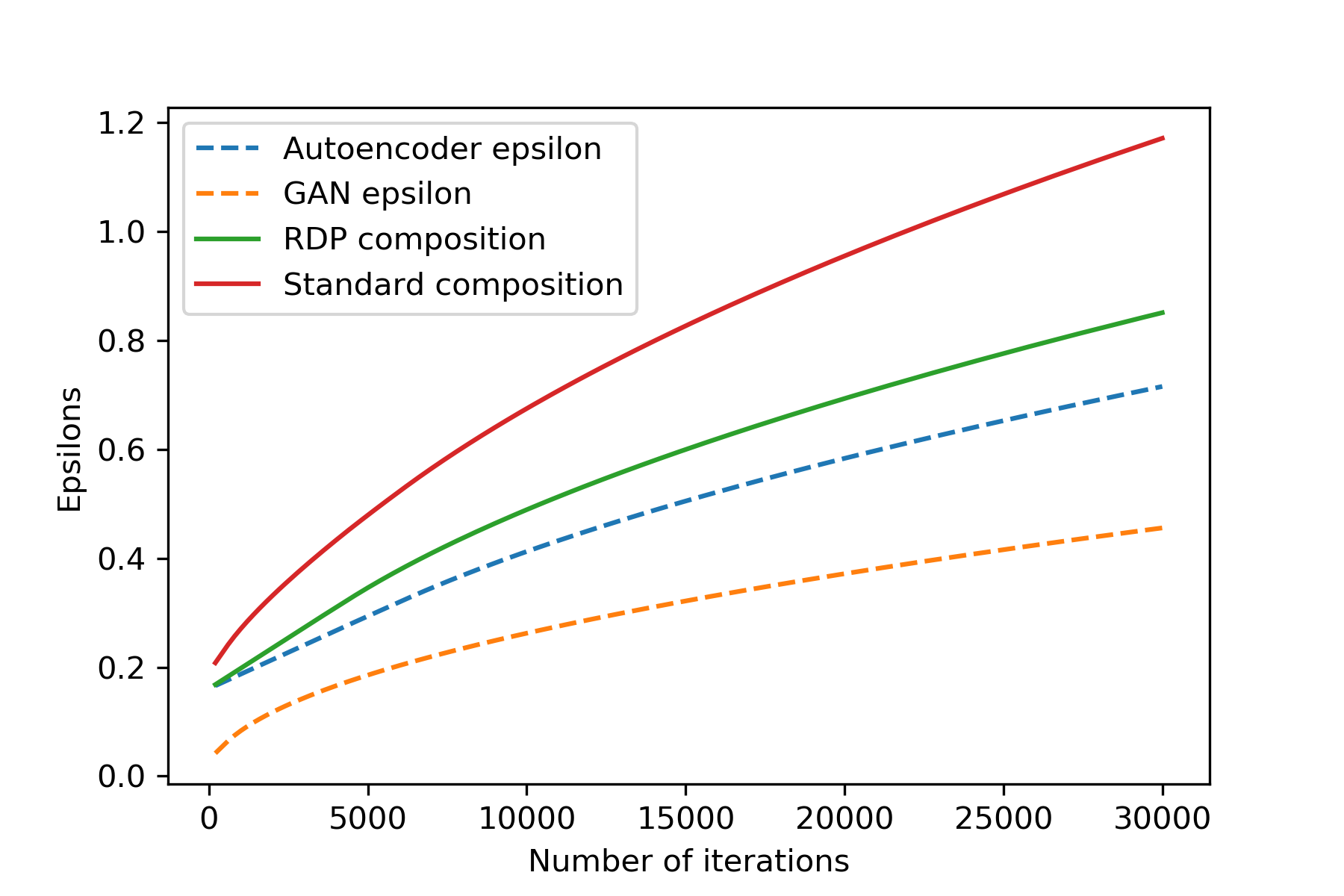}
\end{center}
\caption{Privacy cost \(\epsilon\) for different training phases of the algorithm in \(\eps=0.51\) \dpgan parameter setting for ADULT data: the sampling rate  \(q\) and noise multiplier \(\psi\) for autoencoder and GAN are \(q=\frac{64}{32561},\psi=2.5\) and \(q=\frac{128}{32561},\psi=7.5\), respectively. We target \(\delta=10^{-5}\) overall and  \(\delta=\frac12\cdot10^{-5}\) for each training phase.}
\label{fig:rdp-eps}
\end{figure}

\begin{figure}[h]
\begin{center}
\includegraphics[width = 0.32\linewidth]{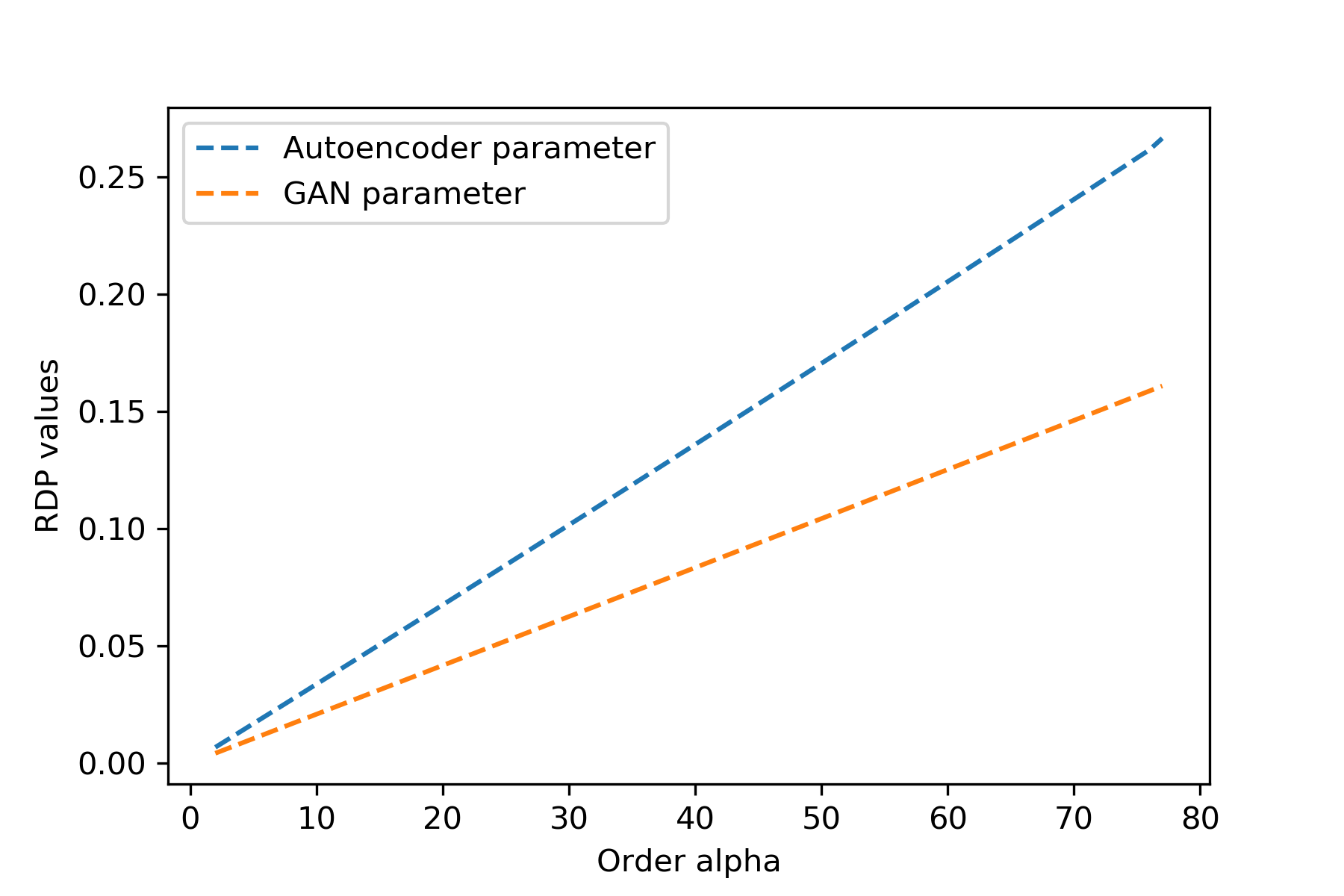}
\includegraphics[width = 0.32\linewidth]{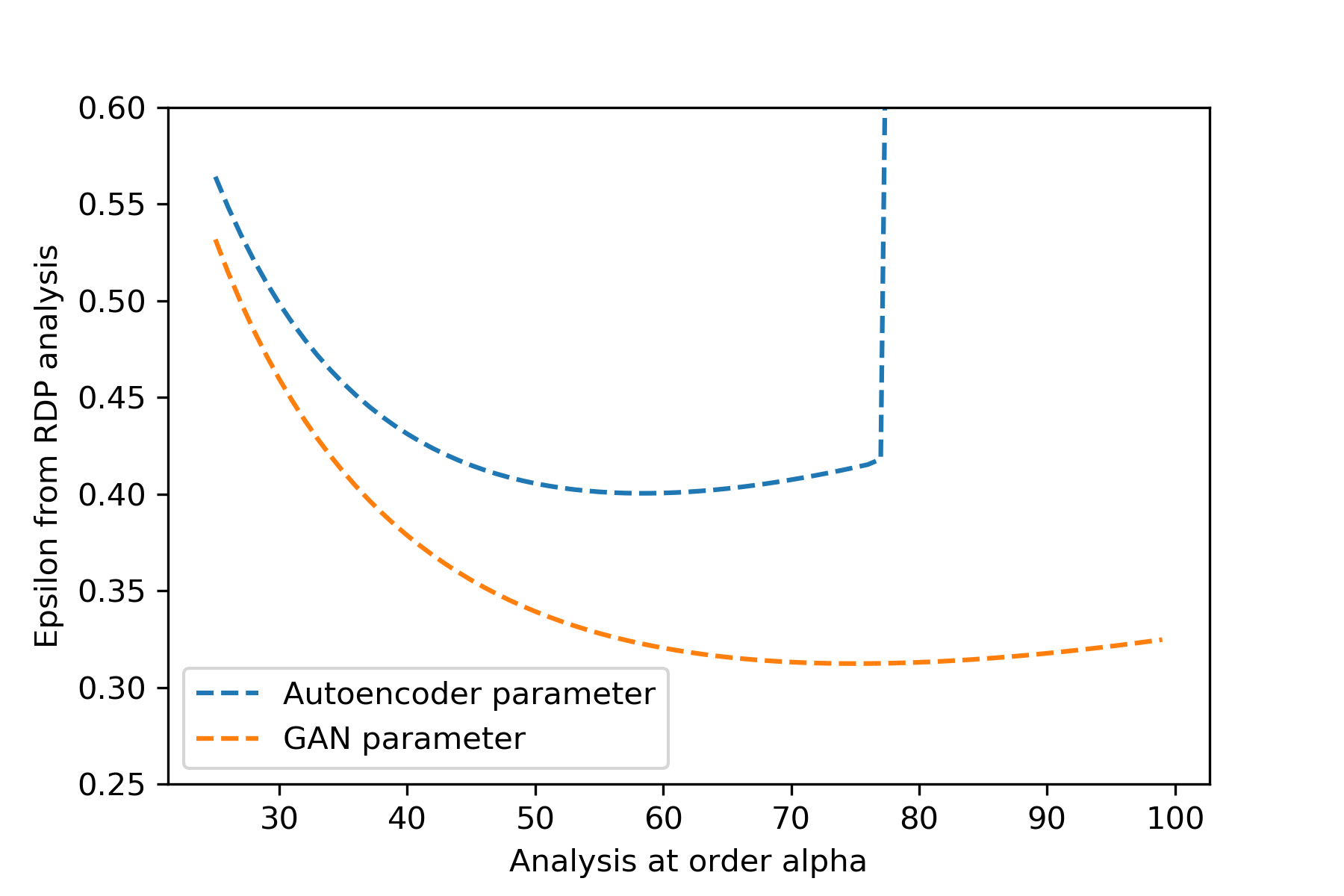}
\includegraphics[width = 0.32\linewidth]{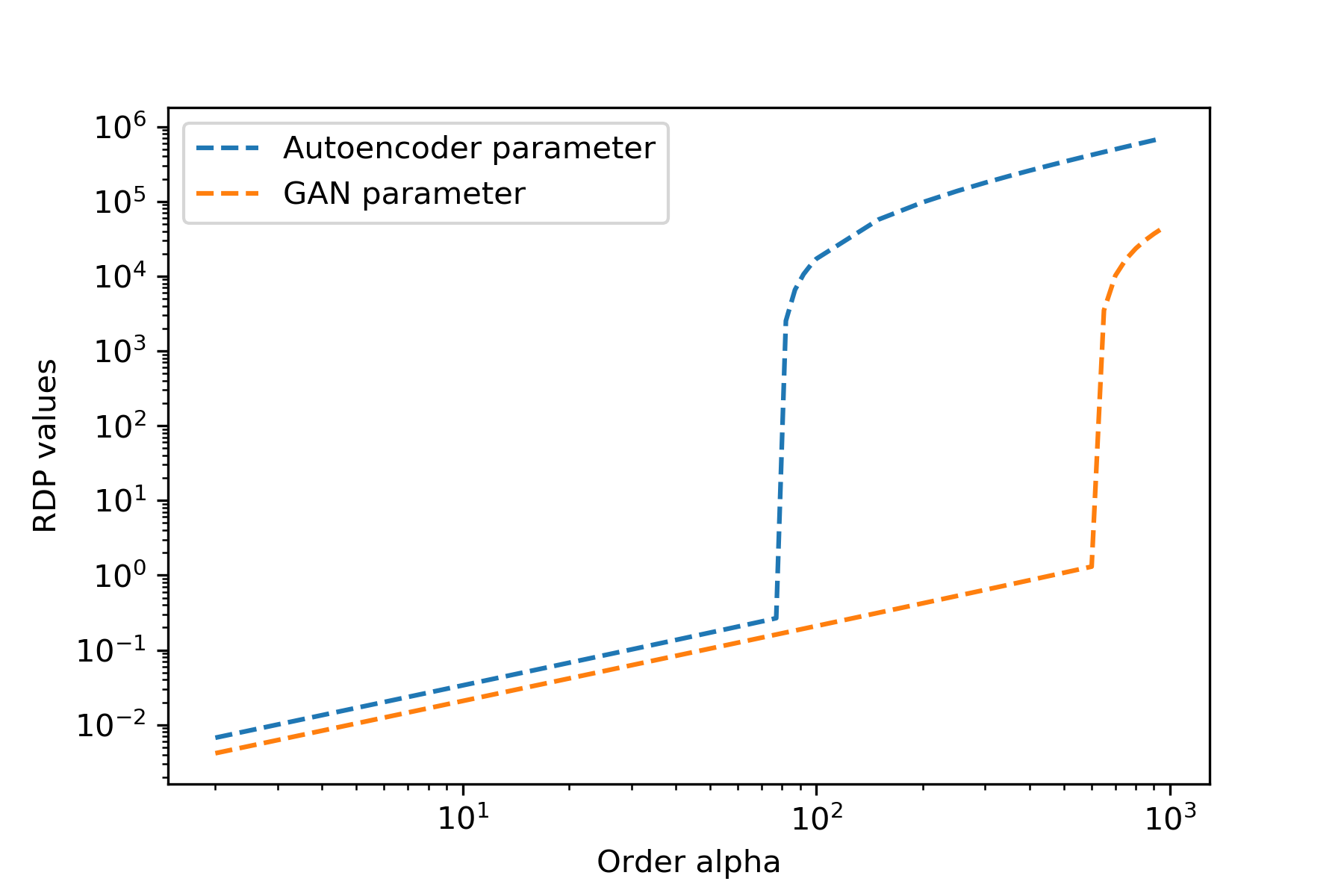}
\end{center}
\caption{RDP values over different order \(\alpha\) for \(\eps=0.51\) \dpgan parameter setting for ADULT data: the sampling rate  \(q\), noise multiplier \(\psi\), and \(T\) the number of training iterations for autoencoder and GAN are \(q=\frac{64}{32561},\psi=2.5,T=10000\) and \(q=\frac{128}{32561},\psi=7.5,T=15000\), respectively. The phase transitions (spikes of RDP value) for autoencoder and GAN appear at \(\alpha=79\) and \(\alpha=624\). The optimal order for smallest \(\eps\) for  autoencoder and GAN analysis targeting \(\delta=\frac1210^{-5}\) are 60 and 77, below the phase transitions.}
\label{fig:rdp-linear}
\end{figure}
\section{Additional Related Work}\label{app.rel}

\subsection{DP-SGD Optimization}\label{app.deeptrain}

There are numerous works on optimizing the performance of differentially private GANs, including data partitioning (either by class of labels in supervised setting or a private algorithm) \citep{yu2019differentially,papernot2016semi,papernot2018scalable,jordon2018pate,abay2018privacy,acs2018differentially,chen2018differentially}; 
reducing the number of parameters in deep models \citep{mcmahan2017learning};
changing the norm clipping for the gradient in DP-SGD during training \citep{mcmahan2017learning,van2018three,thakkar2019differentially};
  changing parameters of the Gaussian noise used during training \citep{yu2019differentially}; 
and using publicly available data to pre-train the private model with a warm start \citep{zhang2018differentially,mcmahan2017learning}.
  Clipping gradients per-layer of models \citep{mcmahan2018general,mcmahan2017learning} and per-dynamic parameter grouping \citep{zhang2018differentially} are also proposed. Additional details for some of these optimization approaches are given below.

\paragraph{Batch Sampling}
Three ways are known to sample a batch from data in each optimization step. These methods are described in \cite{mcmahan2018general}; we summarize them here for completeness.  The first is to sample each individual's data independently with a fixed probability. This sampling procedure is the one used in the analysis of the subsampled moment accountant in \cite{abadi2016deep,mcmahan2018general} and subsampled RDP composition in \cite{mironov2017renyi}. This RDP composition is publicly available at Tensorflow Privacy \citep{TensorflowPrivacy}. We implement this sampling procedure and use  Tensorflow Privacy to account Renyi Divergence during training.  Another sampling policy is to sample uniformly at random a fixed-size subset of all datapoints. This achieves a different RDP guarantee, which was analyzed in \cite{wang2018subsampled}.  Finally, a common subsampling procedure is to shuffle the data via uniformly random permutation, and take a fixed-size batch of the first $k$ points in shuffled order. The process is repeated after a pass over all datapoints (an epoch). Although this batch sampling is most common in practice, to the best of our knowledge, no subsampled privacy composition is known in this case for the centralized model. 

\paragraph{Hyperparameter Tuning.}
Training deep learning models involves hyperparameter tuning to find good architecture and optimization parameters. This process is also done differentially privately, and the privacy budget must be accounted for. \citet{abadi2016deep} accounts for hyperparameter search using the work of \cite{gupta2010differentially}. \citet{beaulieu2019privacy} uses Report Noisy Max \cite{DR14} to private select a model with top performance when a model evaluation metric is known. Some work has also been done to account for selecting high-performance models without spending much privacy budget \citep{chaudhuri2013stability,liu2019private}.  In our experimental work, we omit the privacy accounting of hyperparameter search, as it is done in some previous works such as  DP-SYN \cite{abay2018privacy} that we compare to,  and also as this is not the focus of our contribution. 

\paragraph{Gaussian Differential Privacy.}
We note that Gaussian differential privacy (GDP) \cite{dong2019gaussian} can also be used in place of Renyi differential privacy (RDP) for the privacy analysis. GDP was used in \cite{bu2019deep} for private training of deep learning, but their privacy parameters obtained in the work are approximate. In addition, we  suspect that for our parameter settings with high noise multiplier (in the range of 1.1-3.5) and numbers of epoch, the saving in privacy will not be significant. Indeed, privacy values obtained by both RDP and GDP are observed to converge  as the number of iterations in training increases  (as depicted in Figure 4 of \cite{bu2019deep}).


\subsection{Evaluation Metrics for Synthetic Data} \label{app.met}

In this section, we review the evaluation schemes for measuring quality of synthetic data in existing literature. Various evaluation metrics have been considered in the literature to quantify the quality of synthetic data \citep{charest2011can}. Broadly, evaluation metrics can be divided into two major categories: supervised  and unsupervised. Supervised evaluation metrics are used when clear distinctions exist between features and labels in the dataset, e.g., for healthcare applications, whether a person has a disease or not could be a natural label. Unsupervised evaluation metrics are used when no feature of the data can be decisively termed as a label. For example, a data analyst who wants to learn a pattern from synthetic data may not know what specific prediction tasks to perform, but rather wants to explore the data using an unsupervised algorithm such as Principle Component Analysis (PCA).  Unsupervised metrics can then be divided into three broad types: prediction-based, distributional-distance-based, and qualitative (or visualization-based). We describe supervised evaluation metrics and all three types of unsupervised evaluation metrics below.  Metrics in previous work and our proposed metrics in this paper are summarized in Table \ref{tab:metric}.

\begin{table*}[h]
        \caption{Summary of  evaluation metrics in DP synthetic data generation literature. We list applicability of each metric to each of the data type. Parts in \textbf{bold} are \textbf{our new contributions}. Evaluation methods with asterisk * are predictive-model-specific, and their applicability therefore depends on types of data that the chosen predictive model is appropriate for. 
Methods with asterisks ** are equipped with  any distributional distance of choice such as Wasserstein distance or total variation distance. 
}
        \label{tab:metric}
        \begin{center}
                \begin{tabular}{p{2.6cm}p{5.3cm}p{1.5cm}p{1.5cm}p{1.5cm}}
                        \bf {TYPES}
&\multicolumn{1}{c}{\bf EVALUATION METHODS}  & \multicolumn{3}{c}{\bf DATA\ TYPES} \\
                        \multicolumn{1}{c}{\bf }  &\multicolumn{1}{c}{\bf } & \multicolumn{1}{c}{\bf Binary} & \multicolumn{1}{c}{\bf Categorical} & \multicolumn{1}{c}{\bf Regression}
                        \\ \hline 
                        \multirow{2}{*}{Supervised}         & Label prediction* \citep{chen2018differentially,abay2018privacy, frigerio2019differentially} & Yes & Yes & Yes\\
                        &Predictive model ranking* \citep{jordon2018pate} & Yes & Yes & Yes \\ \hline
                        \multirow{1}{\linewidth}{Unsupervised, prediction-based } 
                        &Dimension-wise prediction plot* & Yes (\cite{choi2017generating}, ours) & \textbf{Yes} & \textbf{Yes} \\ \hline
\multirow{6}{\linewidth}{Unsupervised, distribution-based } 
                        &Dimension-wise probability plot \citep{choi2017generating} & Yes & No & No \\
                        &\(2,3\)-way feature marginal, total variation distance \citep{abay2018privacy,NIST2018Match3} ** & Yes & Yes  & Yes  \\
                        &\textbf{\(1\)-way feature marginal, diversity measure (\(\mu\)-smooth KL divergence) **}  & \textbf{Yes} & \textbf{Yes} 
& \textbf{Yes} 
\\
\hline
\multirow{4}{\linewidth}{{Unsupervised, qualitative}} 
                        &{\(1\)-way feature marginal (histogram) (e.g. in \cite{lin2019generating,bagdasaryan2019differential} and in the implementation of \cite{frigerio2019differentially})}& Yes & {Yes}  & {Yes}  \\
                        &\textbf{\(2\)-way PCA marginal (data visualization)} 
& \textbf{Yes} & \textbf{Yes}  & \textbf{Yes}  \\
                \end{tabular}
        \end{center}
\end{table*}

Various evaluation metrics have been considered in the literature to evaluate the quality of the synthetic data (see \citet{charest2011can} for a survey). The metrics can be broadly categorized into two groups: \emph{supervised} and \emph{unsupervised}.  Supervised evaluation metrics are used when there are clear distinctions between features and labels of the dataset, e.g., for healthcare applications, a person's disease status is a natural label. In these settings, a predictive model is typically trained on the synthetic data, and its accuracy is measured with respect to the real (test) dataset. Unsupervised evaluation metrics are used when no feature of the data can be decisively termed as a label. Recently proposed metrics include \emph{dimension-wise probability} for binary data \citep{choi2017generating}, which compares the marginal distribution of real and synthetic data on each individual feature, and \emph{dimension-wise prediction}, which measures how closely synthetic data captures relationships between features in the real data. This metric was proposed for binary data, and we extend it here to mixed-type data. Recently, \citet{NIST2018Match3} used a 3-way marginal evaluation metric which used three random features of the real and synthetic datasets to compute the total variation distance as a statistical score. 

\paragraph{Supervised Evaluation Metrics.} The main aim of generating synthetic data in a supervised setting is to best understand the relationship between features and labels. A popular metric for such cases is to train a machine learning model on the synthetic data and report its accuracy on the real test data \citep{xie2018differentially}. \citet{zhang2018differentially} used inception scores on the image data with classification tasks. Inception scores were proposed in \citet{salimans2016improved} for images which measure quality as well as diversity of the generated samples. Another metric used in \citet{jordon2018pate} reports whether the accuracy ranking of different machine learning models trained on the real data is preserved when the same machine learning model is trained on the synthetic data. 
Although these metrics are used for classification in the literature, they can be easily generalized to the regression setting.

In the DP setting of synthetic data generation, supervised metrics also differ from unsupervised in that the label feature is sometimes treated as public (e.g. in DP-SYN \cite{abay2018privacy}), whereas in unsupervised setting, all features are treated as private. We note it as this may create a slight difference in privacy accounting.


\paragraph{Unsupervised Evaluation Metrics, Prediction-Based.} 
Rather than measuring accuracy by predicting one particular feature as in supervised-setting, one can predict \textit{every} individual feature using the rest of features. The prediction score is therefore created for each single feature, creating a list of dimension- (or feature-) wise prediction scores.  Good synthetic data should have similar dimension-wise prediction scores to that of the real data. Intuitively, similar dimension-wise prediction shows that synthetic data correctly captures inter-feature relationships in the real data. 

One metric of this type is proposed by \citet{choi2017generating} for binary data. Although it was originally proposed for binary data, we extend this to mixed-type data by allowing varieties of predictive models appropriate for each data type present in the dataset. For each feature, we try predictive models on the real dataset in order of increasing complexity until a good accuracy score is achieved. For example, to predict a real-valued feature, we first used a linear classifier and then a neural network predictor. This ensures that a choice of predictive model is appropriate to the feature. Synthetic data is then evaluated by measuring the accuracy of the same predictive model (trained on the real data) on the synthetic data. Similarly high accuracy scores on synthetic data and real data indicates that the synthetic data closely approximates the real data.

 \citet{zhang2018differentially} provides an unsupervised Jensen-Shannon score metric which measures the Jensen-Shannon divergence between the output of a discriminating neural network on the real and synthetic datasets, and a Bernoulli random variable with $0.5$ probability. This metric differs from dimension-wise prediction in that the predictive model (discriminator) is trained over the whole dataset at once, rather than dimension-wise, to obtain a score.

\paragraph{Diversity Metric.} Inception score is one common metric for evaluating the quality of data generated by GAN \cite{borji2019pros}. Both inception and Jensen-Shannon scores aim to capture both the accuracy and diversity of generated data through comparing the distributions of predictions by a fixed classifier on original and synthetic data. Inception score is similar to \(\mu\)-smoothed KL divergence we propose in Section \ref{s.exp}, but we apply it to discrete distribution and use a smoothing to avoid divergence being undefined. Our metric also differs from inception scores in that it is based on the distributions of synthetic and original data, and not on predictions on those datasets by any classifier. We observed that introducing a classifer can itself be a reason for lack of diversity, and concern that a predictive model in general can introduce bias and unfairness in other forms. For example, we found that in a categorical feature with one strong majority class, the classifier predicts only the majority to maximize a standard notion of "accuracy," hence making a synthetic data that ignore minority classes represent the original data perfectly well, as predictions  on synthetic and original data are identical. Therefore, for diversity applications, we prefer distribution-based metric to a distribution-based metric.

Moreover, we aim our metric to be appropriate in differential privacy setting. A natural metric to penalize missing a minority class is KL divergence, as used in the definitions of inception and Jensen-Shannon scores. However, it is impossible for a private model to recognize if a minority exists if the class is really small, simply due to the definition of differential privacy (unless the algorithm assumes existence of all possible classes in the dataset, but this would greatly impact accuracy as the number of classes increase). \citet{bagdasaryan2019differential}  observed a  phenomena that differentially private training indeed impacts minority classes more than majority class, as we also observed in our work. Missing a minority class, therefore, is sometimes unavoidable with DP guarantees. Since missing any class makes KL divergence undefined, we added a smoothing term to KL divergence so that the penality of missing a minority class is finite, yet significant.      

\paragraph{Unsupervised Evaluation Metrics, Distributional-Based.}

One way to evaluate the quality of synthetic data is computing a dimension-wise probability distribution, which was also proposed in \citet{choi2017generating} for binary data. This metric compares the marginal distribution of real and synthetic data on each individual feature.  Below we survey other metrics in this class that can extend to mixed-type data.

\emph{3-Way Marginal}: Recently, the \citet{NIST2018Match3} challenge used a 3-way marginal evaluation metric in which three random features of the real and synthetic data \(R,S\) are used to compute the total variation distance as a statistical score. This process is repeated a few times and finally, average score is returned. 
In particular, values for each of the three features are partitioned in 100 disjoint bins as follows: 
\[B^i_{R,k} = \adfloor*{\frac{(R^i_k -R_{k, \min})*100}{R_{k,\max} -R_{k, \min}}}\] and  
\[B^i_{S, k} = \adfloor*{\frac{(S^i_k-R_{k,\min})*100}{R_{k,\max}-R_{k, \min}} },\]
where $R^i_k, S^i_k$ is the value of $i$-th datapoint's $k$-th feature in datasets $R$ and $S$, and $R_{k, \min}, R_{k,\max}$ are respectively the minimum and maximum value of the $k$-th feature in $R$. For example, if $k =1,2,3$ are the  selected features then $i$-th data points of $R$ and $S$ are put into bins identified by a 3-tuple, $(B^i_{R,1}, B^i_{R,2}, B^i_{R,3})$ and $(B^i_{S,1}, B^i_{S,2}, B^i_{S,3})$, respectively. 

Let $\BB_R, \BB_S$ be the set of all 3-tuple bins in datasets $R$ and $S$, and let $\abs{B}$ denote number of datapoints in 3-tuple bin $B$, normalized by total number of data points. Then, the 3-way marginal metric reports the $\ell_1$-norm of the bin-wise difference of $\BB_R$ and $\BB_S$ as follows:
\begin{align*}
& \sum_{B_1 \in \BB_R} \sum_{B_2 \in \BB_S}\Ibb_{\{B_1\in \BB_S\}}\Ibb_{\{B_2 = B_1\}} \abs[\big]{\abs{B_1} -\abs{B_2}} \\
 &\quad+ \sum_{B_1 \in \BB_R} (1-\Ibb_{\{B_1\in \BB_S\}})\abs{B_1} + \sum_{B_2 \in \BB_S} (1-\Ibb_{\{B_2\in \BB_R\}})\abs{B_2}.
 \end{align*}

Both aforementioned metrics (dimension-wise probability from \cite{choi2017generating} and 3-way marginal from \cite{NIST2018Match3}) involve two steps. First, a projection (or a selection of features) of data is specified, and second some statistical distance or visualization of synthetic and real data in the projected space is computed. Dimension-wise probability for binary data corresponds to projecting data into each single dimension, and visualizing synthetic and real distributions in projected space by histograms (for binary data, the histogram can be specified by one single number: probability of the feature being 1). The 3-way marginal metric first selects a three-dimensional space specified by three features as a space into which data projected, discretizes the synthetic and real distributions on that space, then computes a total variation distance between discretized distributions. We can generalize these two steps process and conceptually design  a new metric as follows.

\emph{Generalization of Data Projection:} One can generalize selection of \(3\) features (3-way marginal) to any \(k\) features (\(k\)-way marginal). However, one can also select \(k\) \textit{principle components} instead of \(k\) features. We distinguish these as \(k\)-way \textit{feature} marginal (projection onto a space spanned by feature dimensions) and \(k\)-way \textit{PCA} marginal (projection onto a space spanned by principle components of the original dataset). Intuitively, \(k\)-way PCA marginal best compresses the information of the real data into a small \(k\)-dimensional space, and hence is a better candidate for comparing projected distributions.

\emph{Generalization of Distributional Distance:} Total variation distance can be misleading as it does not encode any information on the distance between the supports of two distributions. In general, one can define any metric of choice (optionally with discretization) on two projected distributions, such as Wasserstein distance which also depends on the distance between the supports of the two distributions.

\emph{Computing Distributional Distance:} The distance between two distributions can also be computed without any data projections. Computing an exact statistical score on high-dimensional datasets is likely computationally hard. However, one can, for example, subsample uniformly at random points from two distributions to compute the score more efficiently, then average this distance over many iterations.

\paragraph{Unsupervised Evaluation Metrics, Qualitative.}
As described above, dimension-wise probability is a specific application of comparing histograms under binary data. One can plot histograms of each feature (1-way feature marginal) for inspection. In practice, histogram visualization is particularly helpful when a feature is strongly skewed, sparse (majority zero), and/or hard to predict well by predictive models. An example of this occurred when predictive models do not have meaningful predictive accuracy on certain features of the ADULT dataset, making prediction-based metric inappropriate.  Instead, inspection of histograms of those features on synthetic and real data (as in Figure \ref{fig:2-way-pca-adult}) indicate that synthetic data replicates those features well.

In addition, \(2\)-way PCA marginal is a visual representation of data that explains as much variance as possible in a plane, providing a good trade-off between information and ease of visualization on two datasets. This visualization can be augmented with a distributional distance of choice over the two distributions on these two spaces to get a quantitative metric.

\subsubsection{Background on Evaluation Metrics Used in Experiments}\label{app.expmetrics}

Here, we discuss in more technical details the evaluation metrics that we use in the experiments  in Section \ref{s.exp} and in our paper to empirically measure the quality of the synthetic data. Some of these metrics have been used in the literature, while 2-way PCA is novel  in this work. Another novel metric \(\mu\)-smooth KL divergence is described in Section  \ref{s.exp}.  

For the following two metrics, the dataset should be partitioned into a training set $R \in \Rbb^{m_1 \times n}$ and testing set $T \in \Rbb^{m_2 \times n}$, where $m = m_1+m_2$ is the total number of samples the real data, and $n$ is the number of features in the data. After training, the generative model creates a synthetic dataset $S \in \Rbb^{m_3\times n}$ for sufficiently large $m_3$.



\textbf{Dimension-Wise Probability.} When the feature is binary,  we compares the proportion of $1$'s (which can be thought of as estimators of Bernoulli success probability) in each feature of the training set $R$ and synthetic dataset $S$, i.e.  the marginal distribution of each feature. For each feature, the closer \ the proportion of 1's in the original dataset is to that of synthetic dataset, the better. 

\textbf{Dimension-Wise Prediction.} This metric evaluates whether synthetic data maintain relationships \emph{between} features. For the $k$-th feature of training set $R$ and synthetic dataset $S$, we choose $y_{R_k} \in \Rbb^{m_1}$ and $y_{S_k} \in \Rbb^{m_2}$ as labels of a classification or regression task based on the type of that feature, and the remaining features $R_{-k}$ and $S_{-k}$ are used for prediction. We train either a classification or regression model on $R_{-k}$ and $S_{-k}$, and measure goodness of fit based on the model's accuracy by testing on \(T\). That is, we "train on synthetic, test on original" to evaluate the quality of synthetic data. The closer of accuracy scores obtained from original and synthetic data, the better.

Model accuracy can be reported using AUROC, $F_1$, or $R^2$ scores, as appropriate.  
We describe the model's accuracy as follows: 
\begin{enumerate}
\item Area under the ROC curve (AUROC) score and $F_1$ score for classification: The $F_1$ score of a classifier is defined as
$F_1 := \tfrac{2\times \text{precision} \times \text{recall}}{\text{precision} + \text{recall}},$
where precision is ratio of true positives to true and false positives, and recall is ratio of true positives to total true positives (i.e., true positives plus false negatives). \(F_1\) score on multi-class features are averaged using micro-averaging. AUROC score is a graphical measure capturing the area under ROC (receiver operating characteristic) curve, and is only intended for binary data.  Both metrics take values in interval $[0,1]$ with larger values implying good fit. The ROC curve are pairs of true and false positive rates obtained from setting different thresholds at the classifier's predicted probability. Note that when the classifier is trained on the data with one class and predicts always with probability 0 (or 1), ROC curve is a single pair, and AUROC is thus undefined. 
\item $R^2$ score for regression: The $R^2$ score is defined as
$1 - \tfrac{\textstyle{\sum} (y_i-\wh{y}_i)^2}{\textstyle{\sum} (y_i - \wb{y})^2} ,$
where $y_i$ is the true label, $\wh{y}_i$ is the predicted label, and $\wb{y}$ is the mean of the true labels. This is a popular metric used to measure goodness of fit as well as future prediction accuracy for regression.
\end{enumerate}





\textbf{1-Way Feature Marginal (Histogram).} We compute probability distribution of the feature of interest of both real and synthetic data. For continuous features, we partition the range into intervals.  This can be extended to $k$-way feature marginals by considering joint distribution over \(k\) features and made into a quantitative measure by adding a distance measure between the histograms.

We  also propose the following novel qualitative evaluation metric.

\textbf{2-Way PCA Marginal.} This metric generalizes the 3-way marginal score used in \citet{NIST2018Match3}. In particular, we compute principle components of the original data and evaluate a projection operator for first two principle components. Denote $P\in \Rbb^{n\times 2}$  the projection matrix such that $\wb{R} = RP$ is the projection on first two principle components of $R$. After we fix \(P\), we  project synthetic data $\wb{S} = SP$ and scatterplot 2-D points in $\wb{R}$ and $\wb{S}$ for visual evaluation. That is, we train PCA from the original dataset, and use the same projection from this PCA on (possibly many) synthetic datasets.

\section{Additional Details of Experiments on \mimic data}\label{app.mimictrain}
In all our experiments, all \(\eps\) values reported that are rounded are rounded up to guarantee the validity of privacy guarantee. MIMIC-III data set contains 46,520 data points in total, and is partitioned into train, validation, and test data sets of sizes 27912, 9304, 9304 (60\%, 20\%, 20\%), respectively. Privacy analysis is calculated using the training size. Data are stored in 0/1 format. \dpgan pre-trained models in this paper are available at \url{https://github.com/DPautoGAN/DPautoGAN/tree/master/results/pre-trained%20models}. 

\paragraph{\dpgan Computing Infrastructure.}
\dpgan are run on GCP: n1-highmem-2 (2 vCPUs, 13 GB memory) with 1 x NVIDIA Tesla K80. The  training of  autoencoder (for 15,000 iterations) and of GAN (for 20,000 iterations) each takes about 1.5 hours. The combined training together with performance evaluations (probability and prediction plots) are done in less than 4 hours for each setting of parameter.

\paragraph{\dpgan Training.} The autoencoder was trained via Adam with Beta 1 = 0.9, Beta 2 = 0.999, and a learning rate of 0.001. It was trained on minibatches of size 100 and microbatches of size 1. L2 clipping norm was selected to be the median L2 norm observed in a non-private training loop, set to 0.8157. The noise multiplier was then calibrated to achieve the desired privacy guarantee.

The GAN was composed of two neural networks, the generator and the discriminator. The generator was a simple feed-forward neural network, trained via RMSProp with alpha = 0.99 with a learning rate of 0.001. The discriminator was also a simple feed-forward neural network, also trained via RMSProp with the same parameters, with minibatches of size 128. The L2 clipping norm of the discriminator was set to 0.35. The pair was trained on minibatches of size 1,000 and a microbatch size of 1, with 2 updates to the discriminator per 1 update to the generator. Again, the noise multiplier was then calibrated to achieve desired privacy guarantees.

\paragraph{Selecting the Noise Multipliers and the Numbers of Iterations.} Noise  multipliers are finally set at \(\psi=3.5,2.3,1.3\) simultaneously to both autoencoder and GAN to achieve \(\eps=0.81,1.33,2.70\) respectively. Training is first done for 20000 iterations, and the generated data every 1000 iterations are saved. We then inspect whether an earlier trained model may be used as follows. For \(\psi=2.3,1.3\), the number of features where the model outputs all zero converges to 181 out of 1071 features and stabilize at 181 for the remaining of training. We picked the second saved model which has 181 such features. For \(\psi=3.5\), the number of such features quickly drops to 181-182 and then fluctuates between 181-182 in the remaining of the training. We pick the third saved model which has the number of such features being 181 or 182. The final iterations picked is \(T=6000,7000,7000\) for \(\psi=3.5,2.3,1.3\), respectively, and these numbers of iterations are then used to calculate the privacy parameters.         

\paragraph{Model Architecture.} A serialization of the (non-private, i.e. \(\eps=\infty\)) model architectures used in the experiment can be found below. For the private version, we change the latent dimension from 128 to 64. 

(encoder): Sequential(\\
(0): Linear(in-feature=1071, out-feature=128, bias=True)\\
(1): Tanh()\\
)\\
(decoder): Sequential(\\
(0): Linear(in-feature=128, out-feature=1071, bias=True)\\
(1): Sigmoid()\\
)\\

Generator(\\
(model): Sequential(\\
(0): Linear(in-feature=128, out-feature=128)\\
(1): LeakyReLU(negative-slope=0.2)\\
(2): Linear(in-feature=128, out-feature=128)\\
(3): Tanh()\\
)\\
)

Discriminator(\\
(model): Sequential(\\
(0): Linear(in-feature=1071, out-feature=256, bias=True)\\
(1): LeakyReLU(negative-slope=0.2)\\
(2): Linear(in-feature=256, out-feature=1, bias=True)\
)\\
)


\subsection{Dimension-Wise Prediction on Sparse Features}\label{app.dwpsparse}

\mimic dataset contains several sparse features, i.e. features with small number of 1's. In fact, we found that 146 features do not have any 1's, and 706 more features have proportion of 1's less than 1\% in the original dataset. The presence of sparse features is a challenge for prediction-based evaluation since the classifier accuracy is unstable on sparse features. Moreover, the prediction score is unmeaningful when the train or test dataset has only one class present, such as AUROC being undefined when one class is presented in the test data. Even when AUROC is defined, the sparsity of 1's in dataset makes a classifier unable to learn anything and give a score of 0.5, or perform worse than a random classifer and give a score below 0.5, which is arguably unmeaningful. 

In our experiment, after an 80\%/20\% split into the training and testing datasets, AUROC scores are not defined on those 146 features with no 1's in the original dataset.
Of the rest 925 features, 35 and 69 of those have  AUROC prediction scores exactly at and below  0.5, respectively, on the original dataset. All 104 of those with unmeaningful AUROC have less than 0.2\% proportion of 1's in the original dataset. Figure \ref{fig:dim_pred_mimic_full} shows dimension-wise  prediction scatterplots of full 925 of 1071 features where AUROC is defined.

\begin{figure}[h]
        \centering
        \begin{subfigure}{0.245\textwidth}
        \includegraphics[width = \linewidth]{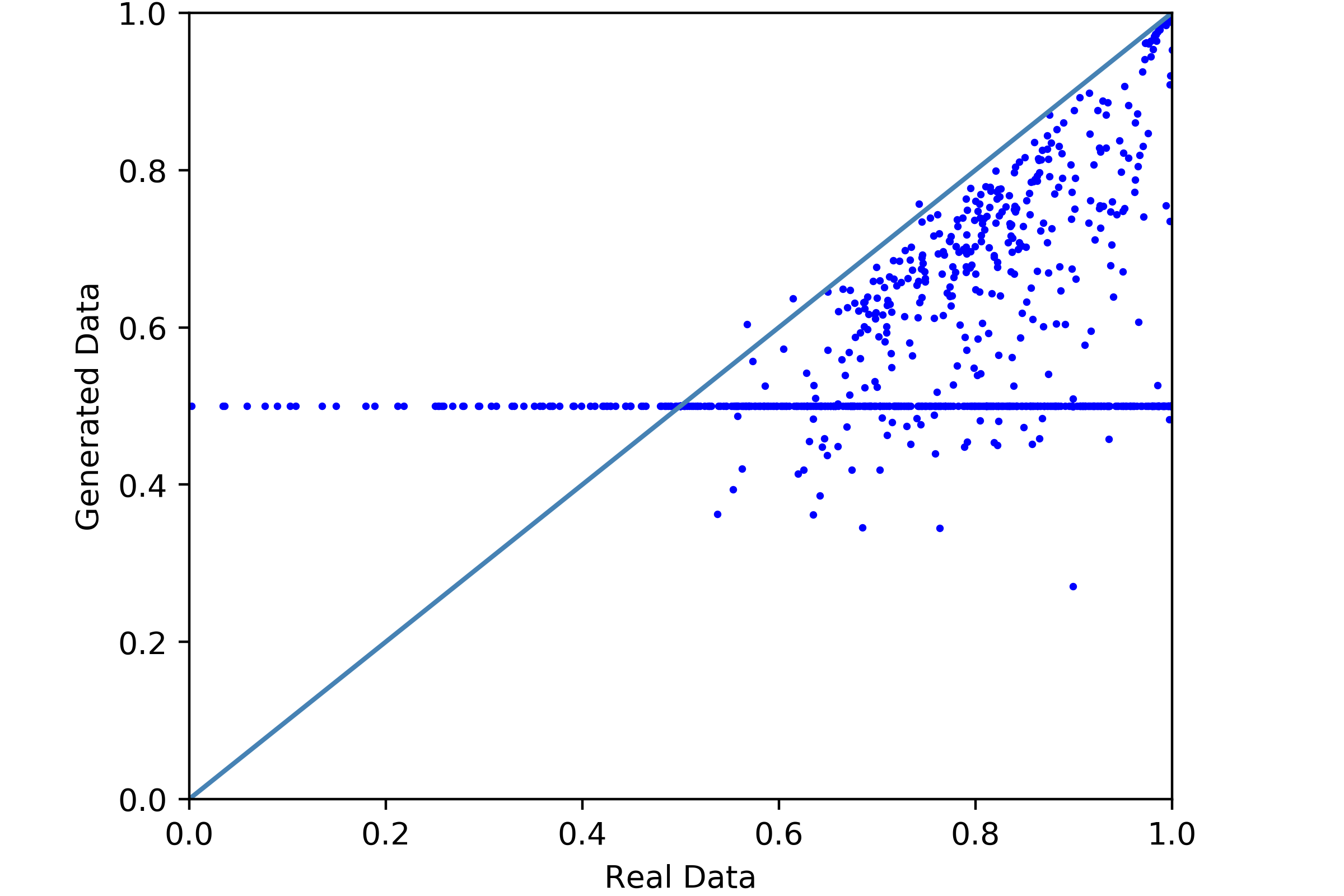}
        \caption{$\eps = \infty$}
        \end{subfigure}
        \begin{subfigure}{0.245\textwidth}
        \includegraphics[width = \linewidth]{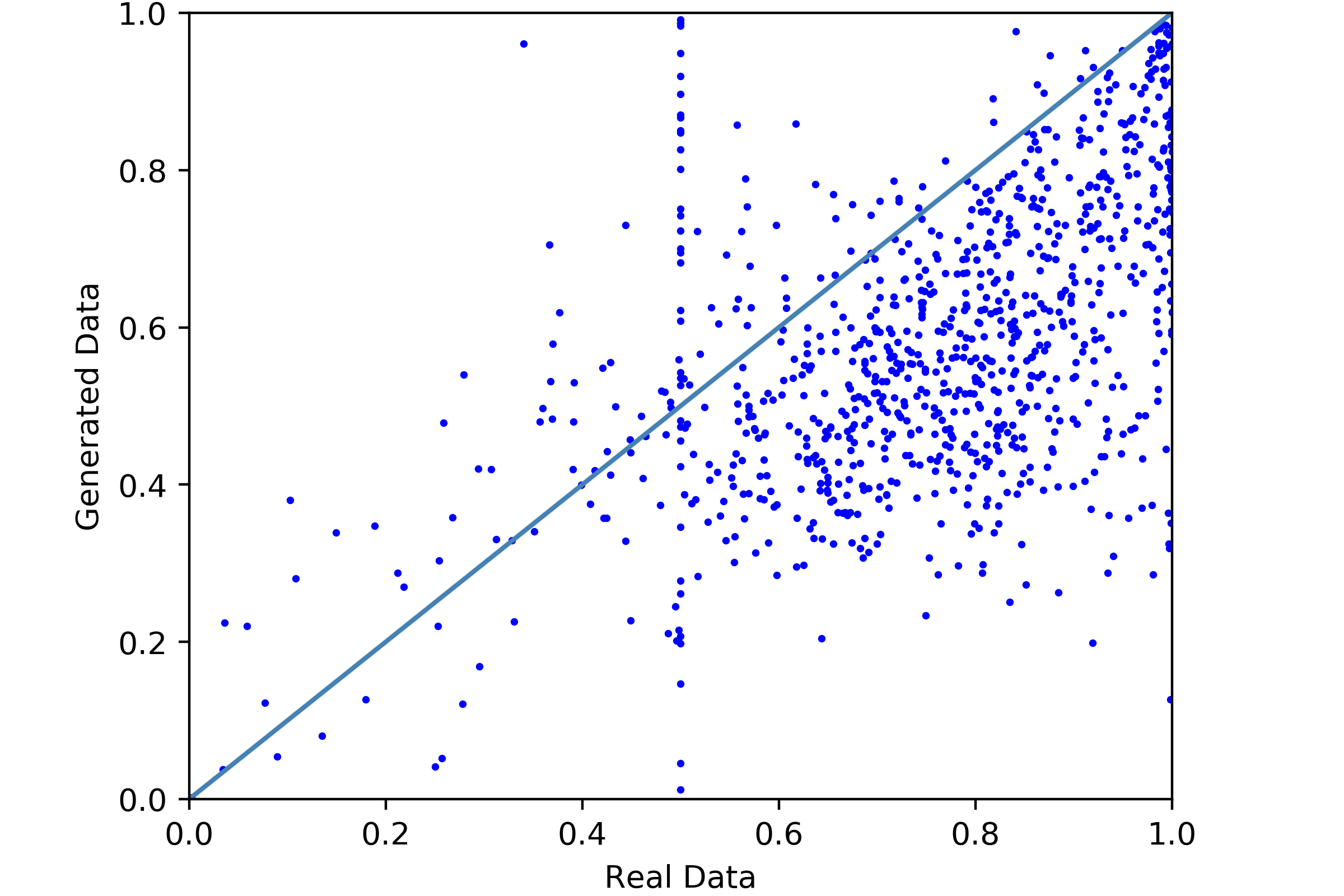}
        \caption{$\eps = 2.70$}
        \end{subfigure}
        \begin{subfigure}{0.245\textwidth}
        \includegraphics[width = \linewidth]{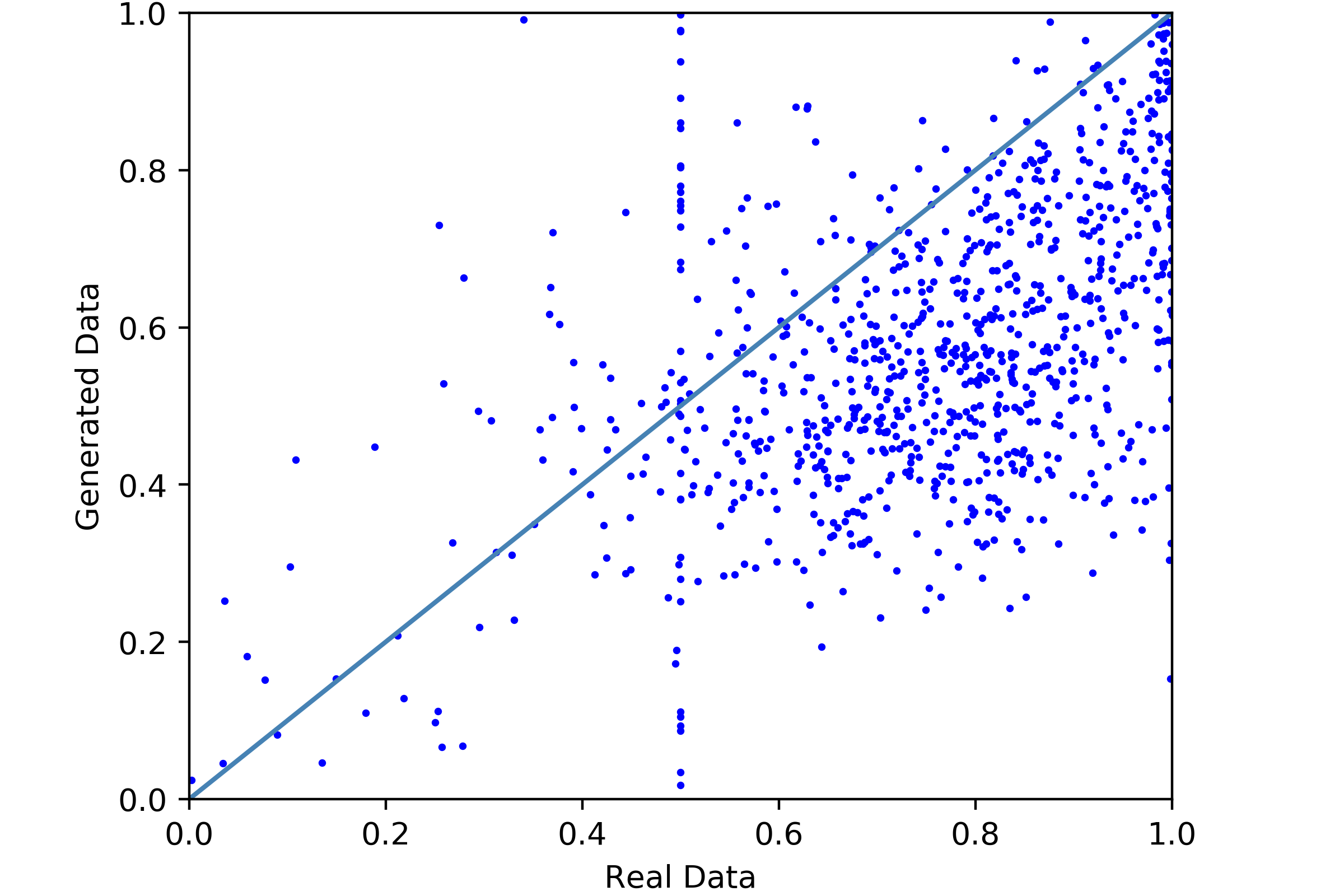}
        \caption{$\eps = 1.33$}
        \end{subfigure}
        \begin{subfigure}{0.245\textwidth}
        \includegraphics[width = \linewidth]{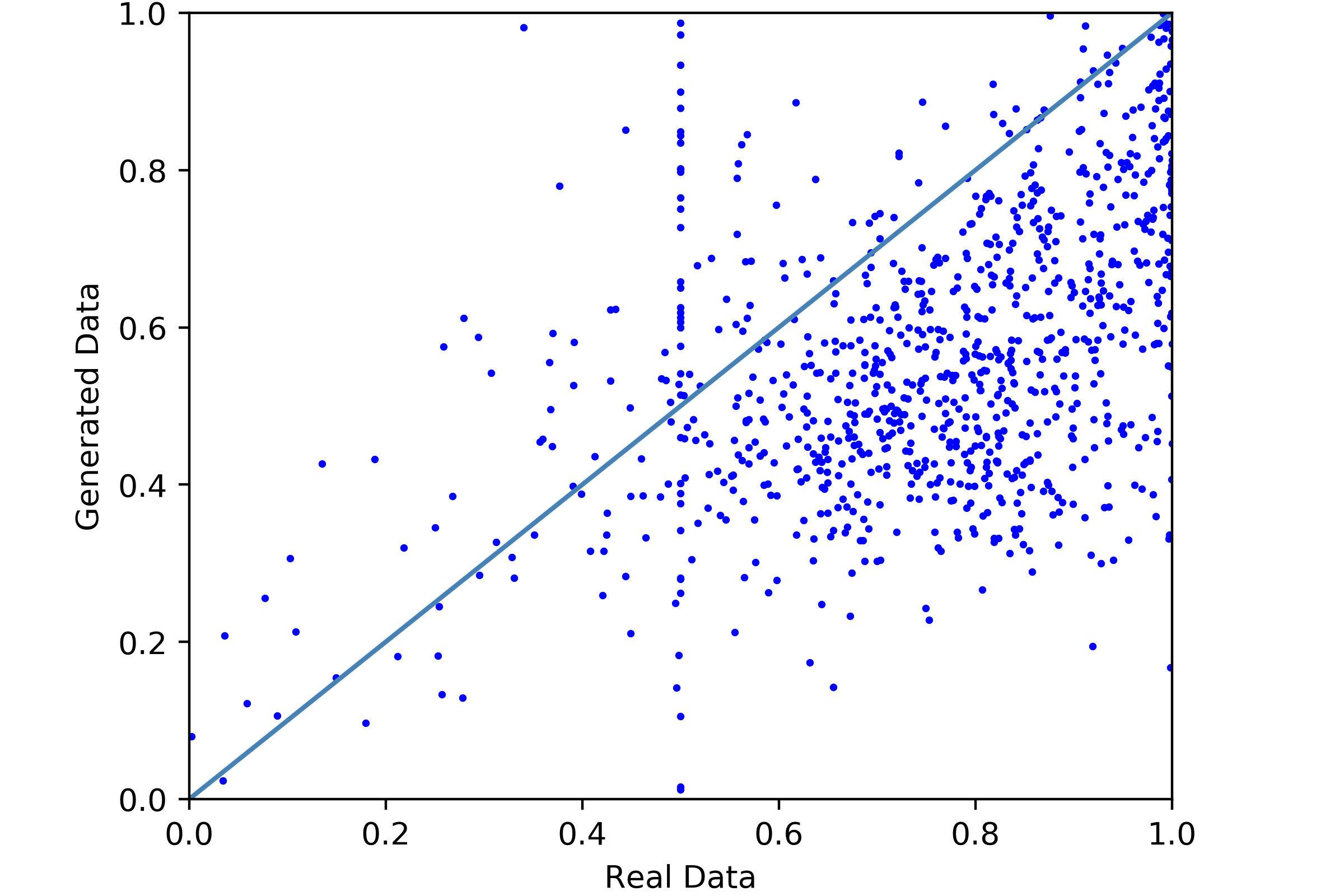}
        \caption{$\eps = 0.81$}
        \end{subfigure}     
                \caption{Dimension-wise  prediction scatterplots of 925 of 1071 features which AUROC prediction scores are defined on the original dataset. AUROC is not defined when the test set of the original data has only one class on that feature. Prediction scores are by a logistic regression classifier trained on the original binary dataset \mimic and on synthetic datasets generated by \dpgan at different privacy parameters $\eps$.                   
                }
                \label{fig:dim_pred_mimic_full}
\end{figure}

 When \(\eps=\infty\), \dpgan generates all 0's in many of the sparse features in the synthetic dataset, including  all 146 features with no 1's in the original datasets. Those features obtain scores of 0.5, giving the horizontal line \(y=0.5\) in Figure \ref{fig:dim_pred_mimic_full} (a). When noise is injected, \dpgan generates few but enough of 1's in all features that AUROC is not 0.5, and thus the horizontal line is no longer present in Figure \ref{fig:dim_pred_mimic_full} (b)-(d). The vertical line \(x=0.5\) represents 69  features that the classifier is unable to learn in the orignal dataset due to the sparsity and learn from random noise injected in the synthetic data generator.

\cite{xie2018differentially} presented dimension-wise prediction by deleting features which synthetic dataset contain no 1's. While this conveniently deletes points on the horizontal and vertical lines, the features being deleted are dependent on the synthetic dataset, which can obscure the presentation of its quality. For example, if the generator performs poorly on non-sparse features by outputting only one class, that feature is not on the plot rather than being presented as far away from the line \(y=x\). Since we observe that prediction scores on sparse features are unstable, thus necessarily  showing large variances in the context of differential privacy, we propose to delete features whose  proportions of 1's in the original dataset are below a threshold. The set of deleted feature is therefore fixed and independent of synthetic data to be evaluated. In Figure \ref{fig:dim_pred_mimic_cut_1}, we show dimension-wise  prediction scatterplots of \mimic dataset with the thresholds set at 1\%, 2.5\%, and 5\%.  We are able to more clearly see the performance of \dpgan than plotting all 925 features and better observe a slight change of performance as \(\eps\) decreases across three \(\eps\) values.               

\begin{figure}[h]
        \centering
        \begin{subfigure}{0.245\textwidth}
        \includegraphics[width = \linewidth]{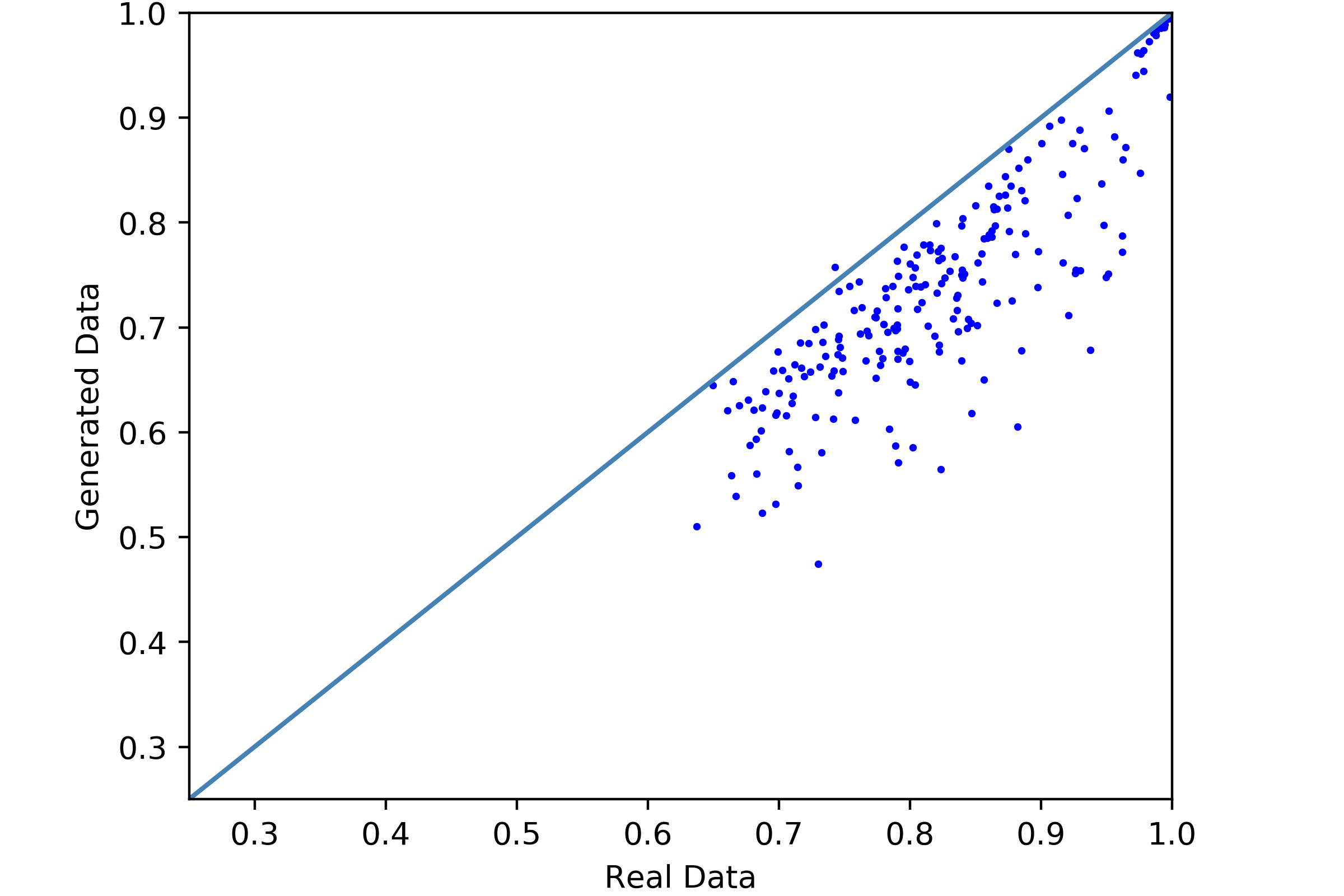}
        \caption{Features with \(\geq1.0\)\% of 1's, $\eps = \infty$}
        \end{subfigure}        
        \begin{subfigure}{0.245\textwidth}
        \includegraphics[width = \linewidth]{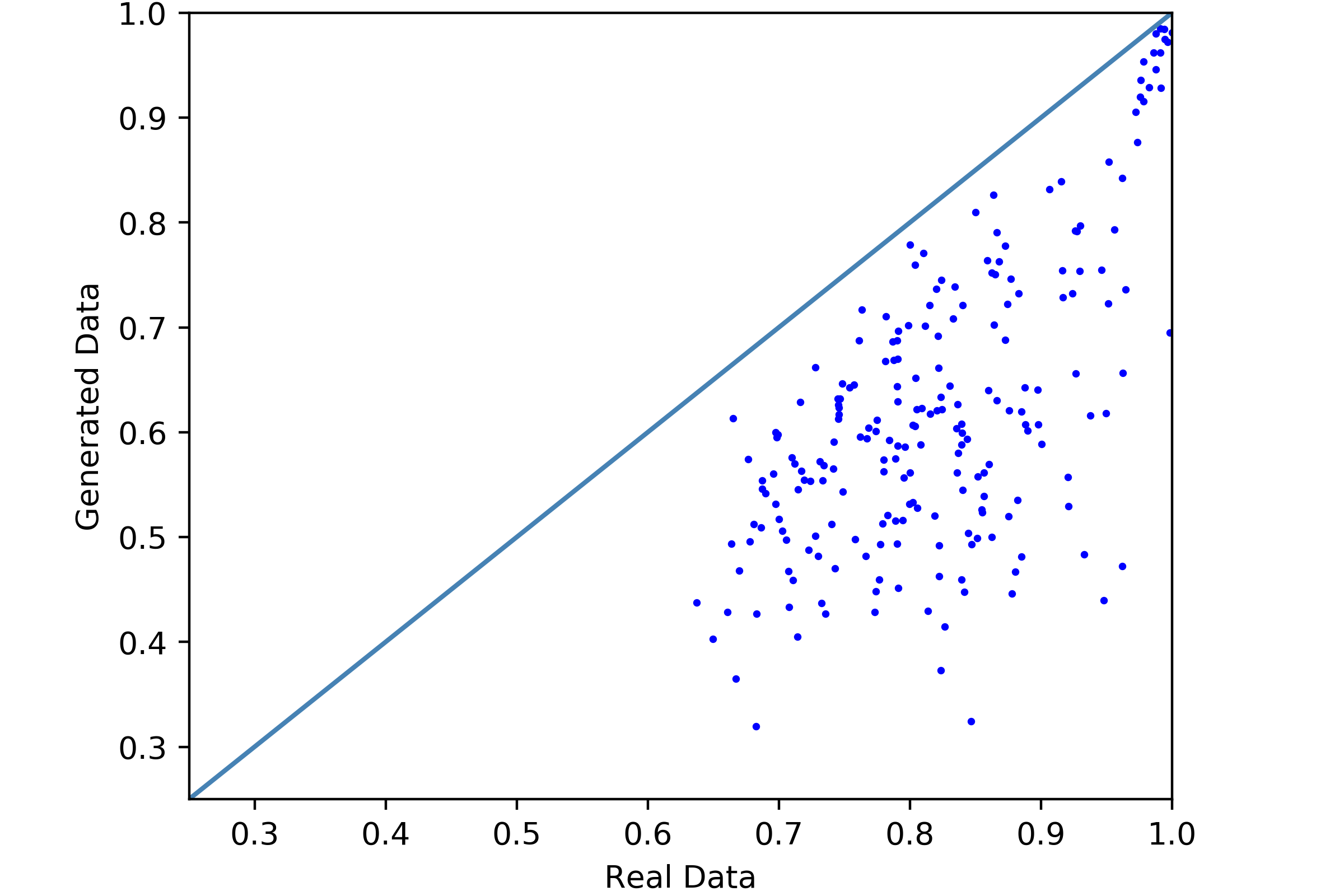}
        \caption{$\eps = 2.70$}
        \end{subfigure}        
        \begin{subfigure}{0.245\textwidth}
        \includegraphics[width = \linewidth]{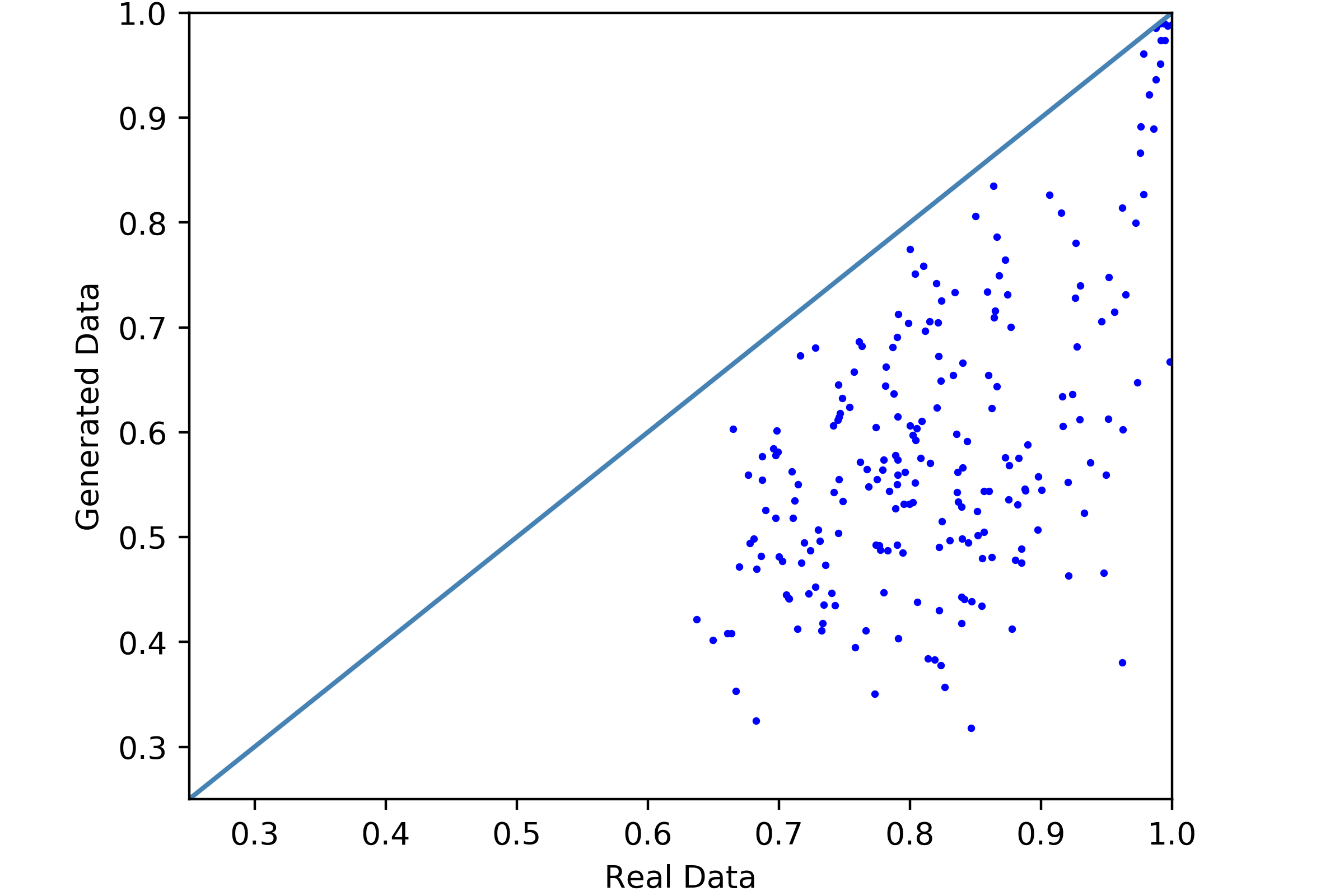}
        \caption{$\eps = 1.33$}
        \end{subfigure}
        \begin{subfigure}{0.245\textwidth}
        \includegraphics[width = \linewidth]{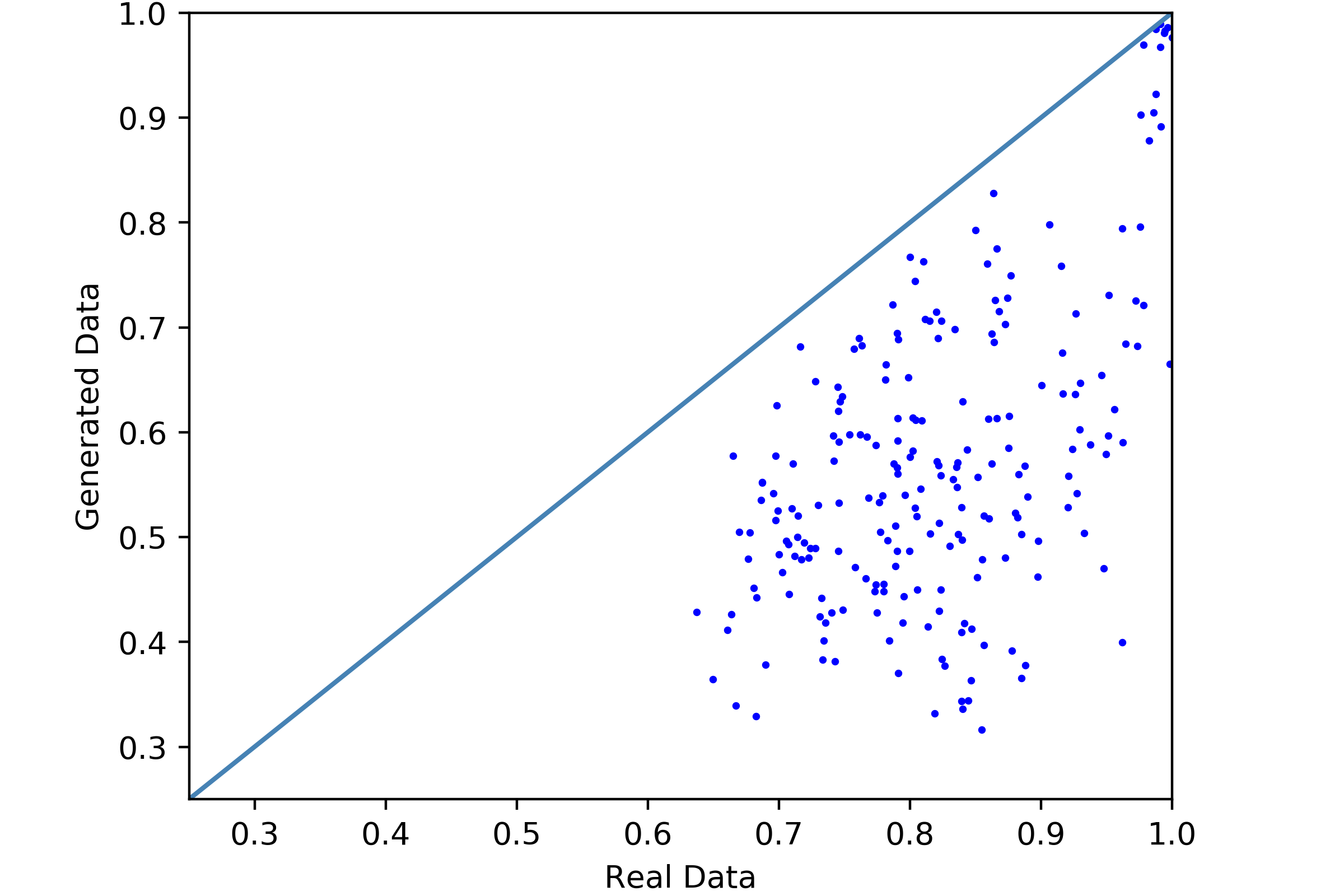}
        \caption{$\eps = 0.81$}
        \end{subfigure}
        \begin{subfigure}{0.245\textwidth}
        \includegraphics[width = \linewidth]{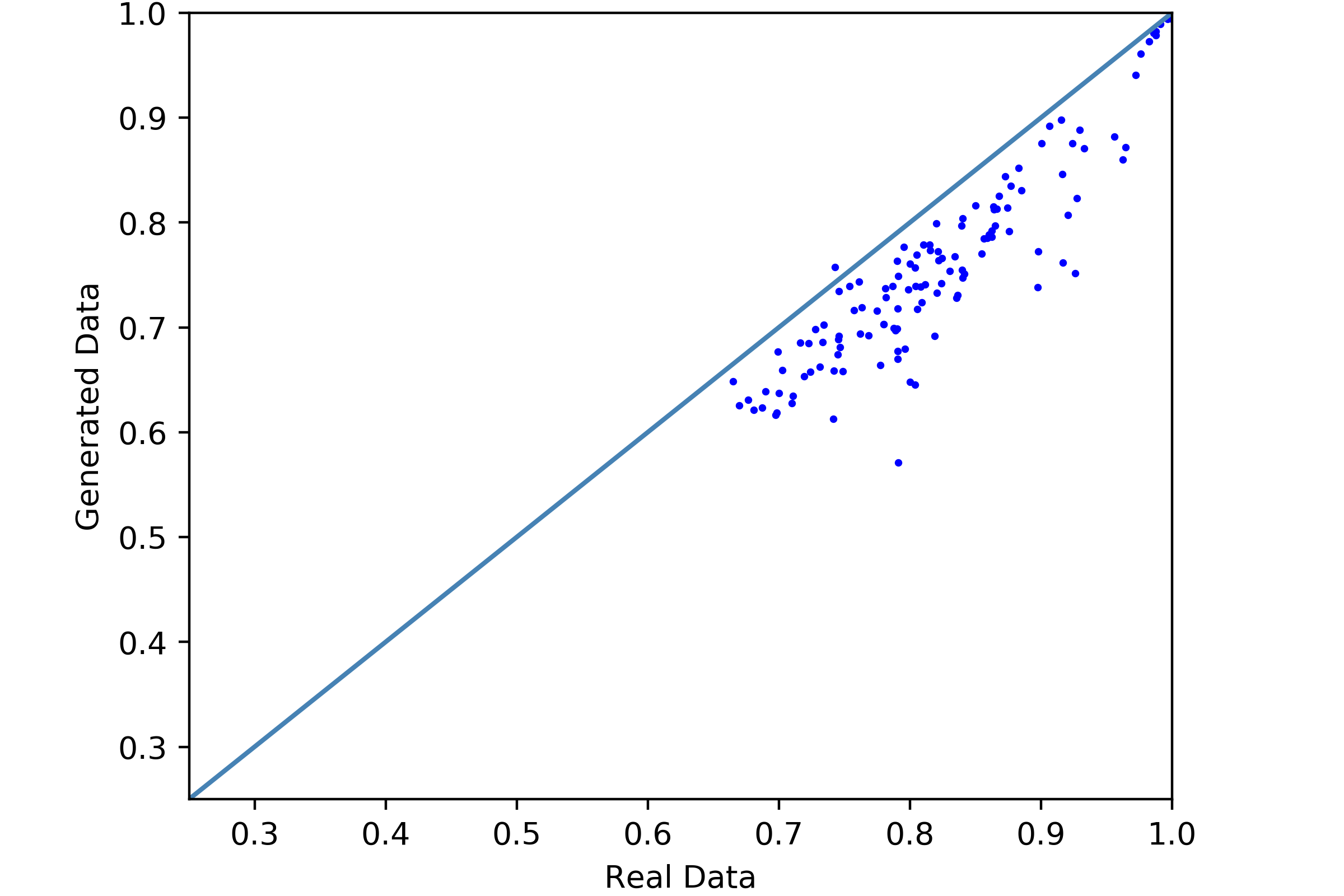}
        \caption{Features with \(\geq2.5\)\% of 1's, $\eps = \infty$}
        \end{subfigure}        
        \begin{subfigure}{0.245\textwidth}
        \includegraphics[width = \linewidth]{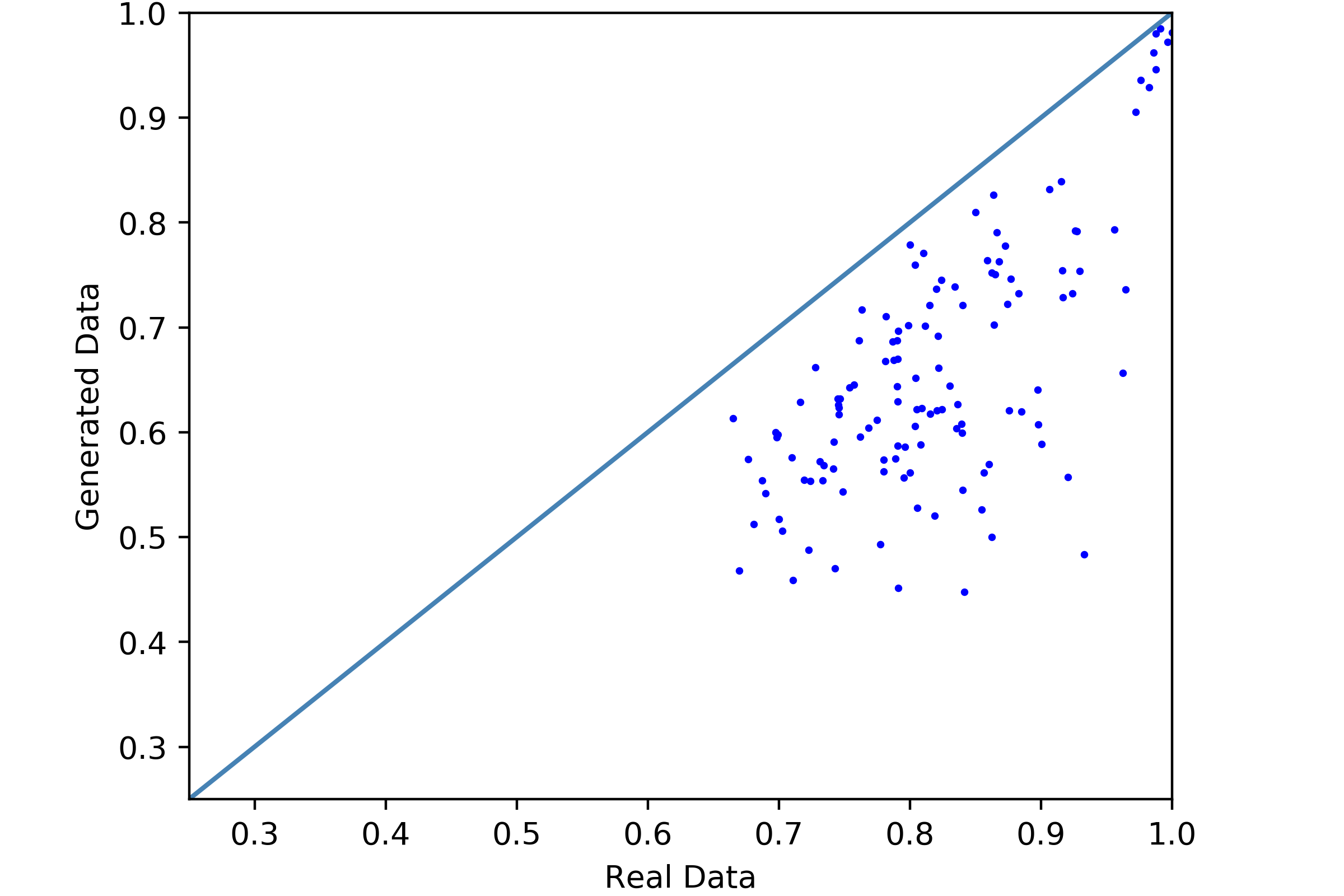}
        \caption{$\eps = 2.70$}
        \end{subfigure}        
        \begin{subfigure}{0.245\textwidth}
        \includegraphics[width = \linewidth]{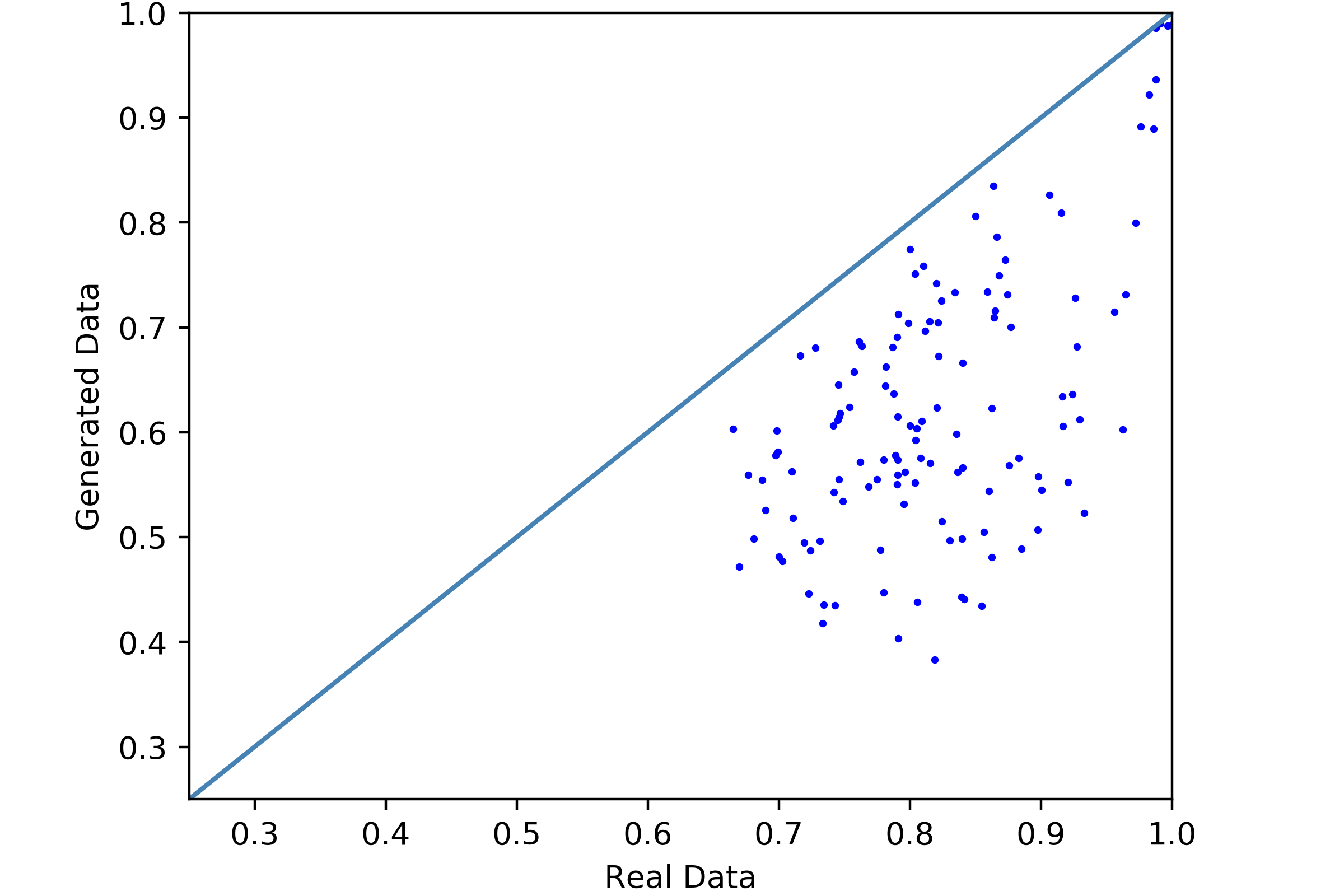}
        \caption{$\eps = 1.33$}
        \end{subfigure}
        \begin{subfigure}{0.245\textwidth}
        \includegraphics[width = \linewidth]{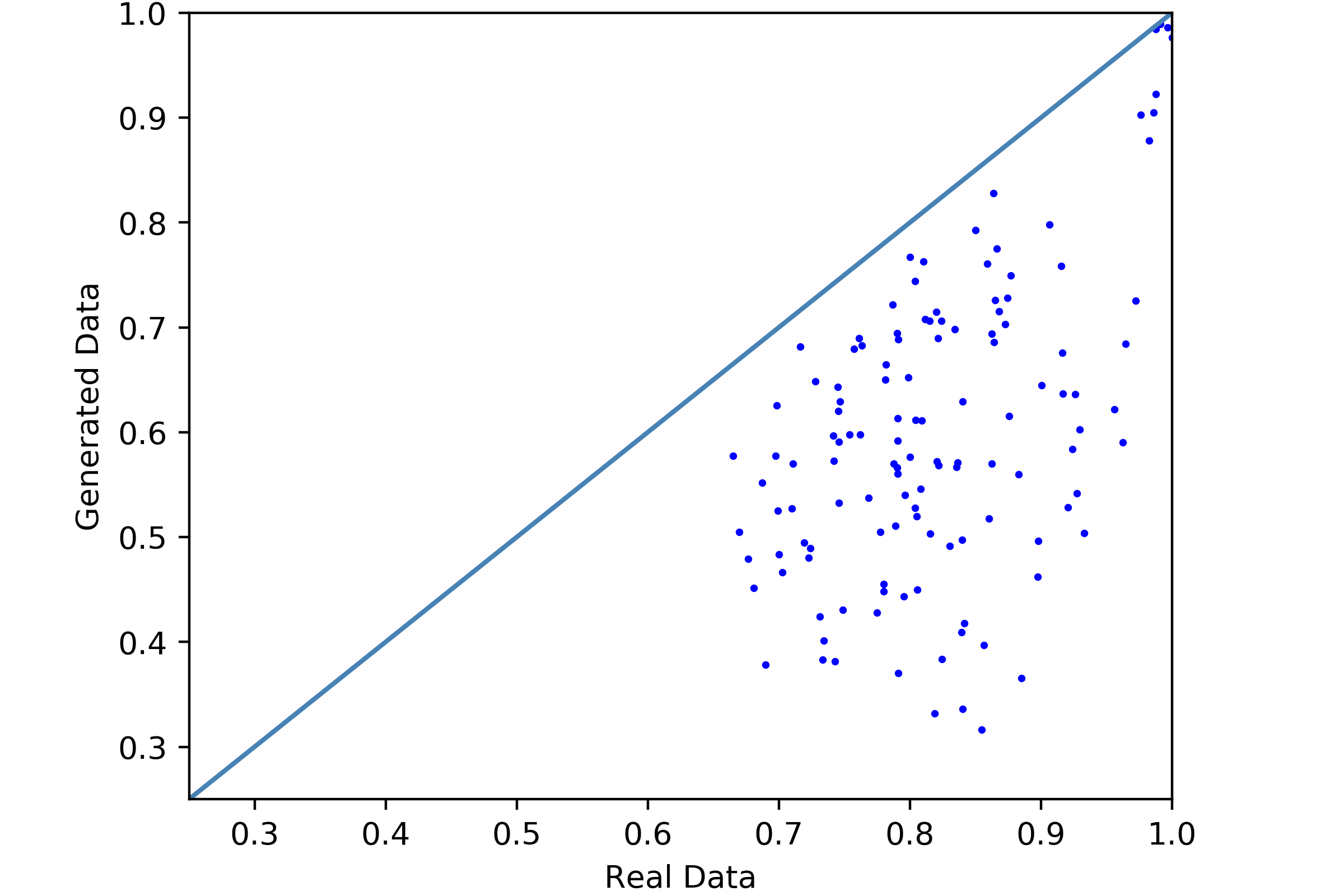}
        \caption{$\eps = 0.81$}
        \end{subfigure}
        \begin{subfigure}{0.245\textwidth}
        \includegraphics[width = \linewidth]{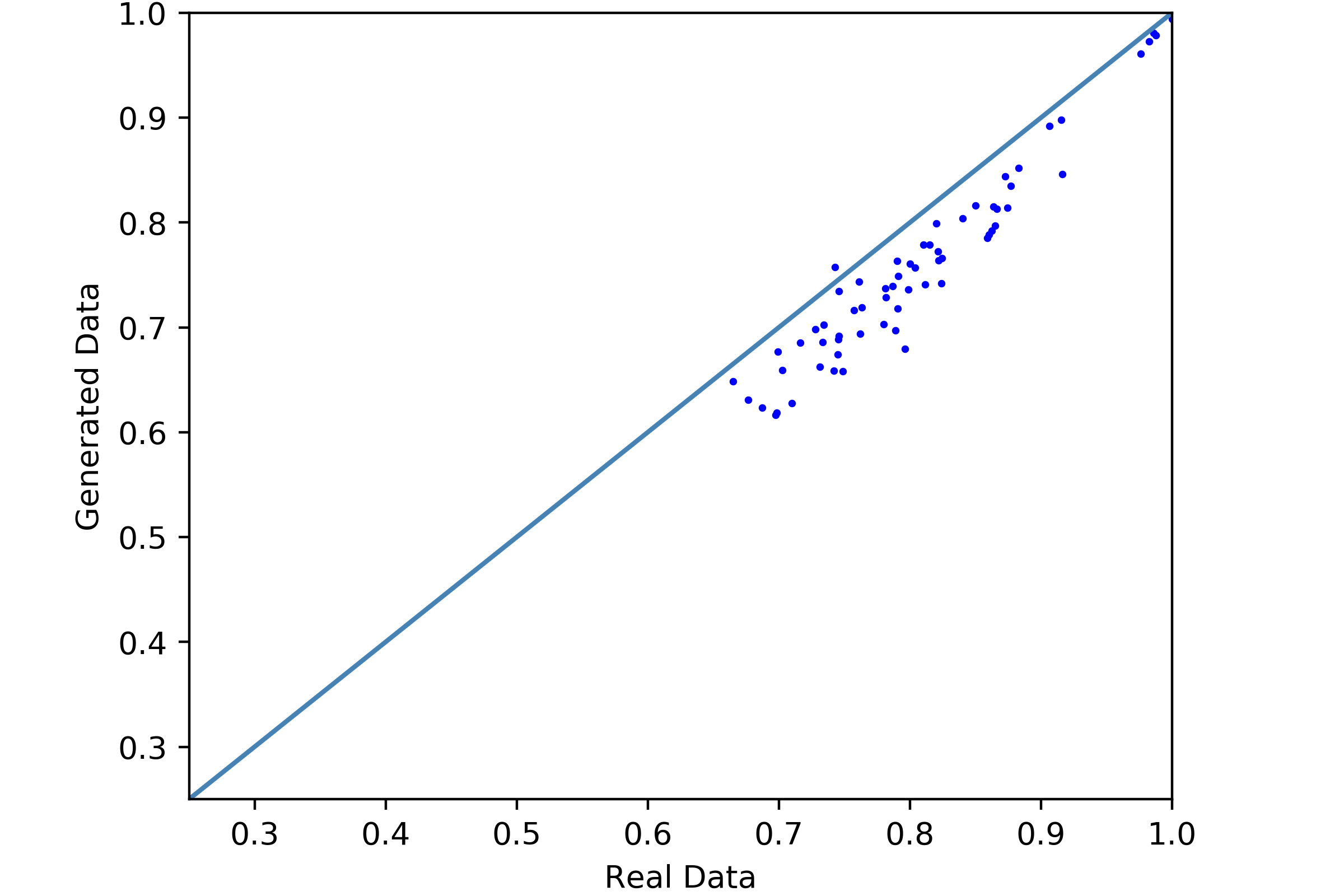}
        \caption{Features with \(\geq5.0\)\% of 1's, $\eps = \infty$}
        \end{subfigure}        
        \begin{subfigure}{0.245\textwidth}
        \includegraphics[width = \linewidth]{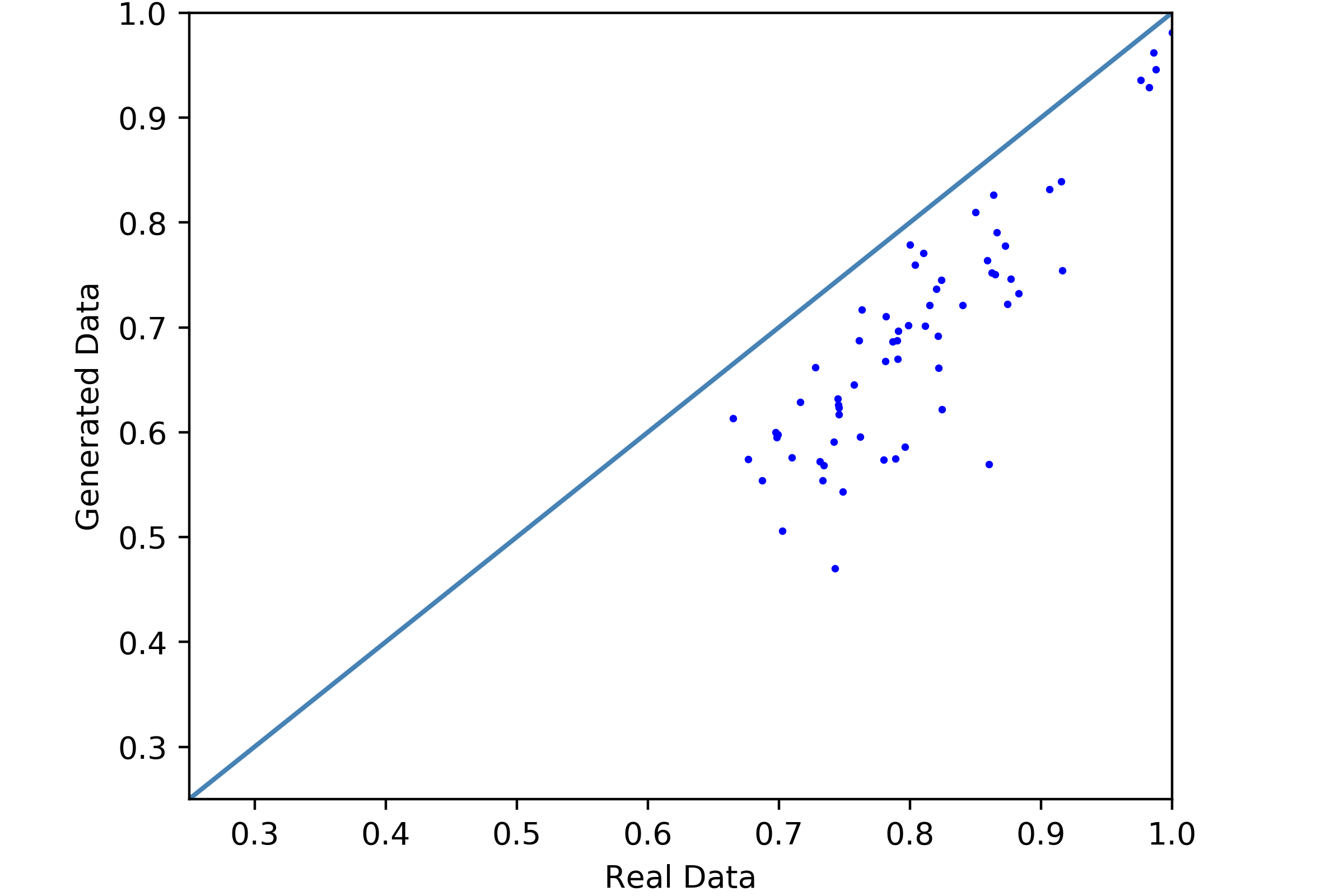}
        \caption{$\eps = 2.70$}
        \end{subfigure}        
        \begin{subfigure}{0.245\textwidth}
        \includegraphics[width = \linewidth]{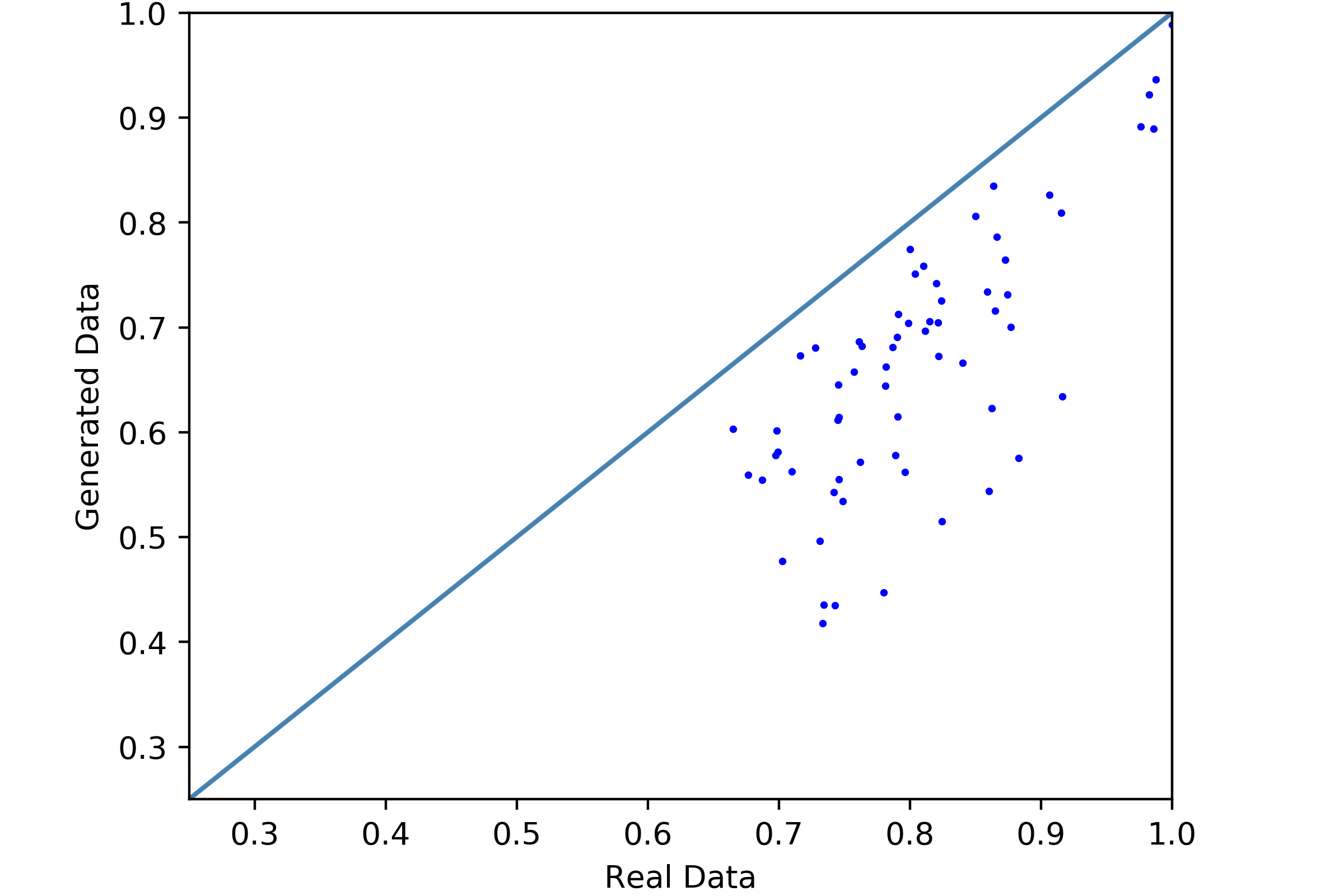}
        \caption{$\eps = 1.33$}
        \end{subfigure}
        \begin{subfigure}{0.245\textwidth}
        \includegraphics[width = \linewidth]{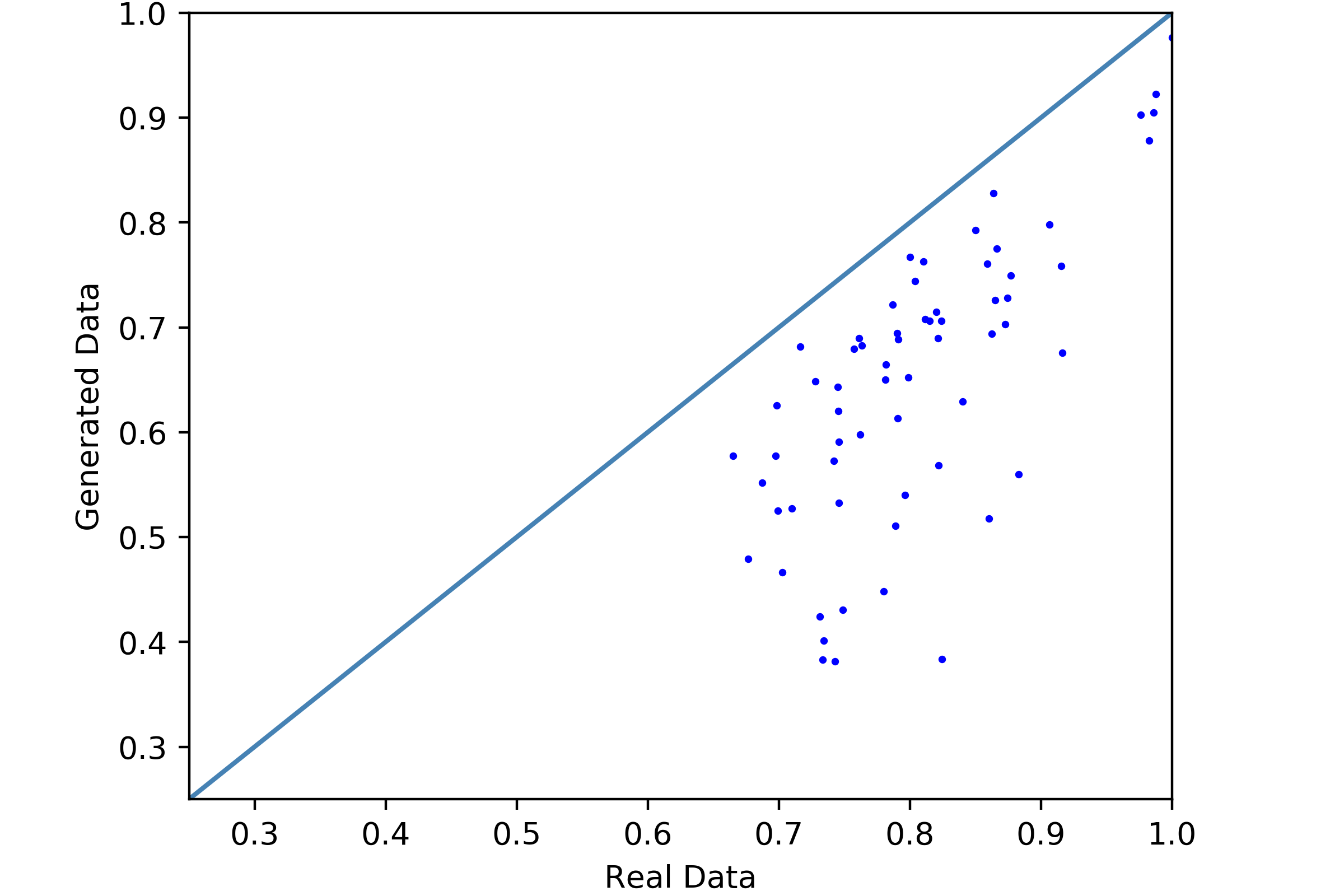}
        \caption{$\eps = 0.81$}
        \end{subfigure}
                \caption{Dimension-wise  prediction scatterplots of 219 of 1071 features with at least 1\% of 1's,  127 features with at least 2.5\% of 1's, and 64 features with at least 5\% of 1's in the original dataset. Prediction scores are by a logistic regression classifier trained on the original binary dataset \mimic and on synthetic datasets generated by \dpgan at different privacy parameters $\eps$.                   
                }
                \label{fig:dim_pred_mimic_cut_1}
\end{figure}

Tables of all probability and prediction scores on each features and additional dimension-wise prediction plots on features with different thresholds are available at \url{https://github.com/DPautoGAN/DPautoGAN/tree/master/results/prediction-plots/DP-auto-GAN%20MIMIC-III}. 

\section{Additional Details of Experiments on ADULT data}\label{app.adulttrain}

In this section, we describe experimental details of \dpgan, DP-WGAN, DP-VAE, and DP-SYN for reproducibility.

For \dpgan and in original implementation of existing algorithms, data are preprocessed by one-hot encoding categorical features, and by max-min scalar on continuous features, i.e. mapping maximum to 1 and minimum to 0. While maximum and minimum are technically leaking privacy, and hence not public as assumed in our framework, they are sometimes treated as publicly available such as in synthetic data challenge \cite{NIST2018Match3} and assumed in implementation of existing works. In other usage, reasonable cap can be assumed on features, or a standard differentially private query on minimum and maximum of a feature on a bounded range can be used.


Synthetic data are generated from each trained generative model to a size of 32561, the training size (which is two thirds) of ADULT data. (Results of dimension-wise prediction we observed are similar with synthetic data of full ADULT size 48842).

In parameter tuning of pre-existing methods from original authors, we keep the whole framework including preprocessing of ADULT data, architecture, optimizer, and other hyper-parameters, except the noise multiplier. We attempted to keep original architectures as they are likely optimized in original work for generating synthetic ADULT data. We tune the noise multiplier on several higher values to achieve a smaller \(\eps\) values needed, and pick the est performing model across noise multiplier values used. Details for each algorithm can be found in the remainder of this section.
 
\paragraph{Computing Infrastructure and Runtime.}
\dpgan are run on GCP: n1-highmem-2 (2 vCPUs, 13 GB memory) with 1 x NVIDIA Tesla K80. The combined training on  autoencoder and GAN are done in approximately 2-3 hours for each setting of parameter. DP-WGAN, DP-VAE, DP-SYN are run on a personal computer with processor Intel(R) Core(TM) i7-6600U CPU @ 2.60GHz 2.81 GHz and RAM 16.0 GB. Training of DP-WGAN, DP-VAE, DP-SYN for each parameter setting finishes in  between approximately15 minutes to 2 hours. Any of all evaluation metrics to an ADULT synthetic dataset finishes in less than 1-2 minutes. 

\subsection{\dpgan Training}
\paragraph{Preprocessing.} Original ADULT dataset contains 15 features, one of which is a positive integer feature named ``fnlwgt'' (final weight). This feature is discarded as unrelated to each individual person in the census, but rather the additional feature US census created by mapping a person to an estimated weight  of another demographic dataset. Two features ``education'' and ``education-num'' are the same feature representing in a different format -- string or a positive integer. In particular, education consists of 16 levels of education, and education-num represents them as numbers 1,2,...,16. Hence, we remove one of these two features and treat this column as one categorical feature. In the end, we have 9 categorical features, one of which is a binary label named ``salary'', and four continuous features.

ADULT data consists of 48842 datapoints, partitioned into 32561 (two-thirds)\ for training and 16281 (one-third)\  for testing. We follow the same partitioning by training our \dpgan on 32561 samples and holding the rest only for synthetic data evaluation.

\paragraph{Training.}The autoencoder was trained via Adam with Beta 1 = 0.9, Beta 2 = 0.999, and a learning rate of 0.005 for 10,000 minibatches of size 64 and a microbatch size of 1. The L2 clipping norm was selected to be the median L2 norm observed in a non-private training loop, equal to 0.012. The noise multiplier was then calibrated to achieve the desired privacy guarantee. The final noise multiplier used for \(\eps=0.36,0.51,1.01\) are \(\psi=5,2.5,1.5\), respectively.

The GAN was composed of two neural networks, the generator and the discriminator. The generator used a ResNet architecture, adding the output of each block to the output of the following block. It was trained via RMSProp with alpha = 0.99 with a learning rate of 0.005. The discriminator was a simple feed-forward neural network with LeakyReLU hidden activation functions, also trained via RMSProp with alpha = 0.99. The L2 clipping norm of the discriminator was set to 0.022. The pair was trained on 15,000 minibatches of size 128 and a microbatch size of 1, with 15 updates to the discriminator per 1 update to the generator. Again, the noise multiplier was then calibrated to achieve the desired privacy guarantee. The final noise multiplier used for \(\eps=0.36,0.51,1.01\) are \(\psi=8,7.5,3.5\), respectively.

\paragraph{Model Architecture.}A serialization of the model architectures used in the experiment can be found below. Note that the number of latent dimension, 64, is the same as in the  implementation of DP-SYN.

Autoencoder(\\
(encoder): Sequential(\\
0: Linear(in-features=106, out-feature=60, bias=True)\\
(1): LeakyReLU(negative-slope=0.2)\\
(2): Linear(in-feature=60, out-feature=15, bias=True)\\
(3): LeakyReLU(negative-slope=0.2)\\
)\\
(decoder): Sequential(\\
(0): Linear(in-feature=15, out-feature=60, bias=True)\\
(1): LeakyReLU(negative-slope=0.2)\\
(2): Linear(in-feature=60, out-feature=106, bias=True)\\
(3): Sigmoid()\\
)\\
)

Generator(\\
(block-0): Sequential(\\
(0): Linear(in-feature=64, out-feature=64, bias=False)\\
(1): BatchNorm1d()\\
(2): LeakyReLU(negative-slope=0.2)\\
)\\
(block-1): Sequential(\\
(0): Linear(in-feature=64, out-feature=64, bias=False)\\
(1): BatchNorm1d()\\
(2): LeakyReLU(negative-slope=0.2)\\
)\\
(block-2): Sequential(\\
(0): Linear(in-feature=64, out-feature=15, bias=False)\\
(1): BatchNorm1d()\\
(2): LeakyReLU(negative-slope=0.2)\\
)\\
)

Discriminator(\\
(model): Sequential(\\
(0): Linear(in-feature=106, out-feature=70, bias=True)\\
(1): LeakyReLU(negative-slope=0.2)\\
(2): Linear(in-feature=70, out-feature=35, bias=True)\\
(3): LeakyReLU(negative-slope=0.2)\\
(4): Linear(in-feature=35, out-feature=1, bias=True)\
)\\
)
\subsection{DP-WGAN Training}\label{app.dpwgan}
\paragraph{Preprocessing.} The algorithm of WGAN \cite{frigerio2019differentially} (their implementation can be found at \url{https://github.com/SAP-samples/security-research-differentially-private-generative-models}) is used and implemented only for discrete data. The preprocessing automatically delete continuous columns. Hence, DP-WGAN preprocesses ADULT data into 9 categorical features, one of which is the binary ``salary'' label. Each categorical feature is then one-hot encoded before feeding into DP-WGAN training.  

\paragraph{Parameter Tuning.} The original implementation uses noise multiplier \(\psi=7\), originally for a higher values of epsilons as mentioned in \cite{frigerio2019differentially}. We found that the training cannot achieve \(\eps=0.51\) just after one epoch, while the typical training requires multiple (tens to almost a hundred) epochs. Hence, we train with higher noise parameters \(\psi=7,9,11,13,15,19,23,27.5,35\). For \(\eps=0.36\), we also attempted \(\psi=40,45,50,60,70,80,100\) which still gives \(\eps>0.36\) only after one epoch. However, if we relax to \(\eps=0.37\), we are able to train a few epochs at high noise level to achieve the privacy guarantee, so we allow \(\eps=0.37\).

For each of the noise parameter, we select the generated data from the epoch before privacy busget is exhausted. The synthetic data is evaluated by dimension-wise prediction. We exclude a few cases where prediction score of the salary feature is zero, indicating possibly a mode collapse. Then, we pick the model for each \(\epsilon\) setting across different noise multiplier with highest total prediction score. We note that overall performance of DP-WGAN are comparable across \(\psi\in\{7,9,11,13\}\) for \(\eps\geq0.8\) and less predictable for smaller \(\eps\). The final \(\psi\) we use for \(\eps=0.36,0.51,1.01\) are \(\psi=27.5,19,9\), respectively. Dimension-wise prediction across different \(\epsilon\) and \(\psi\) are available at \url{https://github.com/DPautoGAN/DPautoGAN/tree/master/results/prediction-plots/DP-WGAN}.

\subsection{DP-VAE Training}\label{app.dpvae}
\paragraph{Preprocessing.} An implementation of DP-VAE (not by the  author of original work \cite{acs2018differentially}) can be found at  \url{https://github.com/SAP-samples/security-research-differentially-private-generative-models/blob/master/Tutorial_dp-VAE.ipynb} for training on ADULT data.
We slightly modify the size of the training set to 32561 sample points, as  used in original dataset and our  \dpgan training. Though the original implementation uses all 15 features, we preprocess the data exactly the same way as \dpgan preprocessing for a fairer reporting. 

We, however, observed that dimension-wise prediction is either of similar overall quality when using original 15 dimensions instead of what is reported. Dimension-wise predictions of 13 and 15 features on several values of \(\eps\) is available at \url{https://github.com/DPautoGAN/DPautoGAN/tree/master/results/prediction-plots/DP-VAE/vae-13-vs-15-features}.     
    \paragraph{Parameter Tuning.} The tutorial defaults noise multiplier at \(\psi=1\). To get smaller \(\eps\), we test \(\psi=1,1.5,2,2.5,3,3.5,4,4.5,5\) and additionally \(\psi=5.5,6,6.5,7,7.5,8\) for \(\eps=0.36\). Standard validation accuracy score of VAE in the training process from the keras package are used. We observed an expected pattern that accuracy increases from very small \(\psi\) until it drops again at some high \(\psi\) value, where the peak of \(\psi\) is larger for smaller \(\eps\). As a result, we extended \(\psi\) as mentioned for smaller \(\eps\) to be certain that we have reached such peak. Then, the model from noise multiplier which gives highest accuracy score is used. The final \(\psi\) used for \(\eps=0.36,0.51,1.01\) are \(\psi=5,4,2\), respectively.  

\subsection{DP-SYN Training}\label{app.dpsyn}
\paragraph{Preprocessing.} The original implementation of DP-SYN 
deletes ``fnlwgt''
and a redundant ``education-num'' as in our preprocessing. However, it group some similar educational levels into one category, resulting in 8 categories rather than 16. It preprocesses capital-gain and capital-loss into categorical features with 3 classes: ``low'', ``medium'', and ``high.'' Hence, the final preprocessed data has 2 continuous  and 11 categorical features. Because one categorical feature, education, is preprocessed differently from other algorithms, it is excluded from reporting the sum of diversity divergence scores in Table \ref{tab:diversity}.

\begin{figure}[h]
        \centering
        \begin{subfigure}{0.32\textwidth}
        \includegraphics[width = \linewidth]{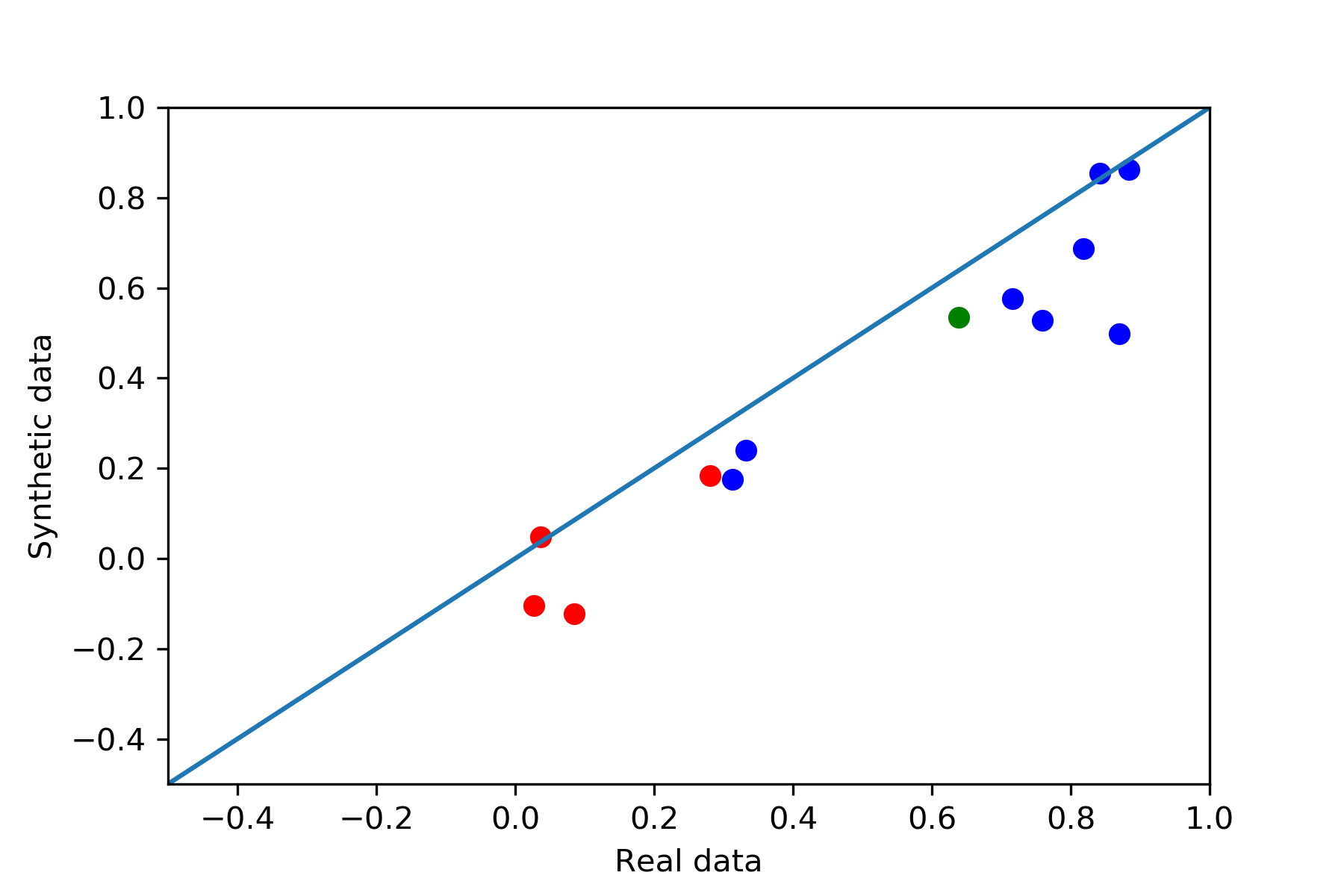}
        \caption{Our preprocessing, $\eps = 0.5$}
        \end{subfigure}
        \begin{subfigure}{0.32\textwidth}
        \includegraphics[width = \linewidth]{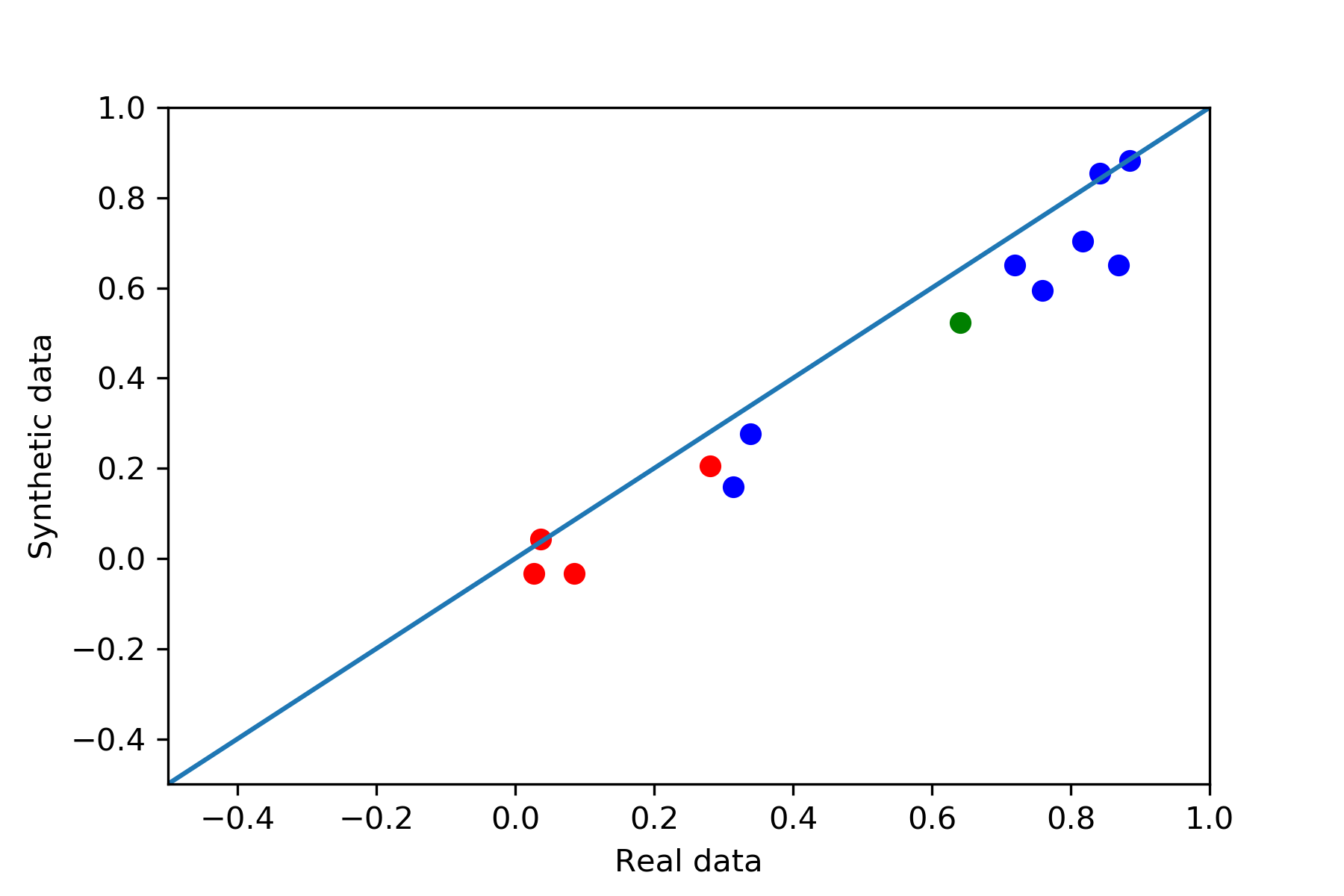}
        \caption{$\eps = 0.8$}
        \end{subfigure}
        \begin{subfigure}{0.32\textwidth}
        \includegraphics[width = \linewidth]{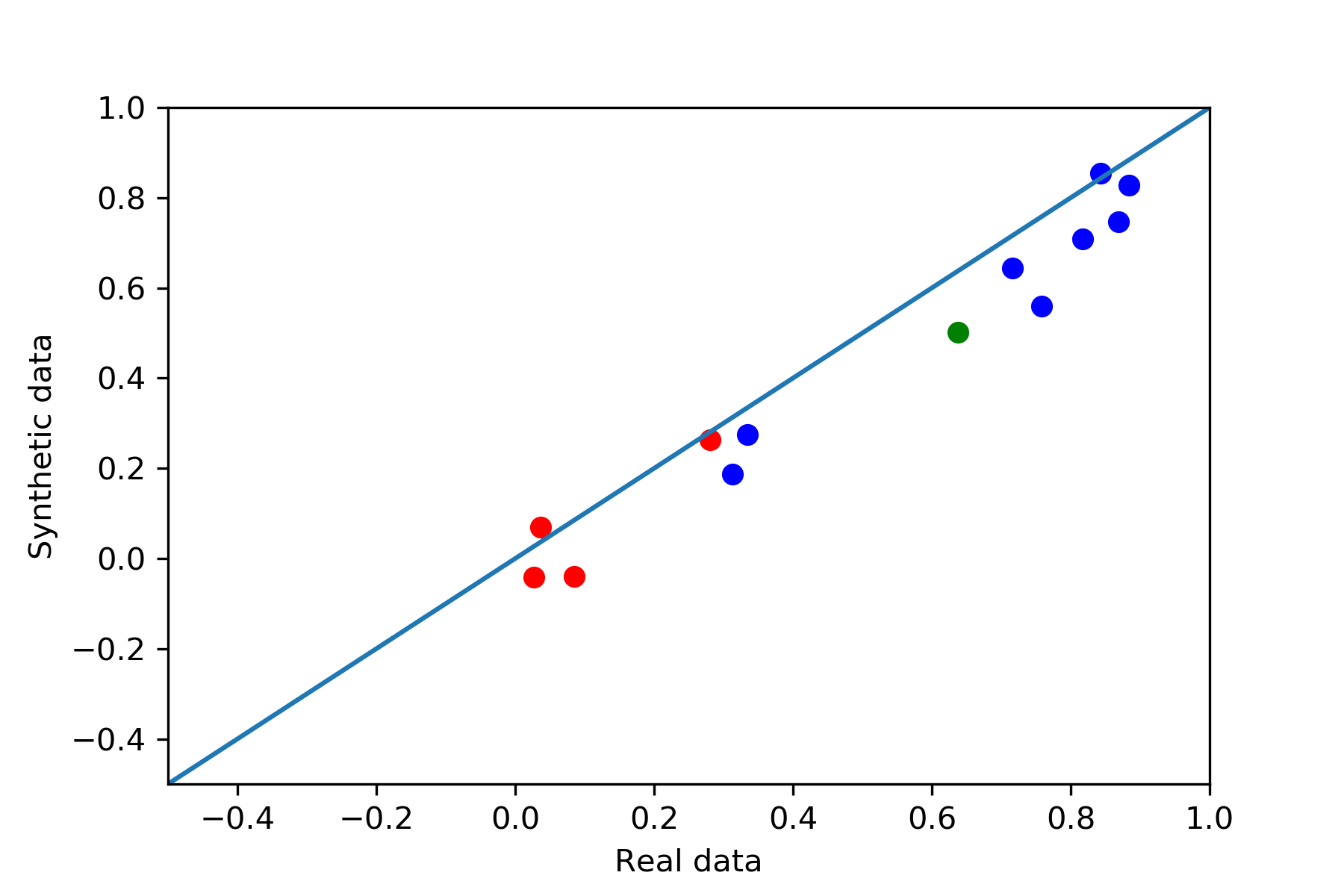}
        \caption{$\eps = 1.4$}
        \end{subfigure}   
        \begin{subfigure}{0.32\textwidth}
        \includegraphics[width = \linewidth]{abay/0_5_4_1.png}
        \caption{Original preprocessing, $\eps = 0.5$}
        \end{subfigure}
        \begin{subfigure}{0.32\textwidth}
        \includegraphics[width = \linewidth]{abay/0_8_4_1.png}
        \caption{$\eps = 0.8$}
        \end{subfigure}
        \begin{subfigure}{0.32\textwidth}
        \includegraphics[width = \linewidth]{abay/1_4_4_1.png}
        \caption{$\eps = 1.4$}
        \end{subfigure}           
                \caption{Dimension-wise prediction of synthetic data generated by DP-SYN using our preprocessing (top row) and the original preprocessing (bottom row) for different \(\eps\). The performance is measured by proximity to \(y=x\) line (the closer, the better). Both methods of preprocessing give  comparable performance.              
                }
                \label{fig:dim_pred_abay__preprocessing}
\end{figure}

When using the original preprocessing, we update the test set for dimension-wise prediction score to the same preprocessed format. We observed, however, that despite the architecture likely tuned for the original preprocessing, we ran DP-SYN using preprocessing and obtained a similar result; see Figure \ref{fig:dim_pred_abay__preprocessing}.  Dimension-wise prediction for both settings for several noise multiplier parameter and \(\eps\) is available at \url{https://github.com/DPautoGAN/DPautoGAN/tree/master/results/prediction-plots/DP-SYN/DP-SYN%20original%20vs%20our%20preprocessing}. Similar result for histograms are also observed and are available at \url{https://github.com/DPautoGAN/DPautoGAN/tree/master/results/hist_plot/DP-SYN%20original%20vs%20our%20preprocessing%20histogram}. For example, With regards to lack of diversity, we still observe that DP-SYN outputs only one class for race on all three \(\eps\) settings. DP-SYN no longer outputs one class for native-country as in the original preprocessing, yet we observed only two minority classes that are sampled, and they are largely oversampled; see Figure \ref{fig:DP-SYN-country} for histogram on native-country. The diversity scores using our preprocessing  improve on some features, but degrades more often, and overall degrades compared to the original preprocessing.

\begin{figure}[h]
        \centering
        \begin{subfigure}{0.083\textwidth}
        \includegraphics[width = \linewidth]{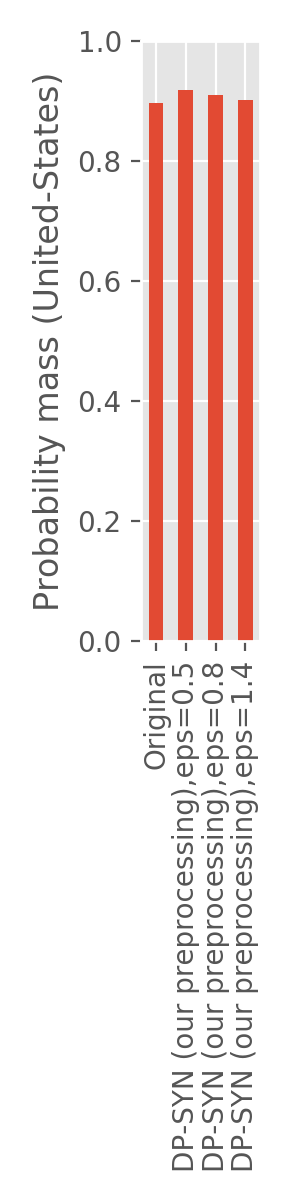}
        \caption{US}
        \end{subfigure} 
        \begin{subfigure}{0.88\textwidth}
        \includegraphics[width = \linewidth]{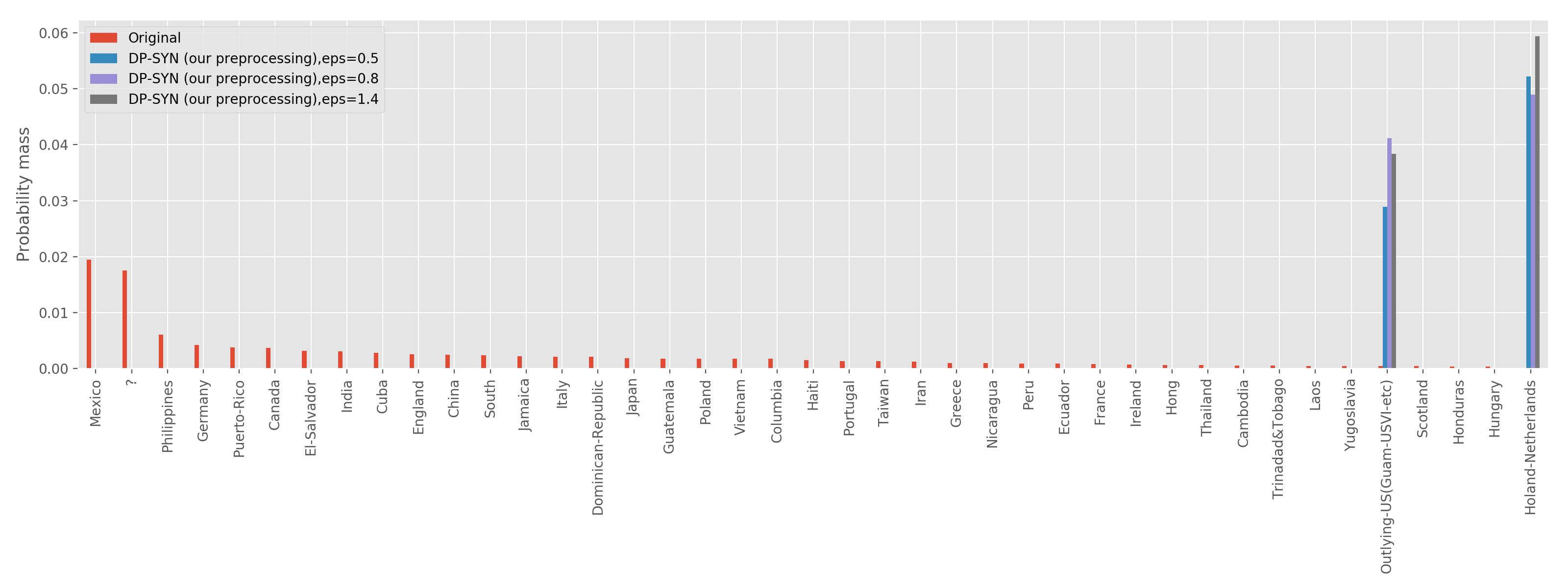}
        \caption{Minority classes}
        \end{subfigure}       
        \caption{Histogram of native-country features of the original data and synthetic data generated by DY-SYN using our preprocessing on different \(\eps\).}
\label{fig:DP-SYN-country}     
\end{figure}

In Table \ref{tab:diversity-total}, we report the diversity divergence of all eight applicable features (all nine categorical features except education due to a different preprocessing of DP-SYN) for different \(\eps\), tested on DP-SYN using our \dpgan preprocessing compared with the original. We  found that the original preprocessing of DP-SYN performs better on some features and worse on others, but better overall.   The same observation is found with \(\textrm{JSD}\) measure. Additional diversity scores can be found at \url{https://github.com/DPautoGAN/DPautoGAN/tree/master/results/diversity_divergence}. 

\begin{table*}[h]
        \caption{Diversity divergence measures \(D_{KL}^\mu\) on several features and the sum across all eight applicable  categorical features of DP-SYN \cite{abay2018privacy} on ADULT using the original preprocessing and our preprocessing of ADULT data. We report salary score x100 as scores are small on this feature. Smaller values imply more diverse synthetic data. The original preprocessing of DP-SYN performs slightly better on some features and worse or comparable on others, but better overall. The better performance between two types of preprocessing for each \(\eps\) value is highlighted in bold.}
        \label{tab:diversity-total}
        \centering
       \begin{tabular}{ccccccc}\hline
Preprocessing & \multicolumn{3}{c}{\textbf{Original}} & \multicolumn{3}{c}{\textbf{Ours} } \\
\(\eps\) \emph{values} &\emph{0.50} & \emph{0.80} & \emph{1.40} & \emph{0.50} & \emph{0.80} & \emph{1.40}\\\hline
Workclass & \textbf{0.074} & \textbf{0.074} & \textbf{0.084} & 0.103 & 0.123 & 0.099 \\ \hline
Marital-status & 0.017 & \textbf{0.011} & 0.012 & \textbf{0.016} & 0.014 & \textbf{0.010} \\ \hline
Occupation & \textbf{0.039} & 0.053 & \textbf{0.053} & 0.065 & \textbf{0.026} & 0.054 \\ \hline
Relationship & \textbf{0.029} & 0.032 & \textbf{0.036} & 0.034 & \textbf{0.017} & 0.037 \\ \hline
Race & \textbf{0.465} & \textbf{0.465} & \textbf{0.465} & 0.465 & 0.465 & 0.465 \\ \hline
Sex & \textbf{0.000} & \textbf{0.001} & \textbf{0.003} & 0.039 & 0.017 & 0.054 \\ \hline
Native-country & \textbf{0.364} & \textbf{0.364} & \textbf{0.364} & 0.436 & 0.445 & 0.452 \\ \hline
Salary (x100) & 0.027 & 0.027 & 0.027 & \textbf{0.001} & \textbf{0.001} & \textbf{0.001} \\ \hline
All & \textbf{0.99} & \textbf{1.00} & \textbf{1.02} &    1.16 & 1.11 & 1.17  \\\hline
\end{tabular}
\end{table*}

As we aim to report the optimized and most original of previous work, we choose the original preprocessing in reporting results in this work. We suspect, however, that the empirical results and conclusion would be similar even if we used ours preprocessing.
    
\paragraph{Parameter Tuning.} Original \(\eps\) and noise multiplier \(\psi\) used are \(\psi=2,4\) for \(\eps=2.4,3.2\) and \(\psi=4\) for \(\eps=1.6\) and \(\psi=4,8\) for \(\eps=1.2\). We tested on \(\psi=2,4,6,8\) for \(\eps=0.5,0.8,1.4\). \(\psi=2\) was not possible at \(\eps=0.5\) due to large \(\eps\) incurred even within the first epoch of training.  The implementation includes its own accuracy metric \cite{abay2018privacy}, which is an SVM classifier on the synthetic data. The model is trained 10 times for each noise and \(\eps\) setting, and the noise which gives the highest average accuracy score for each \(\eps\) setting is used. We saw a pattern of accuracy score either stay the same or increase from smallest \(\psi\) then later degrades, so we extended the range of \(\psi\) (up to 8 as mentioned) until we were certain that we have found a peak of accuracy score. We note that dimension-wise prediction performs similarly for \(\psi=2,4,6\) and slightly better at these \(\psi\) than the higher \(\psi=8\). Final \(\psi\) used for \(\eps=0.5,0.8,1.4\) is \(\psi=4\) for all three settings (and \(6,4,4\) if using our preprocessing). The results across all noise multiplier values can be found at \url{https://github.com/DPautoGAN/DPautoGAN/tree/master/results/prediction-plots/DP-SYN}. 

DP-SYN first partitions the data into groups based on number of unique labels, which is 2 in ``salary'' label of the ADULT data. DP-EM then chooses the number of clusters in a mixture of Gaussian in each group by Calinski-Harabasz criterion. The range of numbers of clusters tested (which is also from the original implementation) is \(K=1,2,\ldots,7\).

\paragraph{Privacy Accounting.} We keep the original privacy accounting, which is to split \(\eps,\delta\) into halves, each for autoencoder and DP-EM, in the original implementation of DP-SYN. We allow higher \(\eps\) which is computed by using standard composition instead of RDP composition on two phases of our \dpgan  training. We also note that DP-SYN treats the label column as public, which is used in partitioning the original data into groups based on the label, whereas DP-auto-GAN, DP-WGAN, and DP-VAE treat all features as private.  

The original implementation of DP-SYN was not able to finish a single epoch even for a  large noise multiplier \(\psi=16,32\) to achieve \(\eps=0.5\) by a standard composition. We found that this is due to a loose analysis of RDP in the original implementation. We increase the moment order of 32 in the original implementation to 96 to obtain a tighter DP analysis, which allows the \(\eps=0.5\) (standard composition) results reported in this work.

\subsection{Additional Details on Diversity Divergences}

Tables \ref{tab:diversity-full-JSD} and \ref{tab:diversity-full} show the divergence scores \(D_{KL}\) of all eight applicable features (all nine categorical features except ``education'' due to a difference in preprocessing; see Appendix \ref{app.adulttrain}). The smaller the divergence, the closer that the distribution of synthetic data resemble diversity of the original data. Both measures give similar results. Most features except race and native country have small divergence scores, which is consistent with the histogram plots that these two features have a strong majority and that generated data can underreresented the minority.  At small \(\eps\), DP-SYN performs well on several features, but seems to overfit and does not improve its diversity as \(\eps\) is larger. For larger \(\eps\), DP-VAE and \dpgan achieve better diversity. Overall, \dpgan achieves the best diversity on three \(\eps\) settings.  

Divergence scores across algorithms, \(\eps\) settings, and  \(\mu\) values are also available at \url{https://github.com/DPautoGAN/DPautoGAN/tree/master/results/diversity_divergence}.

\begin{table*}[h]
        \caption{Diversity divergence measure \(\textrm{JSD}\) on each of all eight applicable  features of ADULT data and the sum of divergences over them. Smaller values imply more diverse synthetic data. For each row (feature), the smallest value for each setting of \(\eps\) is highlighted in bold. Many values on the salary feature are in the order of \(10^{-5}\) and \(10^{-4}\), and  we report the values times 100 in order to distinguish better algorithms.}
        \label{tab:diversity-full-JSD}
        \centering
        \begin{tabular}{ccccccccccccc}\hline
 & \multicolumn{3}{c}{\textbf{DP-auto-GAN}} & \multicolumn{3}{c}{\textbf{DP-WGAN} } & \multicolumn{3}{c}{\textbf{DP-VAE}  } & \multicolumn{3}{c}{\textbf{DP-SYN} } \\
\(\eps\) \emph{values} & \emph{0.36} & \emph{0.51} & \emph{1.01}  & \emph{0.36} & \emph{0.51} & \emph{1.01} & \emph{0.36} & \emph{0.51} & \emph{1.01} & \emph{0.50} & \emph{0.80} & \emph{1.40}\\\hline
Workclass & .086 & \textbf{.040} & .032 & .271 & .119 & .090 & .085 & .067 & \textbf{.028} & \textbf{.040} & .043 & .048 \\ \hline
Marital\ & .025 & .043 & \textbf{.014} & .119 & .624 & .136 & .139 & .043 & .021 & \textbf{.017} & \textbf{.013} & .017 \\ \hline
Occupation & .107 & .064 & .089 & .100 & .522 & .228 & \textbf{.066} & \textbf{.048} & \textbf{.024} & .081 & .099 & .098 \\ \hline
Relationship & .042 & .035 & .011 & .190 & .614 & .187 & \textbf{.019} & \textbf{.011} & \textbf{.006} & .024 & .025 & .028 \\ \hline
Race & \textbf{.021} & \textbf{.014} & .016 & .081 & .053 & .040 & .095 & .031 & \textbf{.011} & .053 & .053 & .053 \\ \hline
Sex & .010 & .008 & .001 & .014 & .066 & .006 & .024 & .001 & \textbf{.000} & \textbf{.000} & \textbf{.000} & .001 \\ \hline
Nat.-country & .037 & \textbf{.028} & .032 & .516 & .036 & \textbf{.021} & .361 & .236 & .134 & \textbf{.037} & .037 & .037 \\ \hline
Salary (x 100) & .539 & \textbf{.002} & \textbf{.003} & \textbf{.000} & 37.4 & 2.537 & .829 & .003 & .196 & .007 & .007 & .007 \\ \hline
Total & {0.33} & \textbf{0.23} & \textbf{0.19} & 1.29 & 2.41 & 0.73 & 0.80 & 0.44 & 0.23 & \textbf{0.25} & 0.27 & 0.28 \\\hline
\end{tabular}  
\end{table*}

\begin{table*}[h]
        \caption{Same setting as Table \ref{tab:diversity-full-JSD} but using diversity divergence measure \(D_{KL}^\mu\) with \(\mu=e^{-\frac{1}{1-p_1}}\) where \(p_1\) is the maximum probability across all categories of that feature in the original data}
        \label{tab:diversity-full}
        \centering
        \begin{tabular}{ccccccccccccc}\hline
 & \multicolumn{3}{c}{\textbf{DP-auto-GAN}} & \multicolumn{3}{c}{\textbf{DP-WGAN} } & \multicolumn{3}{c}{\textbf{DP-VAE}  } & \multicolumn{3}{c}{\textbf{DP-SYN} } \\
\(\eps\) \emph{values} & \emph{0.36} & \emph{0.51} & \emph{1.01}  & \emph{0.36} & \emph{0.51} & \emph{1.01} & \emph{0.36} & \emph{0.51} & \emph{1.01} & \emph{0.50} & \emph{0.80} & \emph{1.40}\\\hline
Workclass & .155 & \textbf{.054} & .055 & .046 & .208 & .177 & .196 & .124 & \textbf{.039} & \textbf{.074} & .074 & .084 \\ \hline
Marital & .019 & .053 & \textbf{.005} & .166 & .164 & .290 & .207 & .044 & .017 & \textbf{.017} & \textbf{.011} & .012 \\ \hline
Occupation & .059 & .031 & .064 & .065 & .408 & .210 & .044 & \textbf{.022} & \textbf{.010} & \textbf{.039} & .053 & .053 \\ \hline
Relationship & .036 & .027 & \textbf{.006} & .375 & .079 & .321 & .034 & \textbf{.012} & .009 & \textbf{.029} & .032 & .036 \\ \hline
Race & \textbf{.125} & \textbf{.064} & .090 & .262 & .465 & .277 & .315 & .102 & \textbf{.038} & .465 & .465 & .465 \\ \hline
Sex & .037 & .028 & .003 & .051 & .255 & .020 & .085 & .003 & \textbf{.000} & \textbf{.000} & \textbf{.001} & .003 \\ \hline
Nat.-country & \textbf{.361} & \textbf{.227} & .304 & .298 & .351 & \textbf{.142} & .606 & .865 & .463 & .364 & .364 & .364 \\ \hline
Salary (x100) & 2.005 & \textbf{.007} & \textbf{.013} & \textbf{.002} & 246. & 9.421 & 3.081 & .013 & .736 & .027 & .027 & .027 \\ \hline
Total & \textbf{0.81} & \textbf{0.48} & \textbf{0.53} & 5.26 & 6.39 & 1.53 & 2.52 & 1.17 & 0.58 & 0.99 & 1.00 & 1.02 \\\hline
\end{tabular}  
\end{table*}

\end{document}